\newtheorem{theorem}{Theorem}
\newtheorem{remark}{Remark}
\newtheorem{lemma}{Lemma}
\newtheorem{prop}{Proposition}
\newtheorem{assumption}{Assumption}
\newcommand{\gr}{\nabla}
\newcommand{\grw}{\nabla_{\w}}
\newcommand{\grpsi}{\nabla_{\bpsi}}
\newcommand{\grPsi}{\nabla_{\Psi}}
\newcommand{\sgr}{\tilde{\nabla}}
\newcommand{\sgrw}{\sgr_{\w}} 
\newcommand{\sgrd}{\sgr_{\delta}}
\newcommand{\sgrL}{\sgr_{\Lambda}}
\newcommand{\sgrpsi}{\sgr_{\psi}}
\newcommand{\sgrbpsi}{\sgr_{\bpsi}}
\newcommand{\w}{\bm{w}}
\newcommand{\bpsi}{\bm{\psi}}
\newcommand{\E}{\mathbb{E}}
\newcommand{\wbar}{\overline{\w}}
\newcommand{\sigmaw}{\sigma^2_{\w}}
\newcommand{\sigmapsi}{\sigma^2_{\psi}}
\def\BState{\State\hskip-\ALG@thistlm}
\newcommand{\algmargin}{\the\ALG@thistlm}
\newlength{\whilewidth}
\algnewcommand{\parState}[1]{\State%
\parbox[t]{\dimexpr\linewidth-\algmargin}{\strut #1\strut}}
\def\Let@{\def\\{\notag\math@cr}}
\DeclarePairedDelimiter\floor{\lfloor}{\rfloor}
\definecolor{darkred}{RGB}{150,0,0}
\definecolor{darkgreen}{RGB}{0,150,0}
\definecolor{darkblue}{RGB}{0,0,150}
\renewcommand*{\backref}[1]{}
\renewcommand*{\backrefalt}[4]{%
    \ifcase #1 (Not cited.)%
    \or        (Cited on page~#2.)%
    \else      (Cited on pages~#2.)%
    \fi}
\newcommand\NoDo{\renewcommand\algorithmicdo{}}
\title{ \textbf{Robust Federated Learning: \\ The Case of Affine Distribution Shifts}}
\date{}
\renewcommand*{\@fnsymbol}[1]{\ensuremath{\ifcase#1\or  * \or 1 \or 2 \or 3 \or 4 \else\@ctrerr\fi}}
\newcommand*\samethanks[1][\value{footnote}]{\footnotemark[#1]}
\author{
Amirhossein Reisizadeh\thanks{Equal contribution}  \thanks{Department of Electrical and Computer Engineering, UC Santa Barbara, Santa Barbara, CA, USA. \{reisizadeh@ucsb.edu, ramtin@ece.ucsb.edu\}.}
\and 
Farzan Farnia\samethanks[1]  \thanks{Laboratory for Information $\&$ Decision Systems, Massachusetts Institute of Technology, Cambridge,
MA, USA. \{farnia@mit.edu, jadbabai@mit.edu\}.}
\and
Ramtin Pedarsani\samethanks[2]
\and
Ali Jadbabaie\samethanks[3]
}
\begin{document}

\maketitle



\begin{abstract}
Federated learning is a distributed  paradigm for training  models using samples distributed across multiple users in a network, while keeping the samples on users’ devices with the aim of efficiency and  protecting users privacy. In such settings, the training data is often statistically heterogeneous and manifests various distribution shifts across users, which degrades the performance of the learnt model. The primary goal of this paper is to develop a robust federated learning algorithm that achieves satisfactory performance against distribution shifts in users' samples. To achieve this goal, we first consider a \emph{structured} affine distribution shift in users' data that captures the device-dependent data heterogeneity in federated settings. This perturbation model is applicable to various federated learning problems such as image classification where the images undergo device-dependent imperfections, e.g. different intensity, contrast, and brightness. To address affine distribution shifts across users, we propose a \textbf{F}ederated \textbf{L}earning framework \textbf{R}obust to \textbf{A}ffine distribution shifts \texttt{(FLRA)} that is robust against affine distribution shifts to the distribution of observed samples. To solve the \texttt{FLRA}'s distributed minimax optimization problem, we propose a fast and efficient optimization method and provide convergence and performance  guarantees via a gradient Descent Ascent (GDA) method. We further prove generalization error bounds for the learnt classifier to show proper generalization from empirical distribution of samples to the true underlying distribution. We perform several numerical experiments to empirically support \texttt{FLRA}. We show that an affine distribution shift indeed suffices to significantly decrease the performance of the learnt classifier in a new test user, and our proposed algorithm achieves a significant gain in comparison to standard federated learning and adversarial training methods.

\end{abstract}

\section{Introduction}

\emph{Federated learning} is a new framework for training a centralized model using data samples distributed over a network of devices, while keeping data localized. Federated learning comes with the promise of training accurate models using local data points such that the privacy of participating devices is preserved; however, it faces several challenges ranging from developing statistically and computationally efficient algorithms to guaranteeing privacy. 

A typical federated learning setting consists of a network of hundreds to millions of devices (nodes) which interact with each other through a central node (a parameter server). Communicating messages over such a large-scale network can lead to major slow-downs due to communication bandwidth bottlenecks \citep{li2019federated,kairouz2019advances}. In fact, the communication bottleneck is one of the main grounds that distinguishes federated and standard distributed learning paradigms. To reduce communication load in federated learning, one needs to depart from the classical setting of distributed learning in which updated local models are communicated to the central server \emph{at each iteration}, and communicate less frequently.

Another major challenge in federated learning is the statistical heterogeneity of training data \citep{li2019federated,kairouz2019advances}. As mentioned above, a federated setting involves many devices,  each generating or storing personal data such as images, text messages or emails. Each user's data samples can have a (slightly) different underlying distribution which is another key distinction between  federated learning and classical learning problems. Indeed, it has been shown that standard federated methods such as \texttt{FedAvg} \citep{mcmahan2016communication} which are designed for i.i.d. data significantly suffer in statistical accuracy or even diverge if deployed over non-i.i.d. samples \citep{karimireddy2019scaffold}. Device-dependency of local data along with privacy concerns in federated tasks does not allow learning the distribution of individual users and necessitates novel algorithmic approaches to learn a classifier robust to distribution shifts across users.  Specifically, statistical heterogeneity of training samples in federated learning can be problematic for generalizing to the distribution of a test node unseen in training time. We show through various numerical experiments that even a simple linear filter applied to the test samples will suffice to significantly degrade the performance of a model learned by \texttt{FedAvg} in standard image recognition tasks.

To address the aforementioned challenges, we propose a new federated learning scheme called \texttt{FLRA}, a \textbf{F}ederated \textbf{L}earning framework with \textbf{R}obustness to \textbf{A}ffine distribution shifts. FLRA has a  small communication overhead and a low computation complexity. The key insight in FLRA is  model the heterogeneity of training data in a device-dependent manner, according to which the samples stored on the $i$th device   $\bbx^i$ are shifted from a ground distribution by an affine transformation $\bbx^i \to \Lambda^i \bbx^i + \delta^i$. 
To further illustrate this point, consider a federated image classification task where each mobile device maintains a collection of images. The images taken by a camera are similarly distorted depending on the intensity, contrast, blurring, brightness and other characteristics of the camera \citep{pei2017deepxplore,hendrycks2019benchmarking}, while these features vary across cameras. In addition to camera imperfections, such unseen distributional shifts also originate from changes in the physical environment, e.g. weather conditions \cite{robey2020model}. Compared to the existing literature, our model provides more robustness compared to the well-known adversarial training models $\bbx^i \to \bbx^i + \delta^i$ with solely additive perturbations \citep{madry2017towards, goodfellow2014explaining, shafahi2018universal}, i.e. $\Lambda^i = I$ . Our perturbation model also generalizes the universal adversarial training approach  in which \emph{all} the training samples are distorted with an identical perturbation $\bbx^i \to \bbx^i + \delta$ \citep{moosavi2017universal}.

Based on the above model, \texttt{FLRA} formulates the robust learning task as a minimax robust optimization problem, which finds a \emph{global} model $\w^*$ that minimizes the total loss induced by the worst-case \emph{local} affine transformations $(\Lambda^{i*}, \delta^{i*})$. One approach to solve this minimax problem is to employ  techniques from adversarial training in which for each iteration and a given global model $\w$, each node optimizes its own local adversarial parameters $(\Lambda^{i}, \delta^{i})$ and a new model is obtained. This approach is however undesirable in federated settings  since it requires extensive computation resources at each device as they need to fully solve the adversarial optimization problem at each iteration. To tackle this challenge, one may propose to use standard distributed learning frameworks in which each node updates its local adversarial parameters and shares with the server at \emph{each iteration} of the distributed algorithm to obtain the updated global model. This is also in contrast with the availability of limited communication resources in federated settings. The key contribution of our work is to develop a novel method called \texttt{FedRobust}, which is a gradient descent ascent (GDA) algorithm to solve the minimax robust optimization problem, can be efficiently implemented in a federated setting, and comes with strong theoretical guarantees. While the \texttt{FLRA} minimax problem is in general non-convex non-concave, we show that \texttt{FedRobust} which alternates between the perturbation and parameter model variables will converge to a stationary point in the minimax objective that satisfies the Polyak-Łojasiewicz (PL) condition. Our optimization guarantees can also be extended to more general classes of non-convex non-concave distributed minimax optimization problems.

As another major contribution of the paper, we use the PAC-Bayes framework \citep{mcallester1999pac,neyshabur2017pac} to prove a generalization error bound for \texttt{FLRA}'s learnt classifier. Our generalization bound applies to multi-layer neural network classifiers and is based on the classifier's Lipschitzness and smoothness coefficients. The generalization bound together with our optimization guarantees suggest controlling the neural network classifier's complexity through Lipschitz regularization methods. Regarding \texttt{FLRA}'s robustness properties, we connect the minimax problem in \texttt{FLRA} to a distributionally robust optimization problem \citep{wiesemann2014distributionally,shafieezadeh2019regularization} where we use an optimal transport cost to measure the distance between distributions. This connection reveals that the \texttt{FLRA}'s minimax objective provides a lower-bound for the objective of a distributionally robust problem. Finally, we discuss the results of several numerical experiments to empirically support the proposed robust federated learning method. Our experiments suggest a significant gain under affine distribution shifts compared to existing adversarial training algorithms. In addition, we show that the trained classifier performs robustly against standard FGSM and PGD adversarial attacks, and outperforms \texttt{FedAvg}. A summary of the key contributions of our work is as follows:
\begin{itemize}
    \item We develop an efficient  federated learning framework that is robust  against affine distribution shifts  using a minimax optimization  approach.
    \item We propose an optimization method to solve the minimax problem  and provide guarantees on the convergence of the iterates in the proposed method to a stationary point.
    \item We Characterize the generalization and robustness properties of our framework.
    \item We Demonstrate the efficiency and advantages of this method compared to the existing    standard approaches via several numerical results.
\end{itemize}


\subsection{Related work}
We divide the literature review to two main lines of work: (i) federated learning and (ii) nonconvex minimix problems and discuss works that are most related to this paper.

As a practical on-device learning paradigm, federated learning has recently gained significant attention in machine learning and optimization communities.  Since the introduction of \texttt{FedAvg} \citep{mcmahan2016communication} as a communication-efficient federated learning method, many works have developed federated methods under different settings with optimization guarantees for a variety of loss functions \citep{haddadpour2019convergence,khaled2020tighter}. Moreover, another line of work has tackled the communication bottleneck in federated learning via compression and sparsification methods \citep{konevcny2016federated,caldas2018expanding,reisizadeh2019fedpaq}. \citep{bhowmick2018protection,geyer2017differentially,li2019differentially,thakkar2019differentially} have focused on designing privacy-preserving federated learning schemes. There have also been several recent works the study local-SGD methods as a subroutine of federated algorithms and provide various convergence results depending on the loss function class \citep{stich2018local,koloskova2019decentralized,wang2018adaptive}. Making federated learning methods robust to non-i.i.d. data has also been the focus of several works \citep{mohri2019agnostic, karimireddy2019scaffold,li2019convergence}. 

Adversarially robust learning paradigms usually involve solving a minimax problem of the form $\min_{\w} \max_{\bpsi} \allowbreak f(\w, \bpsi)$. As the theory of adversarially robust learning surges, there has been thriving recent interests in solving the minimax problem for nonconvex cases. Most recently, \cite{lin2019gradient} provides nonasymptotic analysis for nonconvex-concave settings and shows that the iterates of a simple Gradient Descent Ascent (GDA) efficiently find the stationary points   of the function $\Phi (\w) \coloneqq \max_{\bpsi} f(\w, \bpsi)$. \cite{yang2020global} establishes convergence results for the nonconvex-nonconcave setting and under PL condition. This problem has been studied in the context of game theory as well \citep{nouiehed2019solving}.

\section{Federated Learning Scenario}

Consider a federated learning setting with a network of $n$ nodes (devices) connected to a server node. We assume that for every $1\le i\le n$ the $i$th node has access to $m$ training samples in  $S^i=\{(\mathbf{x}^i_j,y^i_j) \in {\mathbb{R}}^{d} \times \mathbb{R}: \, 1\le j\le m\}$. For a given loss function $\ell$ and function class $\ccalF = \{f_{\w} : \w \in \ccalW \}$, the classical federated learning problem is to fit the best model $\w$ to the $nm$ samples via solving the following empirical risk minimization (ERM) problem:
\begin{align}
    \min_{\w \in \ccalW}\;\: \frac{1}{nm} \sum_{i=1}^n \sum_{j=1}^m \: \ell \left( f_{\w}(\mathbf{x}^i_j),y^i_j \right). \nonumber
\end{align} 
As we discussed previously, the training data is statistically heterogeneous across the devices. To capture the non-identically-distributed nature of data in federated learning, we assume that the data points of each node have a local distribution shift from a common distribution. To be more precise, we assume that each sample stored in node $i$ in $S^i$ is distributed according to an affine transformation $h^i$ of a universal underlying distribution $P_{\mathbf{X},Y}$, i.e., transforming the features of a sample $(\mathbf{x},y)\sim P_{\mathbf{X},Y}$ according to the following affine function 
\begin{align}
    h^i(\bbx)\, :=\, \Lambda^i \bbx + \delta^i. \nonumber
\end{align} 
Here $\Lambda^i\in{\mathbb{R}}^{d\times d}$ and  $\delta^i\in {\mathbb{R}}^{d}$, with $d$ being the dimension of input variable $\bbx$, characterize the affine transformation $h^i$ at node $i$. According to this model, all samples stored at node $i$ are affected with the same affine transformation while other nodes $j \neq i$ may experience different transformations.

This \emph{structured} model particularly supports the data heterogeneity in federated settings. That is, the data generated and stored in each federated device is exposed to identical yet device-dependent distortions while different devices undergo different distortions. As an applicable example that manifests the proposed perturbation model, consider a federated image classification task over the images taken and maintained by mobile phone devices. Depending on the environment's physical conditions and the camera's imperfections, the pictures taken by a particular camera undergo device-dependent perturbations. According to the proposed model, such distribution shift is captured as an affine transformation $h^i(\bbx) \! = \! \Lambda^i \bbx + \delta^i$ on the samples maintained by node $i$. To control the perturbation power, we consider bounded Frobenius and Euclidean  norms $\Vert\Lambda - I_{d}\Vert_F \le \epsilon_1$ and $\Vert\delta\Vert_2\le \epsilon_2$  enforcing the affine transformation to have a bounded distance from the identity transformation.




Based on the model described above, our goal is to solve the following distributionally robust federated learning problem:
\begin{align} \label{eq:ERM}
    \min_{\w \in \ccalW}\;\: 
    \frac{1}{n}\sum_{i=1}^n\: \,
    \max_{\scriptstyle \Vert \Lambda^i - I \Vert_F \leq \epsilon_1 \atop \scriptstyle \Vert \delta^i \Vert \leq \epsilon_2}\;\: \frac{1}{m}\sum_{j=1}^m\: \ell \left( f_{\w}(\Lambda^i \bbx^i_j +\delta^i), y^i_j \right). 
\end{align}
The minimax problem \eqref{eq:ERM} can be interpreted as $n+1$ coupled optimization problems. First, in $n$ inner local maximization problems and for a given global model $\w$, each node $1 \leq i \leq n$ seeks a (feasible) affine transformation $(\Lambda^i, \delta^i)$ which results in high losses via solving
\begin{align}
    \max_{\scriptstyle \Vert \Lambda^i - I \Vert_F \leq \epsilon_1 \atop \scriptstyle \Vert \delta^i \Vert \leq \epsilon_2}\;\: \frac{1}{m}\sum_{j=1}^m\: 
    \ell \left( f_{\w}(\Lambda^i \bbx^i_j +\delta^i), y^i_j \right) \nonumber
\end{align}
over its $m$ training samples in $S^i$. Then, the outer minimization problem finds a global model yielding the smallest value of cumulative losses over the $n$ nodes.

Solving the above minimax problem requires collaboration of distributed nodes via the central server. In federated learning paradigms however, such nodes are entitled to limited computation and communication resources. Such challenges particularly prevent us from employing the standard techniques in adversarial training and distributed ERM. More precisely, each iteration of adversarial training  requires solving a maximization problem at each local node which incurs extensive computational cost. On the other hand, tackling the minimax problem \eqref{eq:ERM} via iterations of standard distributed learning demands frequent message-passing between the nodes and central server \emph{at each iteration}, hence yielding massive communication load on the network. To account for such system challenges, we constitute our goal to solve the robust minimax problem in \eqref{eq:ERM} with small computation and communication cost so that it can be feasibly and efficiently implemented in a federated setting.



\section{The Proposed \texttt{FedRobust} Algorithm}

To guard against affine distribution shifts, we propose to change the original constrained maximization problem to the following worst-case loss at each node $i$, given a Lagrange multiplier  $\lambda \! > \! 0$: 
\begin{equation} \label{Eq: AFL def}
   \max_{\Lambda^i,\delta^i} \: f^i(\w,\Lambda^i,\delta^i) \;
    \coloneqq \; \max_{\Lambda^i,\delta^i} \: \frac{1}{m} \sum_{j=1}^m \: \ell \left( f_{\w}(\Lambda^i \bbx^i_j +\delta^i), y^i_j \right)
    -
    \lambda \Vert \Lambda^i - I \Vert^2_F -\lambda \Vert \delta^i \Vert^2_2.  
\end{equation}
Here we use a norm-squared penalty requiring a bounded distance between the feasible affine transformations and the identity mapping, and find the worst-case affine transformation that results in the maximum loss for the samples of node $i$. By averaging such worst-case local losses over all the $n$ nodes and minimizing w.r.t. model $\w$, we reach the following minimax optimization problem:
\begin{equation} \label{eq: FLRA}
    \min_{\w \in \ccalW}\;\:
    \max_{(\Lambda^i,\delta^i)_{i=1}^n}\;\:
    \frac{1}{nm}\sum_{i=1}^n \sum_{j=1}^m\; \ell \left( f_{\w}(\Lambda^i \bbx^i_j +\delta^i), y^i_j \right) 
    -
    \lambda\Vert \Lambda^i - I \Vert^2_F -\lambda \Vert \delta^i \Vert^2_2.  
\end{equation}
This formalizes our approach to tackling the robust federated learning problem, which we call ``\textbf{F}ederated \textbf{L}earning framework \textbf{R}obust to \textbf{A}ffine distribution shift'' or \texttt{FLRA} in short. 
\begin{algorithm}[b!] 
\caption{\texttt{FedRobust}}\label{Alg: FedRobust}
\hspace*{\algorithmicindent} \textbf{Input:} Initialization $\{\w^i_0=\w_0, \Lambda^i_0, \delta^i_0\}_{i=1}^n$,  step-sizes $\eta_1, \eta_2$, number of local updates $\tau$, total number of iterations $T$
\begin{algorithmic}[1]
\NoDo
\For{each iteration $t = 0, \cdots, T-1$ and each node $i \in [n] = \{1, \cdots, n\}$}
\State node $i$ computes stochastic gradients $\sgrL f^i$ and $\sgrd f^i$ and updates
    \begin{align} \label{eq: ascent update rule}
        \Lambda^i_{t+1}
        &=
        \Lambda^i_t + \eta_2 \sgrL f^i(\w^i_t, \Lambda^i_t, \delta^i_t) 
        \\
        \delta^i_{t+1} 
        &=
        \delta^i_t + \eta_2 \sgrd f^i(\w^i_t, \Lambda^i_t, \delta^i_t) \nonumber
    \end{align}
    \If{$t$ does not divide $\tau$}
    \State node $i$ computes $\sgrw f^i$ and updates
    \begin{align}
        \w^i_{t+1} &= \w^i_t - \eta_1 \sgrw f^i(\w^i_t, \Lambda^i_t, \delta^i_t) \nonumber
    \end{align} 
    \Else
    \State node $i$ computes $\w^i_t  -  \eta_1 \sgrw f^i ( \w^i_t, \Lambda^i_t, \delta^i_t )$ and uploads to server
    \State server aggregates, takes the average and sends to all nodes $i$:
    \begin{equation}
        \w^i_{t+1} = \frac{1}{n} \sum_{j=1}^{n} \left[ \w^j_t - \eta_1 \sgrw f^j(\w^j_t, \Lambda^j_t, \delta^j_t) \right] \nonumber
    \end{equation}
    \EndIf
\EndFor
\end{algorithmic}
\hspace*{\algorithmicindent} \textbf{Output:} $\wbar_T = \frac{1}{n} \sum_{i=1}^{n} \w^i_T$
\end{algorithm}

In order to solve \texttt{FLRA} in \eqref{eq: FLRA}, we propose a gradient optimization method that is  computationally  and communication-wise efficient, called \texttt{FedRobust}. The proposed \texttt{FedRobust} algorithm is an iterative scheme that applies stochastic gradient descent ascent (SGDA) updates for solving the minimax problem \eqref{eq: FLRA}. As summarized in Algorithm \ref{Alg: FedRobust}, in each iteration $t$ of local updates, each node $i$ takes a (stochastic) gradient ascent step and updates its affine transformation parameters $(\Lambda^i_t, \delta^i_t)$. It also updates the local classifier's parameters $\w^i_t$ via a gradient descent step. After $\tau$ local iterations, local models $\w^i_t$ are uploaded to the server node where the global model is obtained by averaging the local ones. The averaged model is then sent back to the nodes to begin the next round of local iterations with this fresh initialization. Note that each node updates its perturbation parameters only once in each iteration which yields light computation cost as opposed to standard adversarial training methods. Moreover, periodic communication at every $\tau$ iterations, reduces the communication load compared to standard distributed optimization methods by a factor $\tau$.


It is worth noting that the local affine transformation variables $\Lambda^i, \delta^i$ are coupled even though they remain on their corresponding nodes and are not exchanged with the server. This is due to the fact that the fresh model $\w$ is the average of the updated models from \emph{all} the nodes; hence, updating $\Lambda^i, \delta^i$ for node $i$ will affect $\Lambda^j, \delta^j$ for other nodes $j \neq i$ in the following iterations. This is indeed a technical challenge that arises in proving the optimization guarantees of \texttt{FedRobust} in Section \ref{subsec: optimization}.

\section{Theoretical Guarantees: Optimization, Generalization and \\ Robustness }
In this section, we establish the main guarantees of the proposed \texttt{FLRA} formulation and the optimization algorithm \texttt{FedRobust}. First, we characterize the convergence of \texttt{FedRobust} in Algorithm \ref{Alg: FedRobust} to solve the minimax problem \eqref{eq: FLRA}. Next, we prove that the learned hypothesis will properly generalize from training data to unseen test samples. Lastly, we demonstrate that solving the \texttt{FLRA}'s minimax problem \eqref{eq: FLRA} results in a robust classifier to Wasserstein shifts structured across the nodes.

\subsection{Optimization guarantees} \label{subsec: optimization}

In this section, we establish our main convergence results and show that \texttt{FedRobust} finds saddle points of the minimax problem in \eqref{Eq: AFL def} for two classes of loss functions. We first set a few notations as follows. We let matrix $\bpsi^i = (\Lambda^i, \delta^i) \in \reals^{d \times (d+1)}$ denote the joint transformation variables corresponding to node $i$. The collection of $n$ such variables corresponding to the $n$ nodes is denoted by the matrix $\Psi = (\bpsi^1;\cdots;\bpsi^n)$. We can now rewrite the minimax problem \eqref{eq: FLRA} as follows:
\begin{gather} \label{eq: min max f}
    \min_{\w} \max_{\Psi} f(\w, \Psi) 
    \coloneqq
    \min_{\w} \max_{\bpsi^1, \cdots, \bpsi^n} \frac{1}{n} \sum_{i=1}^{n} f^i(\w,\bpsi^i),
\end{gather}
where $f$ and $f^i$s denote the penalized global and local losses, respectively; that is, for each node $i$ 
\begin{align} \label{eq: local fi}
    f^i(\w,\bpsi^i)
    \coloneqq
    \frac{1}{m} \sum_{j=1}^{m} \ell \left( f_{\w} (\Lambda^i\bbx^i_j + \delta^i),y^i_j \right) - \lambda \Vert \Lambda^i - I \Vert_F^2 - \lambda \Vert \delta^i \Vert^2.
\end{align}
We also define $\Phi(\w) \coloneqq \max_{\Psi} f(\w, \Psi)$ and $\Phi^* \coloneqq \min_{\w} \Phi(\w)$. Next, we state a few customary assumptions on the data and loss functions. As we mentioned before, we assume that data is heterogeneous (non-iid). There are several notions to quantify the degree of heterogeneity in the data. In this work we use a notion called \emph{non-iid degree} which is defined as the variance of the local gradients with respect to a global gradient \citep{yu2019linear}. 

The next two assumptions impose customary conditions on the gradients of local functions.

\begin{assumption}[Bounded non-iid degree]\label{assumption:bounded-degree}
We assume that when there are no perturbations, the variance of the local gradients with respect to the global gradient is bounded. That is, there exists $\rho_f^2$ such that
\begin{gather} 
    \frac{1}{n} \sum_{i=1}^{n} \norm{\grw f^i(\w, \bpsi^i) - \grw f(\w, \Psi)}^2 
    \leq
    \rho_f^2,
    \quad
    \text{ for } \bpsi^i=(I,0), \Psi = (\bpsi^1;\cdots;\bpsi^n), \text{ and } \forall \w. \nonumber
\end{gather}
\end{assumption}
\begin{assumption}[Stochastic gradients]\label{assumption:stoch-gradients}
For each node $i$, the stochastic gradients $\sgrw f^i$ and $\sgrbpsi f^i$ are unbiased and have variances bounded by $\sigma^2_{\w}$ and $\sigma^2_{\bpsi}$, respectively. That is,
\begin{align} 
    \E \norm{\sgrw f^i(\w, \bpsi) - \grw f^i(\w, \bpsi)}^2 
    \leq
    \sigma^2_{\w}, \quad
    \E \norm{\sgrbpsi f^i(\w, \bpsi) - \grpsi f^i(\w, \bpsi)}^2 
    \leq
    \sigma^2_{\bpsi},
    \quad
    \forall \w, \bpsi. \nonumber
\end{align}
\end{assumption}

\begin{assumption}[Lipschitz gradients] \label{assumption:smooth}
All local loss functions have Lipschitz gradients. That is, for any node $i$, there exist constants $L_1, L_2, L_{12}$, and $L_{21}$ such that for any $\w, \w', \bpsi, \bpsi'$ we have
\begin{gather}
    \norm{\grw f^i(\w, \bpsi) - \grw f^i(\w', \bpsi)} 
    \leq
    L_1 \norm{\w - \w'}, \quad
    \norm{\grw f^i(\w, \bpsi) - \grw f^i(\w, \bpsi')} 
    \leq
    L_{12} \norm{\bpsi - \bpsi'}_F, \\
    \norm{\grpsi f^i(\w, \bpsi) - \grpsi f^i(\w', \bpsi)}_F 
    \leq 
    L_{21} \norm{\w - \w'}, \quad
    \norm{\grpsi f^i(\w, \bpsi) - \grpsi f^i(\w, \bpsi')}_F 
    \leq
    L_2 \norm{\bpsi - \bpsi'}_F . \nonumber
\end{gather}
\end{assumption}
We show the convergence of \texttt{FedRobust} for two classes of loss functions: PL-PL and nonconvex-PL. Next, we briefly describe these classes and state the main results. The celebrated work of Polyak \citep{polyak1963gradient} introduces a sufficient condition for an unconstrained minimization problem $\min_x g(x)$ under which linear convergence rates can be established using gradient methods. A function $g(x)$ satisfies the Polyak-Łojasiewicz (PL) condition if $g^* = \min_x g(x)$ exits and is bounded, and there exists a constant $\mu > 0$ such that $\Vert \gr g(x) \Vert^2 \geq 2 \mu (g(x) - g^*), \, {\forall} x$. Similarly, we can define two-sided PL condition for our minimax objective function in \eqref{eq: min max f} \citep{yang2020global}. 
\begin{assumption}[PL condition]\label{assumption:PL}
The global function $f$ satisfies the two-sided PL condition, that is, there exist positive constants $\mu_1$ and $\mu_2$ such that
\begin{gather}
    {\normalfont{\text{(i)}}} \quad \frac{1}{2 \mu_1} \norm{\grw f(\w, \Psi)}^2
    \geq
    f(\w, \Psi) - \min_{\w} f(\w, \Psi), \\
    {\normalfont{\text{(ii)}}} \quad \frac{1}{2 \mu_2} \norm{\grPsi f(\w, \Psi)}^2_F
    \geq
    \max_{\Psi} f(\w, \Psi) - f(\w, \Psi). \nonumber
\end{gather}
\end{assumption}
In other words, Assumptions \ref{assumption:PL} states that the functions $f(\cdot, \Psi)$ and $-f(\w, \cdot)$ satisfy the PL condition with constants, $\mu_1$ and $\mu_2$, respectively. To measure the optimality gap at iteration $t$, we define the potential function $P_t \coloneqq a_t + \beta b_t$, where
\begin{align}
    a_t \coloneqq \E [ \Phi(\wbar_t) ] - \Phi^* 
    \quad \text{ and } \quad 
    b_t \coloneqq \E [\Phi(\wbar_t) - f(\wbar_t, \Psi_t)], \nonumber
\end{align}
and $\beta$ is an arbitrary and positive constant. Note that both $a_t$ and $b_t$ are non-negative and if $P_t$ approaches zero, it implies that $(\wbar_t, \Psi_t)$ is approaching a minimax point.
\begin{theorem}[PL-PL loss] \label{Thm: PL-PL convergence}
Consider the iterates of {\normalfont \texttt{FedRobust}} in Algorithm \ref{Alg: FedRobust} 
and let Assumptions  \ref{assumption:bounded-degree}, \ref{assumption:smooth}, and \ref{assumption:PL} hold.
Then for  any iteration $t \geq 0$, the optimality gap $P_t \coloneqq a_t + \frac{1}{2} b_t$ satisfies the following:
\begin{align} 
    P_{t} 
    &\leq
    \left( 1 - \frac{1}{2} \mu_1 \eta_1 \right)^t P_0
    +
    32 \eta_1 \frac{\tilde{L}}{\mu_1} (\tau - 1)^2 \rho^2
    +
    8 \eta_1 \frac{\tilde{L}}{\mu_1}  (\tau - 1) (n + 1) \frac{\sigmaw}{n}
    +
    \eta_1 \frac{\hat{L}}{\mu_1} \frac{\sigmaw}{n}
    +
    \frac{\eta_2^2}{\eta_1} \frac{L_2}{2 \mu_1} \sigmapsi, \nonumber
\end{align}
for maximization step-size $\eta_2$ and minimization step-size $\eta_1$ that satisfy the following conditions:
\begin{gather} \label{eq: Thm PL-PL conditions}
    \eta_2 
    \leq
    \frac{1}{L_2}, \quad
    32 \eta_1^2 (\tau - 1)^2 L_1^2 
    \leq
    1, \quad
    \frac{\mu^2_2 \eta_2 n}{\eta_1 L_1 L_2}
    \geq
    1 + 8 \frac{L_{12}^2}{L_1 L_2},\quad
    \eta_1 \left( \hat{L}  + \frac{80 \tilde{L} (\tau - 1) }{\mu_1 \eta_1 ( 1 - \frac{1}{2} \mu_1 \eta_1 )^{\tau-1}} \right) 
    \leq
    1. \nonumber 
\end{gather}
Here, we denote $\rho^2 \coloneqq 3\rho_f^2 + 6 L_{12}^2 (\epsilon_1^2 + \epsilon_2^2)$ where $\epsilon_1$ and $\epsilon_2$ specify the bounds on the affine transformations $h^i(\bbx) = \Lambda^i \bbx + \delta^i$. We also use the following notations:
\begin{gather} \label{eq: tildeL hatL}
    L_{\Phi} = L_1 + \frac{L_{12} L_{21}}{2n\mu_2}, \quad
    \tilde{L} = \frac{3}{2} \eta_1 L_1^2 + \frac{1}{2}  \eta_2 L_{21}^2, \quad
    \hat{L} =  \frac{3}{2} L_{\Phi} + \frac{1}{2} L_1 + \frac{L_{21}^2}{L_2}.  \nonumber
\end{gather} 
\end{theorem}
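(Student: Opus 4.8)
The plan is to run a Lyapunov argument on the potential $P_t = a_t + \tfrac12 b_t$, establishing a one-step recursion of the form $P_{t+1} \le (1 - \tfrac12 \mu_1 \eta_1) P_t + E$ for a fixed additive error $E$, and then unrolling this geometric recursion. The two-sided PL condition (Assumption \ref{assumption:PL}) is what lets us avoid convexity: part (ii) guarantees that the inner maximization gap $\Phi(\w) - f(\w,\Psi)$ is controlled by $\|\grPsi f\|_F^2$, and a standard consequence (via Danskin-type arguments) is that $\Phi(\w) = \max_\Psi f(\w,\Psi)$ is $L_\Phi$-smooth with $L_\Phi = L_1 + \frac{L_{12}L_{21}}{2n\mu_2}$; I would establish this smoothness lemma first, since both the descent on $a_t$ and the coupling terms rely on it.

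Next I would derive the descent inequality for $a_t$. Writing the averaged update as $\wbar_{t+1} = \wbar_t - \eta_1 \frac{1}{n}\sum_i \sgrw f^i(\w^i_t,\bpsi^i_t)$ and applying $L_\Phi$-smoothness of $\Phi$, I take conditional expectations and use unbiasedness (Assumption \ref{assumption:stoch-gradients}) to replace stochastic gradients by true ones, picking up the $\sigmaw/n$ variance term. The resulting descent direction $\frac{1}{n}\sum_i \grw f^i(\w^i_t,\bpsi^i_t)$ differs from $\gr\Phi(\wbar_t) = \grw f(\wbar_t, \Psi^*(\wbar_t))$ in two ways: the iterates $\w^i_t$ are not equal to $\wbar_t$ between communication rounds (a consensus/drift error), and $\Psi_t$ is not the inner maximizer $\Psi^*(\wbar_t)$. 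I would control the first discrepancy by the drift quantity $\frac1n\sum_i \E\|\w^i_t - \wbar_t\|^2$ together with the Lipschitz constants $L_1, L_{12}$, and the second by $b_t$ through PL part (ii). Finally, PL part (i) converts the $-\|\gr\Phi(\wbar_t)\|^2$ term into $-2\mu_1 a_t$, which produces the geometric factor $(1 - \tfrac12 \mu_1\eta_1)$ on $a_t$.

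The heart of the argument is controlling $b_t$ and the drift. For the inner gap I would track how $\Phi(\wbar_{t+1}) - f(\wbar_{t+1},\Psi_{t+1})$ evolves: the ascent step on each $\bpsi^i_t$ with step-size $\eta_2 \le 1/L_2$ contracts the gap via PL part (ii), while the motion of $\wbar_t$ increases it through the smoothness of $\Phi$ and $f$, which is exactly where $a_t$ couples back into $b_t$; the stochastic ascent noise contributes the $\frac{\eta_2^2}{\eta_1}\frac{L_2}{2\mu_1}\sigmapsi$ term. Simultaneously I must prove the local-drift lemma: over the $\tau-1$ local steps since the last averaging, $\frac1n\sum_i\E\|\w^i_t - \wbar_t\|^2 = O\big(\eta_1^2(\tau-1)^2\rho^2 + \eta_1^2(\tau-1)(n{+}1)\tfrac{\sigmaw}{n}\big)$, where $\rho^2 = 3\rho_f^2 + 6L_{12}^2(\epsilon_1^2+\epsilon_2^2)$ absorbs both the raw non-iid degree (Assumption \ref{assumption:bounded-degree}) and the extra heterogeneity introduced by the perturbation variables $\bpsi^i_t$ deviating from $(I,0)$ within their feasible radii $\epsilon_1,\epsilon_2$. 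This drift bound is what ultimately yields the $(\tau-1)^2\rho^2$ and $(\tau-1)(n{+}1)\sigmaw/n$ error terms in the theorem.

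The main obstacle, as the authors flag, is that the perturbation variables $\bpsi^i = (\Lambda^i,\delta^i)$ are coupled across nodes through the averaging of $\w$ even though they are never communicated: a local ascent step at node $i$ alters $\wbar$ at the next sync, which feeds back into every node's subsequent ascent, so the standard single-agent GDA tracking of the inner gap does not decouple. Handling this requires bounding the heterogeneity of the local gradients $\grw f^i(\w,\bpsi^i)$ when the $\bpsi^i$ are off the identity --- hence the inflation of $\rho_f^2$ to $\rho^2$ via $L_{12}$ and the feasibility radii --- and carefully choosing $\beta = \tfrac12$ together with the four step-size constraints in \eqref{eq: Thm PL-PL conditions} so that the cross-terms coupling $a_t$ and $b_t$ are dominated and the combined recursion closes with the single contraction factor $(1 - \tfrac12\mu_1\eta_1)$. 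Unrolling this recursion and summing the geometric series on the constant error $E$ then gives the stated bound.
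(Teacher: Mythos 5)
Your plan reproduces the paper's proof architecture almost exactly: the same potential $P_t = a_t + \tfrac{1}{2}b_t$, the $L_\Phi$-smoothness of $\Phi$ (the paper's Lemma \ref{lemma:L_Phi}, imported from \cite{nouiehed2019solving}), the descent inequality on $a_t$ with the gradient discrepancy split into a consensus-drift part and an inner-gap part controlled by PL (ii) (Lemmas \ref{Lemma: Phi contraction} and \ref{Lemma: h_t bound}), the ascent-side contraction of $b_t$ under $\eta_2 \le 1/L_2$ (Lemma \ref{Lemma: b_t contraction}), the inflation of $\rho_f^2$ to $\rho^2 = 3\rho_f^2 + 6L_{12}^2(\epsilon_1^2+\epsilon_2^2)$ (Proposition \ref{prop: non-iid degree}), and the use of PL (i) with $\beta \le \tfrac{1}{2}$ to produce the contraction factor $\bigl(1-\tfrac{1}{2}\mu_1\eta_1\bigr)$.

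The one step where your plan asserts something that does not follow from the argument you describe is the drift lemma. You claim the drift is a \emph{fixed} quantity, $\frac{1}{n}\sum_i \E\norm{\w^i_t-\wbar_t}^2 = \mathcal{O}\bigl(\eta_1^2(\tau-1)^2\rho^2 + \eta_1^2(\tau-1)(n+1)\sigmaw/n\bigr)$, and that it can simply be inserted as part of a constant error $E$ in a one-step recursion $P_{t+1}\le \Upsilon P_t + E$. The paper's Lemma \ref{lemma:e_t g_l bound} proves something weaker: $e_t \le 16\eta_1^2(\tau-1)^2\rho^2 + 4\eta_1^2(\tau-1)(n+1)\frac{\sigmaw}{n} + 20\eta_1^2(\tau-1)\sum_{l=t_c+1}^{t-1} g_l$, where the accumulated squared gradient norms $g_l$ cannot be dropped, because gradient norms are never assumed bounded. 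These positive $g_l$ terms couple the potential recursion across all iterations of a communication round, so the recursion is \emph{not} of the form $P_{t+1}\le\Upsilon P_t + E$; the paper needs a dedicated elimination lemma (Lemma \ref{lemma:elimination}) to absorb $\eta_1^2 B\sum_l g_l$ into the leftover negative terms $-\tfrac{\eta_1}{2}(1-\eta_1\hat{L})g_t$, and that absorption is precisely the origin of the fourth step-size condition with the $\bigl(1-\tfrac{1}{2}\mu_1\eta_1\bigr)^{\tau-1}$ factor in the denominator --- a condition you quote, but whose role your plan cannot explain, since in your version no $g_l$ terms survive. Your gradient-free drift bound is in fact provable, but only via a decomposition different from the paper's: write $\w^i_t-\wbar_t = -\eta_1\sum_{l=t_c+1}^{t-1}\bigl[\sgrw f^i(\w^i_l,\bpsi^i_l) - \frac{1}{n}\sum_j \sgrw f^j(\w^j_l,\bpsi^j_l)\bigr]$ and exploit the cancellation of the common term $\grw f(\wbar_l,\Psi_l)$ inside the bracket, so that only Lipschitz-times-drift terms, heterogeneity ($\rho^2$) terms, and martingale noise remain; unrolling under $32\eta_1^2(\tau-1)^2L_1^2\le 1$ then closes the bound with no $g_l$ dependence. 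As written, your proof has a hole between the drift bound you assert and the machinery you invoke: you must either supply this cancellation argument, or adopt the paper's route of carrying the $g_l$ sums and proving the elimination lemma.
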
 

\begin{proof}
We provide the proof of Theorem \ref{Thm: PL-PL convergence} for any $\beta \leq 1/2$ in Appendix \ref{proof: Thm PL-PL}.
\end{proof}
 
Let us denote $L \coloneqq \max\{L_1, L_2/n, L_{12}/\sqrt{n}, L_{21}/\sqrt{n}\}$, $\mu \coloneqq \min\{\mu_1, \mu_2\}$ and define the condition number $\kappa \coloneqq L / \mu$. Then for feasible step-sizes $\eta_1$ and $\eta_2$ we have
\begin{align}
    P_{t} 
    \leq
    e^{- \frac{1}{2} \mu \eta_1 t} P_0 
    +
    \ccalO \left( \eta_1^2 + n \eta_1 \eta_2 \right) \kappa L (\tau - 1)^2 \rho^2
    +
    \ccalO \left( \eta_1^2 + n \eta_1 \eta_2 \right) \kappa L (\tau - 1)  \sigmaw 
    +
    \ccalO \left( \eta_1 \right) \kappa^2 \frac{\sigmaw}{n}
    +
    \ccalO \left( \frac{\eta_2^2}{\eta_1} \right) n \kappa \sigmapsi. \nonumber
\end{align}
Special cases of this convergence result is consistent with similar ones already established in the literature. In particular the case of regular (non-federated) distributed optimization i.e. when $\tau = 1$, Theorem \ref{Thm: PL-PL convergence} recovers the convergence result in \cite{yang2020global} for a minimax problem with PL-PL cost functions. As another special case of our result, putting $\epsilon_1, \epsilon_2 \to 0$ reduces the problem to standard (non-robust) federated learning where our result is consistent with the prior work as well. In particular, setting  $\epsilon_1, \epsilon_2 \to 0$ and consequently $\eta_2 \to 0$ in this result recovers standard federated learning convergence rates for PL losses \citep{haddadpour2019convergence}.


Next, we relax the PL condition on $f(\cdot, \Psi)$ stated in Assumption \ref{assumption:PL} (i) and show that the iterates of the \texttt{FedRobust} method find a stationary point of the minimax problem \eqref{eq: min max f} when the objective function $f(\w, \Psi)$ only satisfies the PL condition with respect to $\Psi$ and is nonconvex with respect to $\w$.

\begin{theorem}[Nonconvex-PL loss] \label{Thm: nonconvex-PL convergence}
Consider the iterates of {\normalfont \texttt{FedRobust}} in Algorithm \ref{Alg: FedRobust} and let Assumptions  \ref{assumption:bounded-degree}, \ref{assumption:smooth}, and \ref{assumption:PL} (ii) hold.
Then, the iterates of {\normalfont \texttt{FedRobust}} after $T$ iterations satisfy:
\begin{align} \label{eq: Thm nonconvex-PL}
    \frac{1}{T} \sum_{t=0}^{T-1} \E \norm{\gr \Phi( \wbar_{t} )}^2
    &\leq
    \frac{4 \Delta_{\Phi}}{\eta_1 T}  
    +
    \frac{4 L_2^2}{\mu_2^2 n^2}\frac{ \epsilon^2}{\eta_1 T} 
    +
    64 \eta_1 \tilde{L} (\tau - 1)^2 \rho^2 
    +
    16 \eta_1 \tilde{L}  (\tau - 1)  \frac{n + 1}{n} \sigmaw
    +
    2 \eta_1 \hat{L} \frac{\sigmaw}{n}
    +
    \frac{\eta_2^2}{\eta_1} L_2 \sigmapsi, \nonumber
\end{align}
with $\tilde{L},\hat{L},L_{\Phi}, \rho^2 $ defined in Theorem \ref{Thm: PL-PL convergence}, $\epsilon^2 \coloneqq \epsilon_1^2 + \epsilon_2^2$ and $\Delta_{\Phi} \coloneqq \Phi(\w_0) - \Phi^*$, if step-sizes $\eta_1, \eta_2$ satisfy
\begin{gather}
    \eta_2 
    \leq
    \frac{1}{L_2}, \quad
    \frac{\eta_1}{\eta_2} 
    \leq
    \frac{\mu_2^2 n^2}{8 L_{12}^2}, \quad
    32 \eta_1^2 (\tau - 1)^2 L_1^2 
    \leq
    1, \quad
    \eta_1 \left( \hat{L}  + 40 \tilde{L} (\tau - 1)^2 \right)
    \leq 1. \nonumber 
\end{gather}
\end{theorem}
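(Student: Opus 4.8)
The plan is to follow the same architecture as the proof of Theorem~\ref{Thm: PL-PL convergence}, but to replace the linear-contraction argument — which crucially used the outer PL condition (Assumption~\ref{assumption:PL}(i)) to bound $a_t = \E[\Phi(\wbar_t)] - \Phi^*$ — with a telescoping argument appropriate when $\Phi$ is merely nonconvex. The first step is to record that, under Assumption~\ref{assumption:PL}(ii) together with the Lipschitz-gradient conditions of Assumption~\ref{assumption:smooth}, the envelope $\Phi(\w) = \max_\Psi f(\w,\Psi)$ is $L_\Phi$-smooth with $L_\Phi = L_1 + \frac{L_{12} L_{21}}{2 n \mu_2}$. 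This is a Danskin-type argument: PL-(ii) yields quadratic growth of $-f(\w,\cdot)$ and hence a Lipschitz bound on the map $\w \mapsto \Psi^*(\w)$ through $L_{21}$ and $\mu_2$, which composes with $L_{12}$. Smoothness of $\Phi$ then lets me apply the descent lemma to the averaged iterates $\wbar_t$.

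Second, I would derive the per-iteration descent inequality for $\Phi(\wbar_{t+1})$ in terms of $\Phi(\wbar_t)$. The averaged $\w$-update is a gradient descent step on $\frac1n\sum_i \sgrw f^i$, so expanding $\Phi(\wbar_{t+1})$ via $L_\Phi$-smoothness produces a dominant $-\eta_1 \norm{\gr\Phi(\wbar_t)}^2$-type term plus several error terms to control: (i) the gradient mismatch $\gr\Phi(\wbar_t) - \grw f(\wbar_t,\Psi_t)$, bounded by $L_{12}\norm{\Psi_t - \Psi^*(\wbar_t)}_F$; (ii) the consensus/drift error $\frac1n\sum_i \norm{\w^i_t - \wbar_t}^2$ accumulated over the $\tau$ local steps since the last synchronization, which — as in Theorem~\ref{Thm: PL-PL convergence} — is controlled using Assumptions~\ref{assumption:bounded-degree} and \ref{assumption:smooth} and yields the $(\tau-1)^2\rho^2$ and $(\tau-1)\frac{n+1}{n}\sigmaw$ contributions; and (iii) the stochastic-gradient variance, which contributes the $\sigmaw/n$ term after averaging over $n$ nodes.

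Third, I would close the coupling by controlling the gap $b_t = \E[\Phi(\wbar_t) - f(\wbar_t,\Psi_t)]$, which quantifies how far the ascent variables are from optimal. Assumption~\ref{assumption:PL}(ii) gives $b_t \le \frac{1}{2\mu_2}\norm{\grPsi f(\wbar_t,\Psi_t)}_F^2$ and the quadratic-growth bound $\norm{\Psi_t - \Psi^*(\wbar_t)}_F^2 \le \frac{2}{\mu_2} b_t$, turning term (i) into a multiple of $b_t$. A single ascent step of size $\eta_2$ contracts $b_t$, while the movement of $\wbar_t$ inflates it through $L_{21}$; this produces a one-step recursion for $b_t$ whose contraction is guaranteed by the step-size conditions $\eta_1/\eta_2 \le \mu_2^2 n^2/(8 L_{12}^2)$ and $\eta_2 \le 1/L_2$. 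Summing this recursion bounds $\sum_t b_t$ (hence $\sum_t \norm{\Psi_t - \Psi^*}_F^2$) in terms of $\sum_t \norm{\gr\Phi(\wbar_t)}^2$, the noise terms, and the initial gap $b_0$; the latter is bounded using only the constraint radii, $\norm{\Psi_0 - \Psi^*_0}_F^2 = \mathcal{O}(\epsilon_1^2 + \epsilon_2^2)$, which after passing through the relevant $L_2,\mu_2,n$ factors yields the $\frac{4 L_2^2}{\mu_2^2 n^2}\frac{\epsilon^2}{\eta_1 T}$ term.

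Finally, I would substitute the gap bound into the descent inequality, telescope $\Phi(\wbar_{t+1}) - \Phi(\wbar_t)$ over $t = 0,\dots,T-1$, divide by $\eta_1 T$, and rearrange to isolate $\frac1T\sum_t \E\norm{\gr\Phi(\wbar_t)}^2$; the telescoped outer term collapses to $\Delta_\Phi = \Phi(\w_0) - \Phi^*$, giving the leading $4\Delta_\Phi/(\eta_1 T)$. The remaining step-size constraints $32\eta_1^2(\tau-1)^2 L_1^2 \le 1$ and $\eta_1(\hat L + 40\tilde L(\tau-1)^2) \le 1$ are exactly what is needed to absorb the drift and cross terms while keeping the coefficient of $\norm{\gr\Phi(\wbar_t)}^2$ bounded below by a constant multiple of $\eta_1$. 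The main obstacle I anticipate is precisely the coupling noted after the algorithm: because the $\bpsi^i$ stay local while $\wbar_t$ mixes all nodes, the gap recursion and the drift bound are entangled across nodes, so the two-timescale balancing (the interplay of $\eta_1$, $\eta_2$, $\tau$, and the $1/n$ and $1/n^2$ factors) must be carried out jointly rather than node-by-node; keeping the $\Psi$-ascent recursion contractive while simultaneously absorbing the $\tau$-step consensus error is the delicate part.
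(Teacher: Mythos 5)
Your proposal takes essentially the same route as the paper's proof: establish $L_\Phi$-smoothness of $\Phi$ via the Danskin-type lemma under Assumption \ref{assumption:PL}(ii), write the descent inequality for $\Phi(\wbar_t)$ with the gradient mismatch controlled through the ascent gap $b_t$ and the consensus error $e_t$, sum a contracting recursion for $b_t$ (with $b_0 = \mathcal{O}(\epsilon_1^2+\epsilon_2^2)$ producing the $\epsilon^2/(\eta_1 T)$ term), bound the accumulated drift by the $(\tau-1)^2\rho^2$ and variance terms, and telescope under the stated step-size conditions. This is exactly the paper's combination of Lemmas \ref{Lemma: Phi contraction}, \ref{Lemma: h_t bound}, \ref{Lemma: b_t contraction}, \ref{lemma: sum b_t}, and \ref{lemma: sum e_t}, so your plan is correct and not a genuinely different argument.
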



\begin{proof}
We defer the proof of Theorem \ref{Thm: nonconvex-PL convergence} to Appendix \ref{appendix: proof of Thm nonconvex-PL}. 
\end{proof}

Theorem \ref{Thm: nonconvex-PL convergence} implies that after $T$ iterations of \texttt{FedRobust}, there exists $0 \leq t \leq T-1$ for which we have
\begin{align}
    \E \norm{\gr \Phi(\wbar_{t})}^2 
    &\leq 
    \ccalO \left(\frac{\Delta_{\Phi}  + \kappa^2 \left( \epsilon_1^2 + \epsilon_2^2 \right)}{\eta_1 T} \right) 
    +
    \ccalO \left( \eta_1^2 + n \eta_1 \eta_2 \right) L^2 (\tau - 1)^2 \rho^2 \\
    &\quad +
    \ccalO \left( \eta_1^2 + n \eta_1 \eta_2 \right) L^2 (\tau - 1) \sigmaw
    +
    \ccalO \left( \eta_1 \right) \kappa L \frac{\sigmaw}{n}
    +
    \ccalO \left( \frac{\eta_2^2}{\eta_1} \right) n L \sigmapsi, \nonumber
\end{align}
which yields that the averaged model $\wbar_{t}$ approaches a stationary saddle point of $\Phi(\w)$ for proper choices of the step-sizes. It is worth noting that similar to Theorem \ref{Thm: PL-PL convergence}, this result recovers existing results in the literature for the special cases of distributed minimax optimization, i.e. $\tau=1$ \citep{lin2019gradient} and standard federated learning for nonconvex objectives, i.e. when $\epsilon_1, \epsilon_2 \to 0$ \citep{wang2018adaptive,reisizadeh2019fedpaq}.

\subsection{Generalization guarantees}

Following the margin-based generalization bounds developed in \cite{bartlett2017spectrally,neyshabur2017pac,farnia2018generalizable}, we consider the following margin-based error measure for analyzing the generalization error in \texttt{FLRA} with general neural network classifiers:
\begin{equation}\label{Generalization Error Measure}
    \mathcal{L}^{\operatorname{adv}}_{\gamma}(\w) 
    \coloneqq
    \frac{1}{n}\sum_{i=1}^n {\Pr}_i \left( f_{\w}(h^i_{adv}(\mathbf{X}) )[Y] - \max_{j\neq Y} f_{\w}(h^i_{adv}(\mathbf{X}) )[j] \le \gamma \right).
\end{equation}
Here, $h^i_{adv}$ denotes the worst-case affine transformation for node $i$ in the maximization problem \eqref{Eq: AFL def}; $\Pr_i$ denotes the probability measured by the underlying distribution of node $i$, and $f_{\w}(\mathbf{x})[j]$ denotes the output of the neural network's last softmax layer for label $j$. Note that for $\gamma=0$, the above definition reduces to the average misclassfication rate under the distribution shifts, which we simply denote by $\mathcal{L}^{\operatorname{adv}}(\w)$. We also use $\hat{\mathcal{L}}^{\operatorname{adv}}_\gamma(\w)$ to denote the above margin risk for the empirical distribution of samples, where we replace the underlying $\Pr_i$ with $\hat{\Pr}_i$ being the empirical probability evaluated for the $m$ samples of node $i$. The following theorem bounds the difference of the empirical and underlying margin-based error measures in \eqref{Generalization Error Measure} for a general deep neural network function. The bound is based on the spectral norms of the weight matrices across layers which provide upper-bounds for the Lipschitz and smoothness coefficients of the neural network.

\begin{theorem}\label{Thm: generalization}
Consider an $L$-layer neural network with $d$ neurons per layer. We assume the activation function of the neural network $\sigma$ satisfies $\sigma(0)=0$ and $\max_t\{|\sigma'(t)|,|\sigma''(t)|\}\le 1$. Suppose the same Lipschitzness and smoothness condition holds for loss $\ell$, and $\Vert\mathbf{X}\Vert_2\le B$. We assume the weights of the neural network are spectrally regularized such that for $M>0$:
\begin{align}
    \frac{1}{M} \le \left( \prod_{i=1}^d\Vert\w_i \Vert_{\sigma} \right)^{1/d} 
    \le
    M, \nonumber
\end{align}
with $\Vert\cdot\Vert_{\sigma}$ denoting the maximum singular value, i.e., the spectral norm. Also, suppose that for $\eta>0$, 
\begin{align}
\operatorname{Lip}(\nabla f_{\w}) := \sum_{i=1}^d\prod_{j=1}^i\Vert\w_i \Vert_\sigma \le \lambda(1-\eta)\nonumber
\end{align}
holds where $\operatorname{Lip}(\nabla f_{\w})$ upper-bounds the Lipschitz coefficient of the gradient $\nabla_{\mathbf{x}}\ell(f_{\w}(\mathbf{x},y))$. Then, for every $\xi>0$ with probability at least $1 - \xi$ the following holds for all feasible weights $\w$:
\begin{align}
    \mathcal{L}^{\operatorname{adv}}(\w) - \hat{\mathcal{L}}^{\operatorname{adv}}_{{\gamma}}(\w)
    \le
    \mathcal{O} \left( \sqrt{ \frac{ B^2L^2d\log(Ld)\lambda^2\bigl(\prod_{i=1}^L\Vert \w_i\Vert_{\sigma} \sum_{i=1}^L\frac{\Vert \w_i\Vert^2_F}{\Vert \w_i\Vert^2_{\sigma}}\bigr)^2+L\log\frac{nmL\log(M)}{\eta\xi}  }{m \gamma^2 (\lambda-(1+B)\operatorname{Lip}(\nabla f_{\w}))^2} }\right). \nonumber
\end{align}
\end{theorem}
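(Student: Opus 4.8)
The plan is to prove this via the PAC-Bayes margin framework of \cite{neyshabur2017pac,bartlett2017spectrally}, adapted to the adversarial/federated setting as in \cite{farnia2018generalizable}. First I would fix the learned weights $\w$ and introduce a Gaussian perturbation posterior $Q=\mathcal{N}(\w,\sigma^2 I)$ against a Gaussian prior $P=\mathcal{N}(0,\sigma^2 I)$, with $\sigma$ chosen later to balance the perturbation error against the $\mathrm{KL}$ term. The PAC-Bayes theorem then yields, per node and with probability $1-\xi$, a bound of the form $\mathbb{E}_U[\mathcal{L}^{\operatorname{adv}}(\w+U)]\le \mathbb{E}_U[\hat{\mathcal{L}}^{\operatorname{adv}}_0(\w+U)]+\sqrt{(\mathrm{KL}(Q\|P)+\log(1/\xi))/(2m)}$, and I would average these $n$ per-node statements. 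The standard margin trick converts the two perturbed quantities into the deterministic $\mathcal{L}^{\operatorname{adv}}(\w)$ and $\hat{\mathcal{L}}^{\operatorname{adv}}_\gamma(\w)$, provided that with high probability over $U$ the perturbation moves the relevant margins by at most $\gamma/2$.

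The second ingredient is a uniform perturbation bound on the network output. Every feasible affine transformation keeps inputs in a ball of radius $B'=(1+\epsilon_1)B+\epsilon_2$ (in the penalized formulation this radius is instead controlled by $\lambda$), so it suffices to bound $\sup_{\|\mathbf{x}'\|\le B'}\|f_{\w+U}(\mathbf{x}')-f_\w(\mathbf{x}')\|_\infty$ uniformly. Using $\sigma(0)=0$ and $|\sigma'|\le 1$, a layer-by-layer telescoping argument bounds this by $e\,B'\prod_{i=1}^L\|\w_i\|_\sigma\sum_{i=1}^L\|U_i\|_\sigma/\|\w_i\|_\sigma$; the second-derivative bound $|\sigma''|\le 1$ enters only when I additionally need the Lipschitzness of $\nabla f_\w$ that will control the inner maximization.

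The main obstacle is that the worst-case transformation $h^i_{adv}$ is itself a function of $\w$: perturbing $\w\to\w+U$ moves the maximizer of the inner problem \eqref{Eq: AFL def}, and—because that maximizer is chosen to maximize the \emph{loss} rather than to minimize the \emph{margin}—I cannot dispose of this drift by a simple suboptimality inequality. Instead I would split the margin change into a direct part (handled by the uniform bound above) and an indirect part $|m_\w(h^i_{\w+U}(\mathbf{x}))-m_\w(h^i_\w(\mathbf{x}))|\le \operatorname{Lip}(m_\w)\,\|h^i_{\w+U}(\mathbf{x})-h^i_\w(\mathbf{x})\|$ arising from the shift of the maximizer. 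Here the stated condition $\operatorname{Lip}(\nabla f_\w)\le \lambda(1-\eta)$ is precisely what makes $f^i(\w,\cdot)$ strongly concave in $(\Lambda^i,\delta^i)$ with modulus on the order of $\lambda-(1+B)\operatorname{Lip}(\nabla f_\w)$; an implicit-function / first-order-optimality argument then gives $\|h^i_{\w+U}-h^i_\w\|=\mathcal{O}\bigl(\|U\|/(\lambda-(1+B)\operatorname{Lip}(\nabla f_\w))\bigr)$. This is the step I expect to be the crux, and it is what injects the factor $(\lambda-(1+B)\operatorname{Lip}(\nabla f_\w))^2$ into the denominator.

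Finally I would assemble the pieces: compute $\mathrm{KL}(Q\|P)=\|\w\|^2/(2\sigma^2)=\sum_i\|\w_i\|_F^2/(2\sigma^2)$, optimize $\sigma$ so that the perturbation radius over the ball $B'$ equals $\gamma/2$, and combine with the perturbation bound of the second paragraph; this is what produces the spectral complexity factor $\bigl(\prod_i\|\w_i\|_\sigma\sum_i\|\w_i\|_F^2/\|\w_i\|_\sigma^2\bigr)$ in the numerator. Since the bound must hold uniformly over all admissible $\w$ while $\sigma$ is fixed a priori, I would apply a union bound over a logarithmic grid of values of the spectral-norm product—finite because the normalization $1/M\le(\prod_i\|\w_i\|_\sigma)^{1/d}\le M$ confines it—which yields the $\log M$ and $\log(nmL)$ terms. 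Averaging the $n$ per-node bounds (each based on $m$ samples) gives the $m\gamma^2$ in the denominator, and tracking $B$, $L$, $d$ and the $\log(Ld)$ covering factor through the activation's smoothness is then routine bookkeeping.
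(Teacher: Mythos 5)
Your skeleton is essentially the paper's proof: a PAC-Bayes margin analysis in the style of \cite{farnia2018generalizable}, a layer-by-layer telescoping perturbation bound on the network output, strong concavity of the \emph{penalized} inner maximization (guaranteed by $\lambda>(1+B)\operatorname{Lip}(\nabla f_{\w})$) to control the drift of the worst-case affine transformation when $\w$ is perturbed, and a covering argument plus a union bound over the $n$ nodes. In particular, you correctly isolated the crux — that $h^i_{adv}$ depends on $\w$ and cannot be handled by a suboptimality inequality — and correctly located the source of the $(\lambda-(1+B)\operatorname{Lip}(\nabla f_{\w}))^2$ denominator: the paper does precisely this, writing the maximizers through their first-order fixed-point conditions $\delta_{\w}=\frac{1}{\lambda}\E[\nabla\ell\circ f_{\w}(\Lambda_{\w}\mathbf{X}+\delta_{\w})]$, $\Lambda_{\w}-I=\frac{1}{\lambda}\E[(\nabla\ell\circ f_{\w}(\Lambda_{\w}\mathbf{X}+\delta_{\w}))\mathbf{X}^\top]$ and bootstrapping a bound on $\Vert\delta_{\w+\bm{u}}-\delta_{\w}\Vert_2+\Vert\Lambda_{\w+\bm{u}}-\Lambda_{\w}\Vert_\sigma$ with the factor $\lambda/(\lambda-(1+B)\operatorname{Lip}(\ell\circ f_{\w}))$.

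Two of your choices, however, would not deliver the stated bound as written. First, the isotropic posterior $Q=\mathcal{N}(\w,\sigma^2 I)$ cannot produce the complexity term $\prod_{i}\Vert\w_i\Vert_\sigma\sum_i\Vert\w_i\Vert_F^2/\Vert\w_i\Vert_\sigma^2$: under isotropic noise the output-perturbation bound scales like $\sigma\sqrt{d}\,\prod_i\Vert\w_i\Vert_\sigma\sum_i\Vert\w_i\Vert_\sigma^{-1}$, so after tuning $\sigma$ the KL term involves $\sum_i\Vert\w_i\Vert_F^2$ weighted by $\bigl(\sum_i\Vert\w_i\Vert_\sigma^{-1}\bigr)^2$, a genuinely different quantity. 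For ReLU networks one escapes this by rebalancing layers using positive homogeneity, but that trick is unavailable here precisely because the theorem requires a smooth activation; this is why the paper instead uses layer-dependent variances $\kappa_i\propto\Vert\tilde{\w}_i\Vert_\sigma/\bigl(\prod_j\Vert\tilde{\w}_j\Vert_\sigma\bigr)^{1/L}$. Second, because those variances (hence the PAC-Bayes prior) depend on each layer's spectral norm individually, the union bound must run over a per-layer grid — the paper uses $\mathcal{O}\bigl((\frac{L}{\eta}\log M)^L L\bigr)$ cover points, and the logarithm of this cover size is exactly the leading $L$ in the theorem's $L\log\frac{nmL\log M}{\eta\xi}$ term. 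Your grid over the single scalar $\prod_i\Vert\w_i\Vert_\sigma$ misses that factor and, more fundamentally, does not pin down the prior: with only the geometric-mean constraint $1/M\le(\prod_i\Vert\w_i\Vert_\sigma)^{1/d}\le M$, the individual layer norms (and so any quantity such as $\sum_i\Vert\w_i\Vert_\sigma^{-1}$ that your choice of $\sigma$ must depend on) are unbounded within a single cell of that grid.
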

\begin{proof}
We defer the proof to Appendix \ref{appendix: proof of Thm generalization}.
\end{proof}
This theorem gives a non-asymptotic bound on the generalization risk of \texttt{FLRA} for spectrally regularized neural nets with their smoothness constant bounded by $\lambda$. Thus, we can control the generalization performance by properly regularizing the Lipschitzness and smoothness degrees of the neural net. Note that this result requires a smooth and Lipschitz activation function in the neural network, such as the exponential linear unit (ELU) activation. In our numerical experiments, we also tried the popular ReLU activation, which does not satisfy the smoothness condition. However, we still observed a satisfactory generalization performance in those experiments, indicating that the above guarantee can practically extends to ReLU-type non-linearities as well.

\subsection{Distributional robustness}

To analyze \texttt{FLRA}'s robustness properties, we draw a connection between \texttt{FLRA} and distributionally robust optimization using optimal transport costs. Consider the optimal transport cost $W_c(P,Q)$ for quadratic cost $c(\mathbf{x},\mathbf{x}')=\frac{1}{2}\Vert \mathbf{x}-\mathbf{x}'\Vert^2_2$ defined as
\begin{align}
W_c(P,Q) \coloneqq \min_{M\in \Pi(P,Q)} \: \E [c(\mathbf{X},\mathbf{X}') ], \nonumber
\end{align}
where $\Pi(P,Q)$ denotes the set of all joint distributions on $(\mathbf{X},\mathbf{X}')$ with marginal distributions $P,Q$. In other words, $W_c(P,Q)$ measures the minimum expected cost for transporting samples between $P$ and $Q$. In order to define a distributionally robust federated learning problem against affine distribution shifts, we consider the following minimax problem:
\begin{equation} \label{Eq: DRO Federeted Learning}
    \min_{\w}\;\frac{1}{n}\sum_{i=1}^n \max_{\Lambda^i,\delta^i}\;\bigl\{ \mathbb{E}_{P^i}\bigl[\ell\bigl(f_{\w} (\Lambda^i\mathbf{X}+\delta^i),Y\bigr) \bigr] -  W_c(P^i_{\mathbf{X}},P^i_{\Lambda^i\mathbf{X}+\delta^i})\bigr\}.   
\end{equation}
In this distributionally robust learning problem, we include a penalty term controlling the Wasserstein cost between the original distribution of node $i$ denoted by $P^i$ and its perturbed version under an affine distribution shift, i.e., $P^i_{\Lambda^i\mathbf{X}+\delta^i}$. Note that here we use the averaged Wasserstein cost 
\begin{align}
    \frac{1}{n}\sum_{i=1}^n W_c(P^i_{\mathbf{X}},P^i_{\Lambda^i\mathbf{X}+\delta^i}) \nonumber
\end{align}
to measure the distribution shift caused by the affine shifts $(\Lambda^i,\delta^i)_{i=1}^n$. The following theorem shows that this Wasserstein cost can be upper-bounded by a norm-squared function of $\Lambda$ and $\delta$ that appears in the \texttt{FLRA}'s minimax problem.
\begin{theorem}\label{Thm: DRO}
Consider the Wasserstein cost $W_c(P_{\mathbf{X}},P_{\Lambda\mathbf{X}+\delta})$ between the distributions of $\mathbf{X}$ and its affine perturbation $\Lambda\mathbf{X}+\delta$. Assuming $ \Vert \E [\mathbf{X}\mathbf{X}^T] \Vert_{\sigma} \le \lambda$, we have
\begin{align} \label{eq: Thm DRO}
    W_c(P_{\mathbf{X}},P_{\Lambda\mathbf{X}+\delta}) \le \max\{\lambda,1\} \bigl[\Vert \Lambda -I \Vert^2_F + \Vert \delta\Vert^2_2 \bigr]. 
\end{align}
\end{theorem}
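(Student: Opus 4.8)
The plan is to exploit that $W_c$ is defined as an infimum over all couplings, so \emph{any} particular coupling furnishes an upper bound. The most natural choice is the deterministic (degenerate) coupling that sends $\mathbf{X}\sim P_{\mathbf{X}}$ to $\mathbf{X}'=\Lambda\mathbf{X}+\delta$; its marginals are exactly $P_{\mathbf{X}}$ and $P_{\Lambda\mathbf{X}+\delta}$, so it lies in $\Pi(P_{\mathbf{X}},P_{\Lambda\mathbf{X}+\delta})$. Substituting this coupling into the definition of the optimal transport cost immediately gives
\[
    W_c(P_{\mathbf{X}},P_{\Lambda\mathbf{X}+\delta}) \le \E\bigl[c(\mathbf{X},\Lambda\mathbf{X}+\delta)\bigr] = \tfrac{1}{2}\,\E\bigl[\Vert(\Lambda-I)\mathbf{X}+\delta\Vert_2^2\bigr].
\]

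Next I would control this expectation without assuming anything about $\E[\mathbf{X}]$. A direct expansion produces a cross term $\langle(\Lambda-I)\E[\mathbf{X}],\delta\rangle$ of indefinite sign; to sidestep it I would instead apply the elementary inequality $\Vert a+b\Vert_2^2\le 2\Vert a\Vert_2^2+2\Vert b\Vert_2^2$, which both decouples the $\Lambda$ and $\delta$ contributions and cancels the leading $\tfrac{1}{2}$, yielding
\[
    W_c(P_{\mathbf{X}},P_{\Lambda\mathbf{X}+\delta}) \le \E\bigl[\Vert(\Lambda-I)\mathbf{X}\Vert_2^2\bigr] + \Vert\delta\Vert_2^2.
\]

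The remaining quadratic-form term I would rewrite through the trace identity $\E\Vert(\Lambda-I)\mathbf{X}\Vert_2^2=\operatorname{tr}\bigl((\Lambda-I)^T(\Lambda-I)\,\E[\mathbf{X}\mathbf{X}^T]\bigr)$. Writing $A:=(\Lambda-I)^T(\Lambda-I)$ and $\Sigma:=\E[\mathbf{X}\mathbf{X}^T]$, both of which are positive semidefinite, the bound $\Sigma\preceq\Vert\Sigma\Vert_{\sigma} I$ gives $A^{1/2}\Sigma A^{1/2}\preceq\Vert\Sigma\Vert_{\sigma} A$, and taking traces produces $\operatorname{tr}(A\Sigma)\le\Vert\Sigma\Vert_{\sigma}\operatorname{tr}(A)=\Vert\Sigma\Vert_{\sigma}\Vert\Lambda-I\Vert_F^2\le\lambda\Vert\Lambda-I\Vert_F^2$ under the hypothesis $\Vert\E[\mathbf{X}\mathbf{X}^T]\Vert_{\sigma}\le\lambda$. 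Chaining the three displays gives $W_c\le\lambda\Vert\Lambda-I\Vert_F^2+\Vert\delta\Vert_2^2\le\max\{\lambda,1\}\bigl[\Vert\Lambda-I\Vert_F^2+\Vert\delta\Vert_2^2\bigr]$, which is exactly \eqref{eq: Thm DRO}.

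The computation itself is short, and I expect the only step requiring genuine care to be the treatment of the cross term. The natural expansion leaves an inner-product term whose sign depends on the unspecified mean of $\mathbf{X}$; the key observation is that replacing the exact expansion with the inequality $\Vert a+b\Vert_2^2\le 2\Vert a\Vert_2^2+2\Vert b\Vert_2^2$ simultaneously eliminates this indefinite term and absorbs the factor $\tfrac{1}{2}$, leaving clean coefficients $\lambda$ and $1$ that the single constant $\max\{\lambda,1\}$ can dominate. The trace inequality $\operatorname{tr}(A\Sigma)\le\Vert\Sigma\Vert_{\sigma}\operatorname{tr}(A)$ for PSD matrices is the other ingredient to state carefully, but it follows directly from the operator-norm bound on $\Sigma$.
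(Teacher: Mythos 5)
Your proof is correct and follows essentially the same route as the paper's: the deterministic coupling $(\mathbf{X},\Lambda\mathbf{X}+\delta)$, the inequality $\Vert a+b\Vert_2^2\le 2\Vert a\Vert_2^2+2\Vert b\Vert_2^2$ to split the linear and additive parts, the trace identity, and the operator-norm bound $\E[\mathbf{X}\mathbf{X}^T]\preceq\lambda I$. Your handling of the trace step via $A^{1/2}\Sigma A^{1/2}\preceq\Vert\Sigma\Vert_{\sigma}A$ is a slightly more careful justification of the same inequality the paper invokes directly.
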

\begin{proof}
We defer the proof to Appendix \ref{appendix: proof of Thm DRO}.
\end{proof}

Substituting the Wasserstein cost in \eqref{Eq: DRO Federeted Learning} with the upper-bound \eqref{eq: Thm DRO} results in the \texttt{FLRA}'s minimax \eqref{eq: FLRA}. As a result, if 
\begin{align}
    \frac{\lambda}{n} \sum_{i=1}^n[ \Vert\Lambda^i - I\Vert^2_F + \Vert\delta^i\Vert^2_2] \le \varepsilon^2 \nonumber
\end{align}
holds for the optimized $\Lambda^i,\delta^i$'s, we will also have the averaged Wasserstein cost bounded by
\begin{align}
    \frac{1}{n} \sum_{i=1}^n W_c  (P^i_{\mathbf{X}},P^i_{\Lambda^i\mathbf{X}+\delta^i}) \le \varepsilon^2. \nonumber
\end{align}
Theorem \ref{Thm: DRO}, therefore, shows the \texttt{FLRA}'s minimax approach optimizes a lower-bound on the distributionally robust \eqref{Eq: DRO Federeted Learning}.

\section{Numerical Results}

We implemented \texttt{FedRobust} in the Tensorflow platform \citep{abadi2016tensorflow} and numerically evaluated the algorithm's robustness performance against affine distribution shifts and adversarial perturbations. We considered the standard MNIST \citep{lecun1998mnist} and CIFAR-10 \citep{krizhevsky2009learning} datasets and used three standard neural network architectures in the literature: AlexNet \citep{krizhevsky2012imagenet}, Inception-Net \citep{szegedy2015going}, and a mini-ResNet \citep{he2016deep}. 

\subsection{CIFAR-10 data: Experimental setup}
In the experiments, we simulated a federated learning scenario with $n=10$ nodes where each node observes $m=5000$ training samples. We also divided the extra $10,000$ samples in each dataset to two validation and test sets containing $5000$ samples each. For CIFAR-10 samples, we applied the sandard normalization and scaled and linearly mapped the pixel intensity values to interval $[-1,1]$. We applied batch normalization \cite{ioffe2015batch} in order to stabilize training and used the ADAM optimizer \citep{kingma2014adam} with stepsize value $10^{-4}$ and default beta parameters $\beta_1=0.9$ and $\beta_2=0.99$ to optimize the neural net's parameters for $T=100$ epochs ($10000$ iterations). 

We did cross validation to choose $\lambda\in \{0.1,0.5,1,5,10,50\} $ and chose the $\lambda$-value resulting in the closest additive penalty $\frac{1}{n}\sum_{i=1}^n[ \Vert\Lambda^{i^*}-I\Vert_2^2 + \Vert\delta^{i^*}\Vert_2^2]$ to $10$ percent of the average sample norm, i.e. $\frac{0.1}{m}\sum_{i=1}^m\Vert\mathbf{x}^{\operatorname{val}}_i \Vert^2_2$, over the $m=5000$ validation samples. To perform GDA optimization, we applied two ascent steps per descent step with stepsize $\frac{1}{2\lambda}$. In order to simulate an affine distribution shift, we manipulated each $\tilde{\mathbf{x}}^i_{j}$ in the original training dataset via an affine transformation chosen randomly at each node:
\begin{equation}
    \mathbf{x}^i_j = (I_{d}+\tilde{\Lambda}^i)\tilde{\mathbf{x}}^i_{j}+\tilde{\delta}^i. \nonumber
\end{equation}
Here, each $\tilde{\Lambda}^i$ is a random matrix with i.i.d. Gaussian entries according to $\mathcal{N}(0,\frac{\sigma^2}{d})$, and $\tilde{\delta}^i$ is a random Gaussian vector according to $\mathcal{N}(0,{\sigma^2}I_{d})$ where we set $\sigma=0.01$. In  test time, we did not apply any random affine transformation to test samples and instead considered the following three scenarios: (1) no perturbation, (2) adversarial affine distribution shift obtained by optimizing the inner maximization in \eqref{eq:ERM} using projected gradient descent, (3) adversarial perturbations designed by the projected gradient descent algorithm. We used $100$ projected gradient steps with stepsize $0.1$. 

We considered three baselines in the experiments: (1) \texttt{FedAvg} where the server node averages the updated parameters of the local nodes after every gradient step, (2) Distributed FGM training where the nodes perform fast adversarial training \citep{goodfellow2014explaining} by optimizing an $\ell_2$-norm bounded perturbation $\delta^{i}_{j}$ using one gradient step followed by projection onto the ball $\{\delta^{i}_{j}:\, \Vert \delta^{i}_{j} \Vert_2\le \epsilon_{\operatorname{fgm}} \}$, and (3) Distributed PGD training where each node preforms PGD adversarial training \citep{madry2017towards} similar to distributed FGM but uses $10$ projected gradient steps, each followed by projection onto $\{\delta^{i}_{j}:\, \Vert \delta^{i}_{j} \Vert_2\le \epsilon_{\operatorname{pgd}} \}$. We used the value $\epsilon_{\operatorname{fgm}} = \epsilon_{\operatorname{pgd}} = 0.05 \, \mathbb{E}[\Vert\mathbf{x}_i\Vert_2]$ in the experiments. We observed training instability after achieving perfect training accuracy for the baseline \texttt{FedAvg} algorithm, and hence performed early stopping to avoid the instability in the \texttt{FedAvg} experiments. We did not encounter the instability issue in \texttt{FedRobust} experiments.

\begin{figure}
\centering
\includegraphics[width=\linewidth]{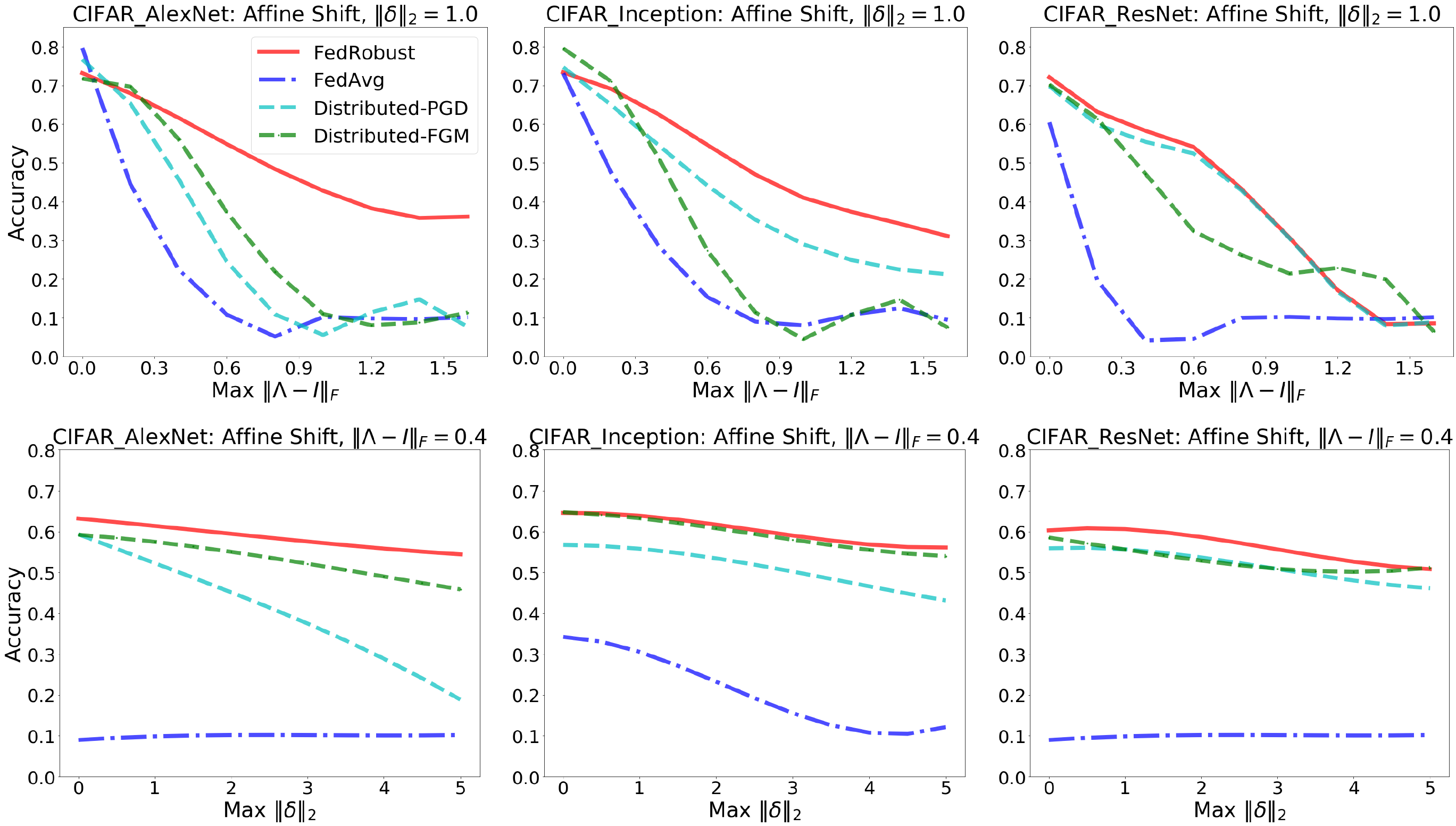}
\caption{Trained networks' test accuracy under affine distribution shifts in the CIFAR-10 experiments. Top row: constraining $\Vert\delta \Vert_2\le 1$ and changing maximum allowed $\Vert \Lambda - I\Vert_F$. Bottom row: constraining $\Vert \Lambda - I\Vert_F\le 0.4$ and changing maximum allowed $\Vert\delta \Vert_2$.} 
\label{Fig: Cifar-affine}
\end{figure}

\subsection{\texttt{FedRobust} vs. \texttt{FedAvg} and adversarial training: Affine distribution shifts}

We tested the performance of the neural net classifiers trained by \texttt{FedRobust}, \texttt{FedAvg}, distributed FGM, and distributed PGD under different levels of affine distribution shifts. Figure \ref{Fig: Cifar-affine} shows the accuracy performance over CIFAR-10 with AlexNet, Inception-Net, and ResNet architectures. 
As demonstrated, \texttt{FedRobust} outperforms the baseline methods in most of the experiments. The improvement over \texttt{FedAvg} can be as large as $54\%$. Moreover, \texttt{FedRobust} improved over distributed FGM and PGD adversarial training, which suggests adversarial perturbations may not be able to capture the complexity of affine distribution shifts. \texttt{FedRobust} also results in $4 \times$ faster training compared to distributed PGD. These improvements motivate \texttt{FedRobust} as a robust and efficient federated learning method to protect against affine distribution shifts. 



\begin{figure}
\centering
\includegraphics[width=\linewidth]{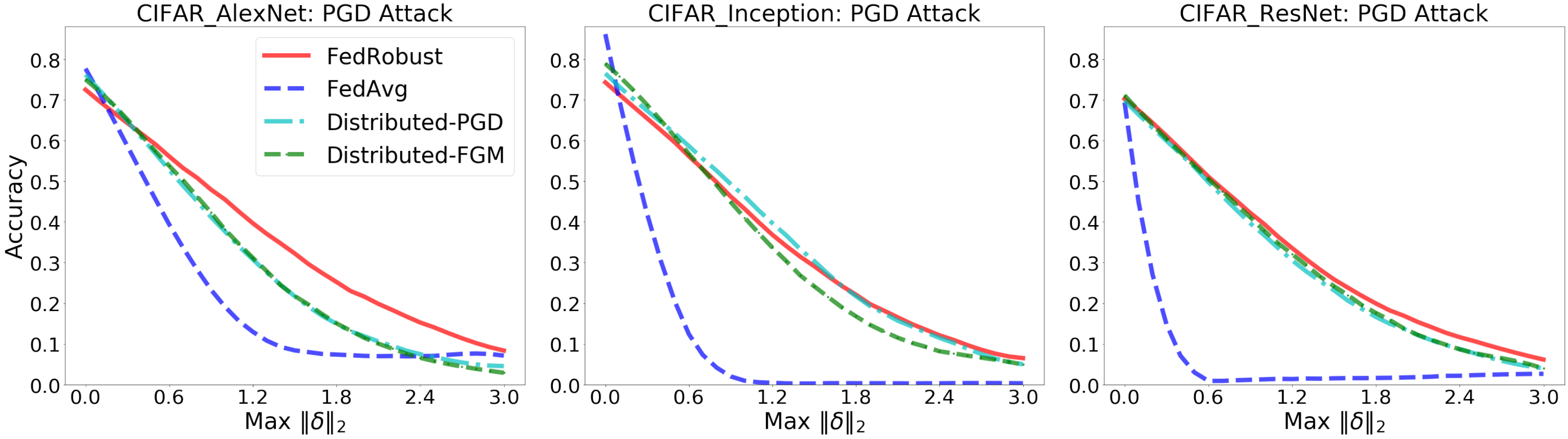}
\caption{Trained networks' test accuracy under PGD perturbations in the CIFAR-10 experiments. $X$-axis shows the maximum allowed $\ell_2$-norm for PGD perturbations.} 
\label{Fig: Cifar-pgd}
\end{figure}

\subsection{\texttt{FedRobust} vs. \texttt{FedAvg} and adversarial training: Adversarial perturbations}

Figure \ref{Fig: Cifar-pgd} summarizes our numerical results of \texttt{FedRobust} and other baselines over CIFAR-10 where the plots show the test accuracy under different levels of $\ell_2$-norm perturbations. While we motivated \texttt{FedRobust} as a federated learning scheme protecting against affine distribution shifts, we empirically observed its robust performance against adversarial perturbations as well. The achieved adversarial robustness in almost all cases matches the robustness offered by distributed FGM and PGD adversarial training. These numerical results indicate that affine distribution shifts can cover the distribution changes caused by norm-bounded adversarial perturbations. 
In summary, our numerical experiments demonstrate the efficiency and robustness of \texttt{FedRobust} against PGD adversarial attacks.

\begin{figure}
\centering
\includegraphics[width=.69\linewidth]{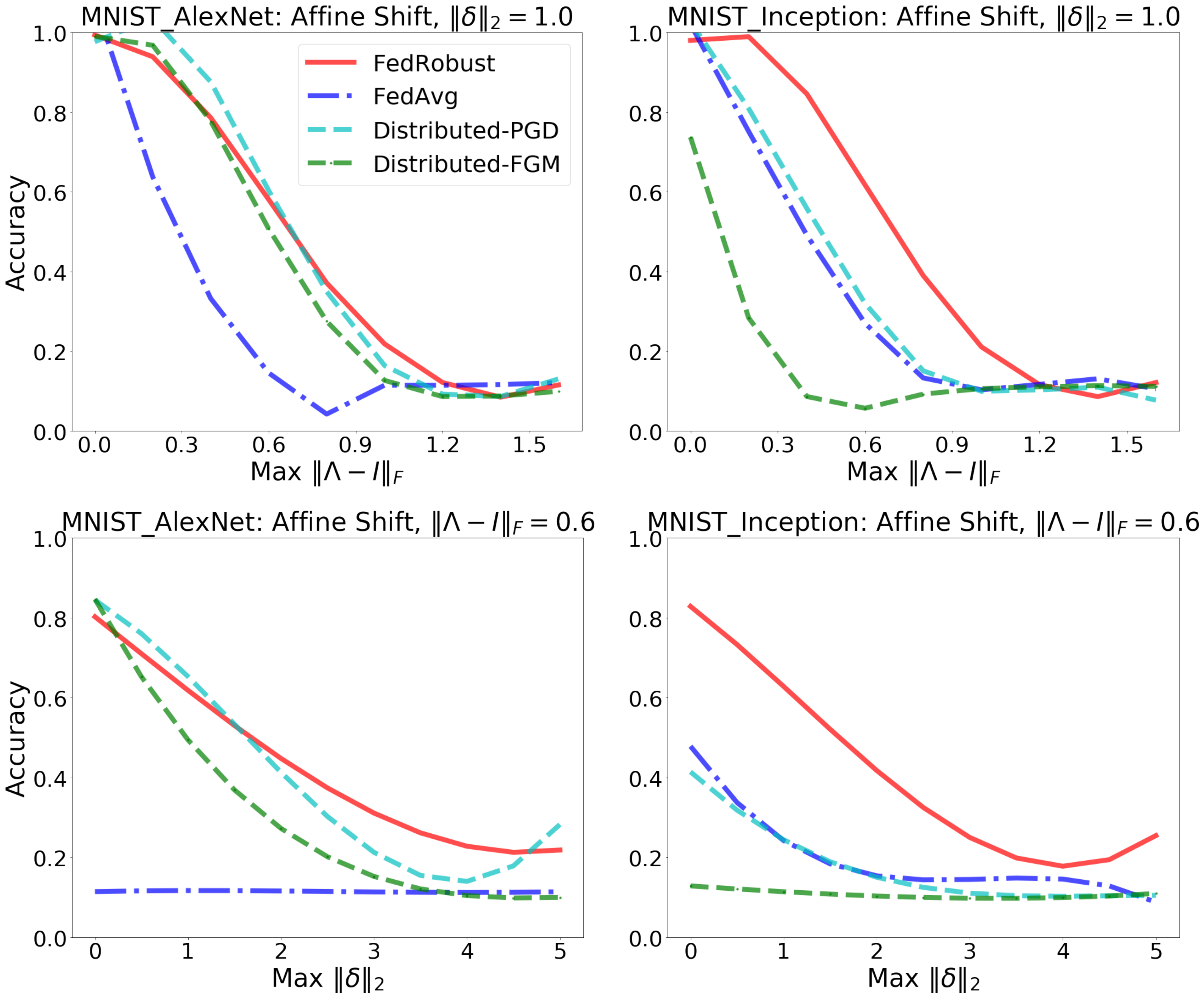}
\caption{Trained networks' test accuracy under affine distribution shifts in the MNIST experiments. Top row: constraining $\Vert\delta \Vert_2\le 1$ and changing maximum allowed $\Vert \Lambda - I\Vert_F$, bottom row: constraining $\Vert \Lambda - I\Vert_F\le 0.6$ and changing maximum allowed $\Vert\delta \Vert_2$.} 
\label{Fig: mnist-affine}
\end{figure}

\subsection{Numerical results for MNIST data}
We repeated the CIFAR experiments in Figures \ref{Fig: Cifar-affine} and \ref{Fig: Cifar-pgd} for the MNIST dataset. Figure \ref{Fig: mnist-affine} shows the numerical results under affine distribution shifts. The figure's top row includes the plots for fixed maximum delta norm $\Vert \delta\Vert_2\le 1$ and different levels of maximum allowed $\Vert \Lambda - I\Vert_F$, while in the bottom row we fix the maximum allowed linear shift $\Vert \Lambda - I\Vert_F\le 0.6$ and evaluate the test accuracy under different levels of $\Vert\delta\Vert_2$. As shown in the plots, \texttt{FedRobust} results in the best performance in most of the evaluations, which indicates the superior performance of \texttt{FedRobust} against affine distribution shifts. Figure \ref{Fig: mnist-pgd} shows the test accuracy of the trained networks under different levels of adversarial PGD perturbations. The figure's experiments again shows that \texttt{FedRobust} can effectively shield against PGD adversarial attacks and achieve a comparable performance to PGD and FGM adversarial training.

\begin{figure}
\centering
\includegraphics[width=.69\linewidth]{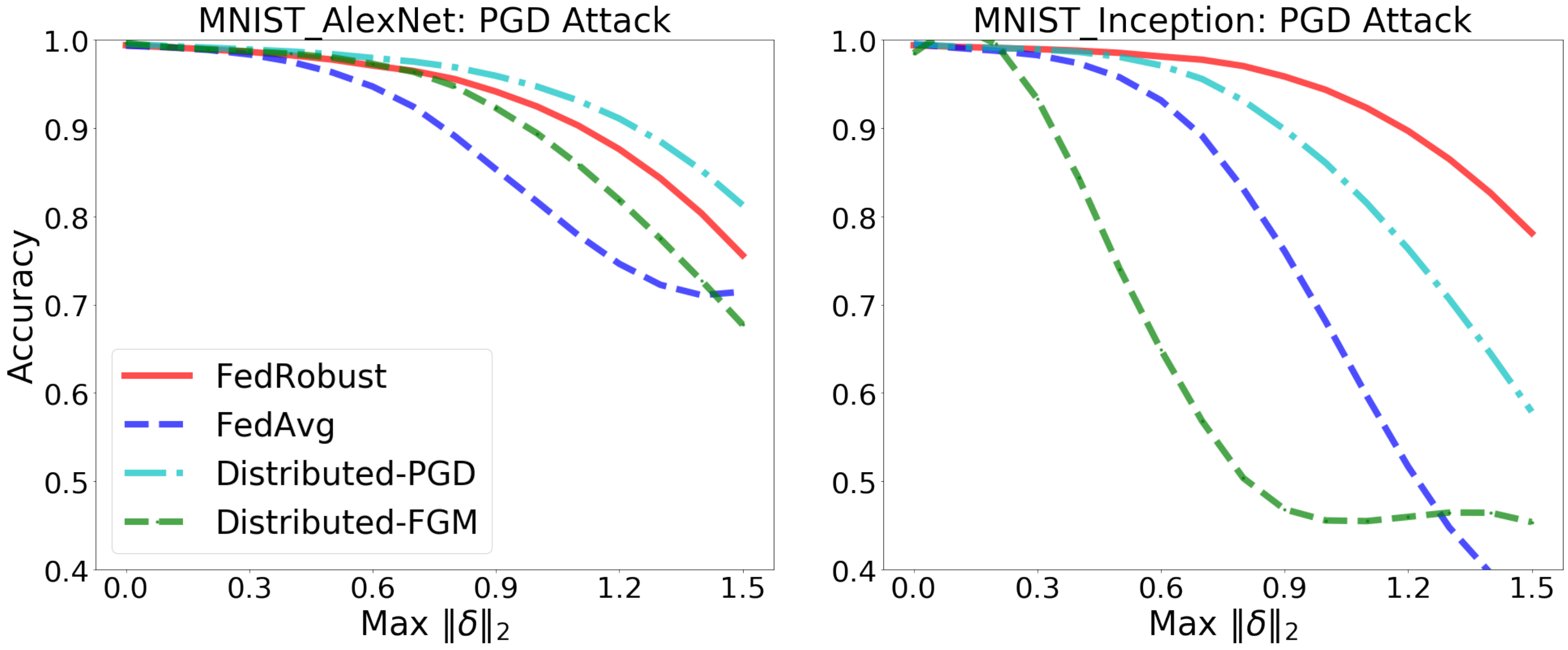}
\caption{Trained networks' test accuracy under PGD perturbations in the MNIST experiments. $X$-axis shows the maximum allowed $\ell_2$-norm for PGD perturbations.} 
\label{Fig: mnist-pgd}
\end{figure}

\newpage

\newpage
\bibliography{ref}
\bibliographystyle{apalike}


\newpage
\begin{appendices}

\newpage

\section{Preliminaries and Useful Lemmas}

In this section, we provide preliminary and useful results in order to prove Theorems \ref{Thm: PL-PL convergence} and \ref{Thm: nonconvex-PL convergence}. For notational convenience, we use  the following short-hand notations:
\begin{table}[h!] 
\begin{center}
    \begin{tabular}{ c c }
    \toprule
    Notation & Description  \\
    \midrule
    $\displaystyle \bpsi^i_t = \left( \Lambda^i_t \, , \, \delta^i_t \right)$
    & maximization variables of node $i$ iteration $t$  \\
    $\displaystyle \Psi_t 
    = \left( \bpsi^1_t \,;\, \cdots \, ; \, \bpsi^n_t \right)$
    &
    \begin{tabular}{@{}c@{}}concatenation of all nodes' maximization \\ models at iteration $t$ \end{tabular}  \\
    $\displaystyle \wbar_t = \frac{1}{n} \sum_{i \in [n]} \w^i_t$ 
    & average model at iteration $t$ \vspace{0cm}\\
    $\displaystyle a_t = \E [ \Phi(\wbar_t) ] - \Phi^*$ & \begin{tabular}{@{}c@{}}optimality gap measure \\ between $\Phi(\wbar_t)$ and $\min_{\w} \Phi(\w)$ \end{tabular}  \vspace{0.2cm}\\
    $\displaystyle b_t = \E [ \Phi(\wbar_t) - f(\wbar_t, \Psi_t)]$ & \begin{tabular}{@{}c@{}}optimality gap measure \\ between $f(\wbar_t, \Psi_t)$ and $\max_{\Psi} f(\wbar_t, \Psi)$ \end{tabular} \vspace{0.2cm}\\
    $\displaystyle e_t = \frac{1}{n} \sum_{i \in [n]} \E \norm{\w^i_t - \wbar_{t}}^2 $ & \begin{tabular}{@{}c@{}}average deviation of the local models  \\ from the average model at iteration $t$ \end{tabular} \\
    $\displaystyle g_t 
    = \E \norm{\frac{1}{n} \sum_{i \in [n]} \grw f^i(\w^i_t , \bpsi^i_t)}^2$  & \begin{tabular}{@{}c@{}}norm squared of  \\ local gradients w.r.t $\w$ at iteration $t$ \end{tabular} \\
    $\displaystyle h_t 
    = \E \norm{\gr \Phi(\wbar_{t}) - \frac{1}{n} \sum_{i \in [n]} \grw f^i(\w^i_t , \bpsi^i_t)}^2$ & \begin{tabular}{@{}c@{}}norm squared of deviation in gradients w.r.t $\w$ \\ of $\max_{\Psi} f(\wbar_t, \Psi)$ and local functions $f^i(\w^i_t , \bpsi^i_t)$ \end{tabular}\\
    \bottomrule
    \end{tabular}
    \vspace{1mm}
    \caption{Table of notations.}
    \label{Table: Notations}
\end{center}
\end{table}

Now, we present a set of useful lemmas and observations which we will invoke to prove the convergence results for both PL-PL and nonconvex-PL loss cases. The following lemma establishes the Lipschitz gradient parameter for the global function given those of the local objectives.
\begin{lemma} \label{Lemma: f Lipschitz}
If the local functions $f^i$s have Lipschits gradients with parameters stated in Assumption \ref{assumption:smooth}, then the global function $f$ has also Lipschitz gradients as follows: for any $\w, \w', \Psi, \Psi'$ it holds that
\begin{gather}
    \norm{\grw f(\w, \Psi) \! - \! \grw f(\w', \Psi)} 
    \leq
    L_1 \norm{\w \! - \! \w'}, \,
    \norm{\grw f(\w, \Psi) \! - \! \grw f(\w, \Psi')} 
    \leq
    \frac{L_{12}}{\sqrt{n}} \norm{\Psi \! - \! \Psi'}_F, \\
    \norm{\grPsi f(\w, \Psi) \! - \! \grPsi f(\w', \Psi)}_F 
    \leq 
    \frac{L_{21}}{\sqrt{n}} \norm{\w \! - \! \w'}, \,
    \norm{\grPsi f(\w, \Psi) \! - \! \grPsi f(\w, \Psi')}_F 
    \leq
    \frac{L_2}{n} \norm{\Psi \! - \! \Psi'}_F.
\end{gather}
\end{lemma}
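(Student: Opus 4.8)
The plan is to exploit the additive, block-separable structure of the global objective $f(\w, \Psi) = \frac{1}{n} \sum_{i=1}^n f^i(\w, \bpsi^i)$ and reduce each of the four claimed bounds to the corresponding per-node bound in Assumption \ref{assumption:smooth}. The first step is to record the exact form of the two gradients. Since each $f^i$ depends on $\Psi$ only through its own block $\bpsi^i$, we have $\grw f(\w, \Psi) = \frac{1}{n} \sum_{i=1}^n \grw f^i(\w, \bpsi^i)$, while $\grPsi f(\w, \Psi)$ is the block-concatenation whose $i$th block equals $\frac{1}{n} \grpsi f^i(\w, \bpsi^i)$. Keeping track of the $\frac{1}{n}$ prefactor, and of whether each norm is realized as a sum of $n$ terms (for $\grw f$) or as a block-concatenated Frobenius norm (for $\grPsi f$), is exactly what produces the different $n$-scalings in the statement.

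For the first bound I would subtract the two $\w$-gradients, apply the triangle inequality across the $n$ terms, and invoke the $L_1$-Lipschitz estimate for each $f^i$; the $\frac{1}{n}$ and the $n$ identical terms cancel, leaving the clean constant $L_1$. For the second bound I would again split $\grw f(\w, \Psi) - \grw f(\w, \Psi')$ into $n$ per-node differences, each bounded by $L_{12} \norm{\bpsi^i - {\bpsi'}^i}_F$, and then pass from $\frac{1}{n} \sum_i \norm{\bpsi^i - {\bpsi'}^i}_F$ to $\norm{\Psi - \Psi'}_F$ by Cauchy–Schwarz, using $\sum_i \norm{\bpsi^i - {\bpsi'}^i}_F \le \sqrt{n}\, \norm{\Psi - \Psi'}_F$ together with $\norm{\Psi - \Psi'}_F^2 = \sum_i \norm{\bpsi^i - {\bpsi'}^i}_F^2$; this yields the factor $\frac{L_{12}}{\sqrt{n}}$.

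For the two bounds on $\grPsi f$ I would work with squared Frobenius norms directly, since the target object is block-concatenated. Writing $\norm{\grPsi f(\w, \Psi) - \grPsi f(\w', \Psi)}_F^2 = \frac{1}{n^2} \sum_i \norm{\grpsi f^i(\w, \bpsi^i) - \grpsi f^i(\w', \bpsi^i)}_F^2$ and bounding each summand by $L_{21}^2 \norm{\w - \w'}^2$ gives $\frac{L_{21}^2}{n} \norm{\w - \w'}^2$, and taking the square root produces $\frac{L_{21}}{\sqrt{n}}$. The last bound is identical in form: the same decomposition with the $L_2$-Lipschitz estimate gives $\frac{1}{n^2} \sum_i L_2^2 \norm{\bpsi^i - {\bpsi'}^i}_F^2 = \frac{L_2^2}{n^2} \norm{\Psi - \Psi'}_F^2$, hence the factor $\frac{L_2}{n}$.

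No single step is genuinely hard; the only place to be careful is bookkeeping the $n$-dependence, i.e.\ remembering that $\grPsi f$ carries a $\frac{1}{n}$ in front of a block-concatenated vector (which is why its two Lipschitz constants scale like $n^{-1/2}$ and $n^{-1}$), whereas $\grw f$ is an average that preserves the constant $L_1$ in the pure-$\w$ term but only gains a factor $n^{-1/2}$ from Cauchy–Schwarz in the cross term. Getting these scalings right is precisely what the downstream convergence analysis of Theorems \ref{Thm: PL-PL convergence} and \ref{Thm: nonconvex-PL convergence} relies on, so that is where I would focus the verification.
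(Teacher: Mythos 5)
Your proposal is correct and follows essentially the same route as the paper: exploit the block-separable structure $f(\w,\Psi)=\frac{1}{n}\sum_{i=1}^n f^i(\w,\bpsi^i)$, identify $\grw f$ as an average and $\grPsi f$ as a $\frac{1}{n}$-scaled block concatenation, and reduce each bound to the per-node estimates of Assumption \ref{assumption:smooth} (triangle inequality plus Cauchy--Schwarz for the $\grw f$ bounds, squared block-wise Frobenius norms for the $\grPsi f$ bounds). The paper only writes out the fourth inequality explicitly, declaring the rest analogous, and your treatment of that case matches it exactly while correctly supplying the $n$-scalings for the other three.
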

\begin{proof}
We defer the proof to Section \ref{proof Lemma: f Lipschitz}.
\end{proof}
Recall the definition of the function $\Phi(\cdot)$, that is,
\begin{align}
    \Phi(\w) 
    \coloneqq 
    \max_{\Psi} f(\w, \Psi)
    =
    \max_{\bpsi^1, \cdots, \bpsi^n} \frac{1}{n} \sum_{i \in [n]} f^i(\w,\bpsi^i)
    =
    \max_{(\Lambda^1, \delta^1), \cdots, (\Lambda^n, \delta^n)} \frac{1}{n} \sum_{i \in [n]} f^i(\w,\Lambda^i, \delta^i).
\end{align}
Next lemma shows that $\Phi$ has Lipschitz gradients and characterizes its parameter.
\begin{lemma}[\cite{nouiehed2019solving}] \label{lemma:L_Phi}
If Assumptions \ref{assumption:smooth} and \ref{assumption:PL} (ii) hold, that is, the local objectives have Lipschitz gradients and $-f(\w, \cdot)$ is $\mu_2$-PL, then we have
\begin{align}
    \gr \Phi(\w)
    =
    \grw f(\w, \Psi^*(\w)),
\end{align}
where $\Psi^*(\w) \in \argmax_{\Psi} f(\w, \Psi)$ for any $\w$. Moreover, $\Phi$ has Lipschitz gradients with parameter $L_{\Phi} = L_1 + \frac{L_{12} L_{21}}{2n\mu_2}$.
\end{lemma}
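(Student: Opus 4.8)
The plan is to treat this as a Danskin/envelope-type statement and to obtain the gradient formula, differentiability, and the smoothness constant together by sandwiching $\Phi$ between two quadratics, with the PL condition in Assumption~\ref{assumption:PL}(ii) replacing the strong concavity used in the classical envelope argument. Fix $\w,\w'$ and let $\Psi^*(\w)\in\argmax_{\Psi}f(\w,\Psi)$, so that $\grPsi f(\w,\Psi^*(\w))=0$. The easy half is a linear lower support: since $\Psi^*(\w)$ is feasible at $\w'$, we have $\Phi(\w')\ge f(\w',\Psi^*(\w))$, and the $L_1$-Lipschitzness of $\grw f$ in $\w$ from Lemma~\ref{Lemma: f Lipschitz} gives $\Phi(\w')\ge \Phi(\w)+\langle \grw f(\w,\Psi^*(\w)),\w'-\w\rangle-\tfrac{L_1}{2}\norm{\w'-\w}^2$ for every maximizer $\Psi^*(\w)$.

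The crux is the matching upper support, which forces me to control how fast the set of maximizers moves with $\w$. Writing $\mathcal M(\w)\coloneqq\argmax_{\Psi}f(\w,\Psi)$, I would first record that the PL condition on $-f(\w,\cdot)$ implies the quadratic-growth (error-bound) inequality $\Phi(\w)-f(\w,\Psi)\ge \tfrac{\mu_2}{2}\operatorname{dist}(\Psi,\mathcal M(\w))^2$. Evaluating the PL inequality for $f(\w',\cdot)$ at the point $\Psi^*(\w)$ and using $\grPsi f(\w,\Psi^*(\w))=0$ together with the cross-Lipschitz bound $\norm{\grPsi f(\w',\Psi^*(\w))-\grPsi f(\w,\Psi^*(\w))}_F\le \tfrac{L_{21}}{\sqrt n}\norm{\w-\w'}$ from Lemma~\ref{Lemma: f Lipschitz}, I obtain $\Phi(\w')-f(\w',\Psi^*(\w))\le \tfrac{L_{21}^2}{2n\mu_2}\norm{\w-\w'}^2$; combined with quadratic growth this yields the maximizer-stability estimate $\operatorname{dist}(\Psi^*(\w),\mathcal M(\w'))\le \tfrac{L_{21}}{\sqrt n\,\mu_2}\norm{\w-\w'}$. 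This step is the main obstacle: because PL (unlike strong concavity) does not make the maximizer unique, I cannot invoke the implicit function theorem and must instead argue with the distance to the maximizer \emph{set}, which is exactly why the quadratic-growth consequence of PL is needed.

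Finally I would assemble the upper support. Writing $\Phi(\w')=f(\w',\Psi^*(\w'))$ and using $L_1$-smoothness in $\w$ followed by the suboptimality $f(\w,\Psi^*(\w'))\le \Phi(\w)$, then replacing $\Psi^*(\w')$ by $\Psi^*(\w)$ in the linear term via the $\tfrac{L_{12}}{\sqrt n}$-Lipschitzness of $\grw f$ in $\Psi$ and the maximizer-stability estimate, gives $\Phi(\w')\le \Phi(\w)+\langle \grw f(\w,\Psi^*(\w)),\w'-\w\rangle + O(\norm{\w'-\w}^2)$. The lower and upper supports share the same linear term, so $\Phi$ is differentiable with $\gr\Phi(\w)=\grw f(\w,\Psi^*(\w))$, and since the left-hand side is independent of the chosen maximizer the formula holds for any $\Psi^*(\w)\in\mathcal M(\w)$. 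For the Lipschitz constant I would bound $\norm{\gr\Phi(\w)-\gr\Phi(\w')}$ directly by the triangle inequality, splitting $\grw f(\w,\Psi^*(\w))-\grw f(\w',\Psi^*(\w'))$ into a $\w$-difference controlled by $L_1$ and a $\Psi$-difference controlled by $\tfrac{L_{12}}{\sqrt n}\operatorname{dist}(\Psi^*(\w),\mathcal M(\w'))$; plugging in the maximizer-stability estimate and tracking the constants as in \cite{nouiehed2019solving} delivers $L_\Phi=L_1+\tfrac{L_{12}L_{21}}{2n\mu_2}$. Everything outside the maximizer-stability estimate is routine bookkeeping with the constants from Lemma~\ref{Lemma: f Lipschitz}.
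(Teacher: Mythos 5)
Your route --- a Danskin-type envelope argument, PL $\Rightarrow$ quadratic growth, stability of the argmax \emph{set}, then a sandwich/triangle-inequality step --- is the right one, and it is essentially the argument behind the paper's citation; the paper itself offers no proof beyond invoking Lemma A.5 of \cite{nouiehed2019solving} and substituting the block constants $L_1, L_{12}/\sqrt{n}, L_{21}/\sqrt{n}, L_2/n$ from Lemma \ref{Lemma: f Lipschitz}. Your intermediate steps are correct: writing $\mathcal{M}(\w)\coloneqq\argmax_{\Psi}f(\w,\Psi)$, the lower support with slope $\grw f(\w,\Psi^*(\w))$, the bound $\Phi(\w')-f(\w',\Psi^*(\w))\le \frac{L_{21}^2}{2n\mu_2}\norm{\w-\w'}^2$, the quadratic-growth inequality with constant $\mu_2/2$, and hence the stability estimate $\operatorname{dist}\bigl(\Psi^*(\w),\mathcal{M}(\w')\bigr)\le \frac{L_{21}}{\sqrt{n}\,\mu_2}\norm{\w-\w'}$ are all valid, and the set-valued treatment is exactly what PL (as opposed to strong concavity) requires.

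There is, however, a genuine gap at the very end, and it is not mere bookkeeping. Combining your stability estimate with the $L_{12}/\sqrt{n}$ cross-Lipschitz bound via the triangle inequality yields $\norm{\gr\Phi(\w)-\gr\Phi(\w')}\le \bigl(L_1+\frac{L_{12}L_{21}}{n\mu_2}\bigr)\norm{\w-\w'}$, whose second term is \emph{twice} the one claimed in the lemma, and the closing appeal to ``tracking the constants as in \cite{nouiehed2019solving}'' cannot recover the missing $1/2$. Indeed, no argument can: take $n=1$ and scalar variables with $f(\w,\psi)=\w\psi-\frac{\mu_2}{2}\psi^2$. This satisfies Assumption \ref{assumption:smooth} with $L_1$ arbitrarily small, $L_{12}=L_{21}=1$, $L_2=\mu_2$, and $-f(\w,\cdot)$ satisfies Assumption \ref{assumption:PL}(ii) with constant exactly $\mu_2$; yet $\Phi(\w)=\frac{\w^2}{2\mu_2}$, so $\gr\Phi(\w)=\w/\mu_2$ has Lipschitz constant $\frac{1}{\mu_2}=\frac{L_{12}L_{21}}{n\mu_2}$, strictly exceeding the claimed $L_1+\frac{L_{12}L_{21}}{2n\mu_2}$. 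The same example shows your stability constant is tight (there $\Psi^*(\w)=\w/\mu_2$), so your final constant $L_1+\frac{L_{12}L_{21}}{n\mu_2}$ is the correct and best possible one. The honest conclusion of your proof is therefore the lemma with $L_{\Phi}=L_1+\frac{L_{12}L_{21}}{n\mu_2}$; the factor-$2$ smaller constant stated in the paper (inherited from the citation) is unobtainable, and adopting the larger constant only changes absolute constants in the step-size conditions and bounds of Theorems \ref{Thm: PL-PL convergence} and \ref{Thm: nonconvex-PL convergence}, nothing qualitative.
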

\begin{proof}
We defer the proof to Section \ref{proof lemma:L_Phi}.
\end{proof}
Next lemma shows the contraction of the sequence $\{\E [ \Phi(\wbar_t) ]\}_{t \geq 0}$ when running the update rule of \texttt{FedRobust} method in Algorithm \ref{Alg: FedRobust}. Please refer to Table \ref{Table: Notations} to recall the definition of $h_t$ and $g_t$.
\begin{lemma} \label{Lemma: Phi contraction}
If Assumptions \ref{assumption:stoch-gradients} and \ref{assumption:smooth} hold, then the iterates of {\normalfont \texttt{FedRobust}} satisfy the following contraction inequality for any iteration $t \geq 0$
\begin{align}
    \E [ \Phi(\wbar_{t+1}) ] - \E [ \Phi(\wbar_{t}) ]
    \leq
    - \frac{\eta_1}{2} \E \norm{\gr \Phi(\wbar_{t})}^2 
    +
    \frac{\eta_1}{2} h_t
    -
    \frac{\eta_1}{2} \left( 1 - \eta_1 L_{\Phi} \right) g_t
    +
    \eta_1^2 \frac{L_{\Phi}}{2} \frac{\sigmaw}{n}.
\end{align}
\end{lemma}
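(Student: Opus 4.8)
The plan is to treat this as a standard smooth-descent estimate for the function $\Phi$, with one crucial preliminary observation: the average iterate $\wbar_t$ evolves by a single clean recursion, irrespective of whether iteration $t$ is a local step or a synchronization step. Inspecting the two branches of Algorithm \ref{Alg: FedRobust}, in a local step each node updates $\w^i_{t+1} = \w^i_t - \eta_1 \sgrw f^i(\w^i_t, \bpsi^i_t)$, whereas in a communication step every node receives the average of exactly these same quantities; either way, averaging over $i$ gives
\begin{align}
    \wbar_{t+1} = \wbar_t - \eta_1 \frac{1}{n} \sum_{i=1}^n \sgrw f^i(\w^i_t, \bpsi^i_t). \nonumber
\end{align}
Writing $\tilde{d}_t \coloneqq \frac{1}{n}\sum_i \sgrw f^i(\w^i_t, \bpsi^i_t)$ and its conditional mean $d_t \coloneqq \frac{1}{n}\sum_i \grw f^i(\w^i_t, \bpsi^i_t)$, I would note that Assumption \ref{assumption:stoch-gradients} gives $\E[\tilde{d}_t \mid \mathcal{F}_t] = d_t$, where $\mathcal{F}_t$ denotes the history through iteration $t$.

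Next I would invoke the $L_{\Phi}$-Lipschitzness of $\gr \Phi$ established in Lemma \ref{lemma:L_Phi} and apply the associated quadratic upper bound
\begin{align}
    \Phi(\wbar_{t+1}) \le \Phi(\wbar_t) - \eta_1 \langle \gr\Phi(\wbar_t), \tilde{d}_t \rangle + \frac{\eta_1^2 L_{\Phi}}{2} \norm{\tilde{d}_t}^2. \nonumber
\end{align}
Taking the conditional expectation over $\mathcal{F}_t$ replaces $\tilde{d}_t$ by $d_t$ in the inner product by unbiasedness, and for the quadratic term I would use the bias--variance split $\E[\norm{\tilde{d}_t}^2 \mid \mathcal{F}_t] = \norm{d_t}^2 + \E[\norm{\tilde{d}_t - d_t}^2 \mid \mathcal{F}_t]$. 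Because the per-node stochastic gradient noises are independent and zero-mean, the cross terms vanish and the variance collapses to $\frac{1}{n^2}\sum_i \E\norm{\sgrw f^i - \grw f^i}^2 \le \sigmaw/n$ by Assumption \ref{assumption:stoch-gradients}.

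To expose the three target terms I would rewrite the inner product via the polarization identity $\langle a, b \rangle = \tfrac{1}{2}(\norm{a}^2 + \norm{b}^2 - \norm{a-b}^2)$ with $a = \gr\Phi(\wbar_t)$ and $b = d_t$. This produces a $-\tfrac{\eta_1}{2}\norm{\gr\Phi(\wbar_t)}^2$ term, a $+\tfrac{\eta_1}{2}\norm{\gr\Phi(\wbar_t) - d_t}^2$ term, and a $-\tfrac{\eta_1}{2}\norm{d_t}^2$ term; the last combines with the $+\tfrac{\eta_1^2 L_{\Phi}}{2}\norm{d_t}^2$ coming from the smoothness step to give $-\tfrac{\eta_1}{2}(1 - \eta_1 L_{\Phi})\norm{d_t}^2$. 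Taking a final outer expectation and identifying $h_t = \E\norm{\gr\Phi(\wbar_t) - d_t}^2$ and $g_t = \E\norm{d_t}^2$ from Table \ref{Table: Notations} yields precisely the claimed inequality.

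I do not anticipate a genuine obstacle here, as the core computation is a textbook smooth-descent argument. The one conceptual point worth flagging is the first observation, namely that the synchronization step leaves the \emph{average} update rule unchanged: this quarantines all heterogeneity and local-drift effects inside the term $h_t$ (to be controlled separately in the main proofs) rather than letting them corrupt this inequality. The only calculation requiring care is confirming that the cross-node noise terms genuinely cancel, which rests on the independence of the stochastic gradients across nodes.
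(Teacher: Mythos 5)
Your proof is correct and takes essentially the same route as the paper's: both apply the descent lemma for the $L_{\Phi}$-smooth function $\Phi$ (Lemma \ref{lemma:L_Phi}) to the averaged-iterate recursion $\wbar_{t+1} = \wbar_t - \eta_1 \frac{1}{n}\sum_{i}\sgrw f^i(\w^i_t,\bpsi^i_t)$, then use unbiasedness, the variance bound $\sigmaw/n$, and the identity $2\langle \bm{a},\bm{b}\rangle = \Vert \bm{a}\Vert^2 + \Vert \bm{b}\Vert^2 - \Vert \bm{a}-\bm{b}\Vert^2$ to extract the $h_t$ and $g_t$ terms. The only differences are expository: you make explicit two points the paper leaves implicit, namely that the averaged recursion holds identically in both the local-update and communication branches of Algorithm \ref{Alg: FedRobust}, and that independence of the stochastic gradient noise across nodes is what yields the $1/n$ factor in the variance term.
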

\begin{proof}
We defer the proof to Section \ref{proof Lemma: Phi contraction}.
\end{proof}
Next lemma further bounds $h_t$ w.r.t. the two sequences $b_t$ and $e_t$.
\begin{lemma} \label{Lemma: h_t bound}
If Assumptions \ref{assumption:smooth} and \ref{assumption:PL} (ii) hold, that is, the local objectives have Lipschitz gradients and $-f(\w, \cdot)$ is $\mu_2$-PL, then we have
\begin{align} \label{eq:h_t-final}
    h_t
    \leq
    \frac{4 L_{12}^2}{\mu_2 n} b_t 
    +
    2 L_1^2 e_t.
\end{align}
\end{lemma}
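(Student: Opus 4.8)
The plan is to invoke Lemma \ref{lemma:L_Phi} to rewrite $\gr\Phi(\wbar_t)$ as a local-gradient expression, and then split the difference defining $h_t$ into a piece controlled by the PL condition and a piece controlled by the consensus error $e_t$. Concretely, Lemma \ref{lemma:L_Phi} gives $\gr\Phi(\wbar_t) = \grw f(\wbar_t, \Psi^*_t)$ for any maximizer $\Psi^*_t \in \argmax_{\Psi} f(\wbar_t, \Psi)$; since this identity holds for \emph{every} maximizer, I am free to take $\Psi^*_t$ to be a closest maximizer to $\Psi_t$ (a projection of $\Psi_t$ onto the maximizer set). Recalling also that $f = \frac{1}{n}\sum_i f^i$, so that $\grw f(\wbar_t, \Psi_t) = \frac{1}{n}\sum_{i\in[n]}\grw f^i(\wbar_t, \bpsi^i_t)$, I would insert this quantity into the norm defining $h_t$ and apply $\norm{a+b}^2 \le 2\norm{a}^2 + 2\norm{b}^2$ to obtain
\begin{align}
    h_t \le 2\,\E\norm{\grw f(\wbar_t,\Psi^*_t) - \grw f(\wbar_t,\Psi_t)}^2 + 2\,\E\norm{\grw f(\wbar_t,\Psi_t) - \tfrac{1}{n}\textstyle\sum_{i\in[n]}\grw f^i(\w^i_t,\bpsi^i_t)}^2. \nonumber
\end{align}
The first term isolates the suboptimality of $\Psi_t$ (the $\w$-argument is frozen at $\wbar_t$), while the second isolates the deviation of the local models $\w^i_t$ from the average $\wbar_t$.

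For the second term I would write the difference as $\frac{1}{n}\sum_i[\grw f^i(\wbar_t,\bpsi^i_t) - \grw f^i(\w^i_t,\bpsi^i_t)]$, apply Jensen's inequality to pull the squared norm inside the average, and then use the $L_1$-Lipschitz gradient bound of Assumption \ref{assumption:smooth} to each summand. This produces $L_1^2 \cdot \frac{1}{n}\sum_i \norm{\wbar_t - \w^i_t}^2$, whose expectation is exactly $L_1^2 e_t$ by the definition of $e_t$, contributing $2L_1^2 e_t$ to the bound.

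For the first term, Lemma \ref{Lemma: f Lipschitz} bounds the gradient difference by $\frac{L_{12}}{\sqrt{n}}\norm{\Psi^*_t - \Psi_t}_F$, so this term is at most $\frac{2L_{12}^2}{n}\E\norm{\Psi^*_t - \Psi_t}^2_F$. The step I expect to be the main obstacle is relating the squared distance $\norm{\Psi^*_t - \Psi_t}^2_F$ to the function gap $b_t$, since the PL inequality only bounds gradient norms, not distances, by function gaps. I would bridge this with the standard fact that a $\mu$-PL function satisfies the quadratic-growth (error-bound) condition $g(x) - g^* \ge \frac{\mu}{2}\,\norm{x - x_p}^2$ with the same constant, where $x_p$ is a projection onto the minimizer set. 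Applying this to $g \coloneqq -f(\wbar_t,\cdot)$, which is $\mu_2$-PL by Assumption \ref{assumption:PL}(ii) and whose minimizer set coincides with the maximizer set of $f(\wbar_t,\cdot)$, and taking $\Psi^*_t$ to be the projection of $\Psi_t$, yields
\begin{align}
    \Phi(\wbar_t) - f(\wbar_t,\Psi_t) = \max_{\Psi} f(\wbar_t,\Psi) - f(\wbar_t,\Psi_t) \ge \frac{\mu_2}{2}\,\norm{\Psi^*_t - \Psi_t}^2_F. \nonumber
\end{align}
Rearranging and taking expectations gives $\E\norm{\Psi^*_t - \Psi_t}^2_F \le \frac{2}{\mu_2} b_t$, so the first term is at most $\frac{4L_{12}^2}{\mu_2 n} b_t$. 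Adding the two contributions yields the claimed inequality \eqref{eq:h_t-final}. The only delicate points are ensuring the maximizer set is nonempty and closed so the projection exists (which follows from the attainment of $\max_\Psi f(\wbar_t,\Psi)$ implicit in Assumption \ref{assumption:PL}(ii)) and justifying the PL-to-quadratic-growth passage, which I would include as a short self-contained argument via the gradient-flow path-length estimate.
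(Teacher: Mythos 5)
Your proof is correct and follows essentially the same route as the paper's: the same add-and-subtract of $\grw f(\wbar_t,\Psi_t)$, the same $\norm{a+b}^2\le 2\norm{a}^2+2\norm{b}^2$ split, the $L_{12}/\sqrt{n}$-Lipschitzness of $\grw f(\wbar_t,\cdot)$ from Lemma \ref{Lemma: f Lipschitz} for the first term, and $L_1$-Lipschitzness plus Jensen for the second, yielding identical constants. You are in fact more careful than the paper on the one delicate step: the paper passes from $\E\norm{\Psi^*(\wbar_t)-\Psi_t}^2_F$ to $\frac{2}{\mu_2}\,b_t$ by citing Assumption \ref{assumption:PL}(ii) directly, whereas this genuinely requires the PL-to-quadratic-growth implication with $\Psi^*(\wbar_t)$ taken as the projection of $\Psi_t$ onto the argmax set, which you correctly identify and justify via the gradient-flow path-length argument.
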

\begin{proof}
We defer the proof to Section \ref{proof Lemma: h_t bound}.
\end{proof}
Next lemma establishes a contraction bound on the sequence $b_t$.
\begin{lemma} \label{Lemma: b_t contraction}
If Assumptions \ref{assumption:stoch-gradients}, \ref{assumption:smooth} and \ref{assumption:PL} (ii) hold, then the sequence of $\{b_t\}_{t \geq 0}$ generated by the {\normalfont \texttt{FedRobust}} iterations with $\eta_2 \leq 1/L_2$ satisfies the following contraction bound:
\begin{align}  \label{eq:b_t-final}
    b_{t+1} 
    &\leq
    (1 - \mu_2 \eta_2 n) \left( 1 + \eta_1 \frac{4 L_{12}^2}{\mu_2 n} \right) b_t
    +
    \frac{\eta_1}{2} \E \norm{\gr \Phi(\wbar_{t})}^2 
    +
    \frac{\eta_1^2}{2} \left( L_1 + L_{\Phi} + 2 \eta_2 L_{21}^2 \right) g_t \\
    &\quad +
    \left( \eta_1 L_1^2 + \eta_2 L_{21}^2 \right) e_t 
    +
    \frac{\eta_1^2}{2} \left( L_1 + L_{\Phi} + 2 \eta_2 L_{21}^2 \right) \frac{\sigmaw}{n} 
    +
    \frac{\eta_2^2}{2} L_2 \sigmapsi,
\end{align}
where $L_{\Phi}$ is the Lipschitz gradient parameter of the function $\Phi(\cdot)$ characterized in Lemma \ref{lemma:L_Phi}.
\end{lemma}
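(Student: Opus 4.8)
The plan is to track how a single \texttt{FedRobust} iteration changes the suboptimality gap $\Phi(\wbar_t) - f(\wbar_t,\Psi_t)$, cleanly separating the effect of the ascent step on $\Psi$ (which must shrink the gap) from the effect of the local descent-plus-averaging step on $\w$ (which perturbs it). I would start from
\begin{align}
    b_{t+1}
    =
    \E\bigl[\Phi(\wbar_{t+1}) - f(\wbar_{t+1},\Psi_{t+1})\bigr], \nonumber
\end{align}
freeze $\w=\wbar_{t+1}$, and analyze the ascent move $\Psi_t\to\Psi_{t+1}$ as an inexact gradient-ascent step on the map $\Psi\mapsto f(\wbar_{t+1},\Psi)$, which is $(L_2/n)$-smooth by Lemma~\ref{Lemma: f Lipschitz} and $\mu_2$-PL by Assumption~\ref{assumption:PL}(ii). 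Expanding with the smoothness inequality and taking expectation over the stochastic gradients (unbiasedness and $\sigmapsi$-bounded variance from Assumption~\ref{assumption:stoch-gradients}) yields the leading behavior plus the variance term $\tfrac{\eta_2^2}{2}L_2\sigmapsi$.

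The key quantitative observation is that the block-$i$ component of $\grPsi f$ equals $\tfrac1n\grpsi f^i$, so the per-node update $\eta_2\sgrbpsi f^i$ corresponds to an \emph{effective} step of size $\eta_2 n$ on the concatenated variable $\Psi$. Under the condition $\eta_2\le 1/L_2$, the standard one-step PL ascent estimate (using $\|\grPsi f\|_F^2\ge 2\mu_2(\Phi-f)$) then produces precisely the contraction factor $(1-\mu_2\eta_2 n)$ acting on the gap evaluated at the \emph{updated} iterate $\wbar_{t+1}$. Next I would transfer this gap back from $\wbar_{t+1}$ to $\wbar_t$: a first-order expansion of $f$ and of $\Phi$ under the descent move $\wbar_t\to\wbar_{t+1}$, with Young's inequality applied to the cross-term against $\gr\Phi(\wbar_t)$, generates the $\tfrac{\eta_1}{2}\E\norm{\gr\Phi(\wbar_t)}^2$ contribution together with the $g_t$ coefficient and the $\tfrac{\eta_1^2}{2}(\cdots)\tfrac{\sigmaw}{n}$ noise.

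Two gradient mismatches must then be absorbed. First, the ascent step uses $\grpsi f^i(\w^i_t,\cdot)$ rather than $\grpsi f^i(\wbar_{t+1},\cdot)$; splitting $\w^i_t-\wbar_{t+1}$ through $\wbar_t$ and bounding the resulting differences by $L_{21}$ (Assumption~\ref{assumption:smooth}) and Young's inequality yields the $\eta_2 L_{21}^2$ contributions to both the $e_t$ and $g_t$ coefficients, matching $(\eta_1 L_1^2+\eta_2 L_{21}^2)e_t$ and the $2\eta_2 L_{21}^2$ inside the $g_t$ factor. Second, the multiplicative correction $(1+\eta_1\tfrac{4L_{12}^2}{\mu_2 n})$ on $b_t$ is where I would invoke Lemma~\ref{Lemma: h_t bound}: the deviation between $\grPsi f$ at the two $\w$-anchors is exactly the object controlled by $h_t\le\tfrac{4L_{12}^2}{\mu_2 n}b_t+2L_1^2 e_t$, so re-expressing that cross-term converts the $\w$-perturbation of the ascent into a small multiple of $b_t$ (producing the $\eta_1\tfrac{4L_{12}^2}{\mu_2 n}$ factor) plus more $e_t$.

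The main obstacle is the federated coupling. Because the gradients are evaluated at the drifted local iterates $\w^i_t$ rather than at $\wbar_t$, and because $\w$ is only periodically averaged, the single ascent step is not a clean ascent on $f(\wbar_t,\cdot)$, and $\wbar_{t+1}$ itself mixes descent directions from all nodes. The delicate bookkeeping is to carry each discrepancy along the chain $\w^i_t\to\wbar_t\to\wbar_{t+1}$ while keeping every term at its correct step-size order ($\eta_1$, $\eta_2$, $\eta_1^2$, $\eta_1\eta_2$, $\eta_2^2$) and splitting each Young's-inequality application so the residuals land on $b_t$, $g_t$, and $e_t$ with exactly the stated constants; it is this accounting, rather than any single inequality, that is the technical heart of the lemma.
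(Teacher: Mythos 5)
Your proposal follows the paper's proof essentially step for step: the paper likewise applies $(L_2/n)$-smoothness and the $\mu_2$-PL condition to the ascent step anchored at $\wbar_{t+1}$ (producing the $(1-\mu_2\eta_2 n)$ factor and the $\tfrac{\eta_2^2}{2}L_2\sigmapsi$ noise), bounds the mismatch from evaluating the $\Psi$-gradients at the drifted local iterates via $L_{21}$ and the split $\w^i_t\to\wbar_t\to\wbar_{t+1}$ (yielding the $\eta_2 L_{21}^2$ contributions to $e_t$ and $g_t$), and then transfers the gap back to $\wbar_t$ through the decomposition $\Phi(\wbar_{t+1})-f(\wbar_{t+1},\Psi_t)=\bigl[\Phi(\wbar_t)-f(\wbar_t,\Psi_t)\bigr]+\bigl[f(\wbar_t,\Psi_t)-f(\wbar_{t+1},\Psi_t)\bigr]+\bigl[\Phi(\wbar_{t+1})-\Phi(\wbar_t)\bigr]$ using Lemma~\ref{Lemma: Phi contraction}, Lemma~\ref{Lemma: h_t bound}, and Young's inequality, exactly as you outline. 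One minor mislabel: the $\bigl(1+\eta_1\tfrac{4L_{12}^2}{\mu_2 n}\bigr)$ correction comes from deviations of the \emph{minimization} gradient $\grw f$ across $\Psi$-anchors ($\Psi_t$ versus $\Psi^*(\wbar_t)$, controlled by $L_{12}$-Lipschitzness and PL quadratic growth, half of it routed through $h_t$), not from $\grPsi f$ across $\w$-anchors, but the machinery you invoke is the correct one and the constants land as stated.
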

\begin{proof}
We defer the proof to Section \ref{proof Lemma: b_t contraction}.
\end{proof}
Next lemma bounds $e_t$, that is the average deviation of local parameter models from their average.
\begin{lemma} \label{lemma:e_t g_l bound}
If Assumptions \ref{assumption:bounded-degree}, \ref{assumption:stoch-gradients} and \ref{assumption:smooth} hold and the step-size $\eta_1$ satisfies $32 \eta_1^2 (\tau - 1)^2 L_1^2 \leq 1$, then the sequence $e_t = \frac{1}{n} \sum_{i \in [n]} \E \norm{\w^i_t - \wbar_{t}}^2$ is bounded as follows
\begin{align}
    e_{t}
    &\leq
    16 \eta_1^2 (\tau - 1)^2 \rho^2 
    +
    4 \eta_1^2 (\tau - 1) (n+1) \frac{\sigmaw}{n}
    +
    20 \eta_1^2 (\tau - 1) \sum_{l=t_c+1}^{t-1} g_l,
\end{align}
where $t_c$ denotes the index of the most recent server-worker communication, i.e. $t_c = \floor*{\frac{t}{\tau}} \tau$ and we also denote $\rho^2 \coloneqq 3\rho_f^2 + 6 L_{12}^2 (\epsilon_1^2 + \epsilon_2^2)$.
\end{lemma}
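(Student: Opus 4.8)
The plan is to unroll the local updates back to the most recent averaging step $t_c=\floor*{\frac{t}{\tau}}\tau$ and exploit that \texttt{FedRobust} synchronizes the models at every communication, so $\w^i_{t_c}=\wbar_{t_c}$ for all $i$ and hence $e_{t_c}=0$. Writing $G^i_s\coloneqq\sgrw f^i(\w^i_s,\bpsi^i_s)$ and $\bar G_s\coloneqq\frac1n\sum_{j}G^j_s$, the descent rule together with $\wbar_{s+1}=\wbar_s-\eta_1\bar G_s$ yields the telescoping identity $\w^i_t-\wbar_t=-\eta_1\sum_{s=t_c}^{t-1}(G^i_s-\bar G_s)$. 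I would then bound $\E\norm{\w^i_t-\wbar_t}^2\le 2\eta_1^2\E\norm{\sum_s G^i_s}^2+2\eta_1^2\E\norm{\sum_s \bar G_s}^2$ and average over $i$. This simple split is what eventually produces the $\frac{n+1}{n}$ factor in the variance term: the first piece carries each node's own gradient noise $\sigmaw$, while the second carries the averaged noise $\sigmaw/n$.

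For each of the two pieces I would separate the (conditionally) deterministic full gradient from the zero-mean stochastic noise. On the deterministic sums I apply Cauchy--Schwarz (Jensen), which costs a factor equal to the number of summands $t-t_c\le\tau-1$; on the noise sums I instead use unbiasedness and independence across iterations (Assumption~\ref{assumption:stoch-gradients}), so the cross terms vanish and the variances merely add. This asymmetry is exactly why the stochastic term carries a single factor $(\tau-1)$ while the deterministic terms carry $(\tau-1)^2$. The surviving deterministic quantities are $\frac1n\sum_i\E\norm{\grw f^i(\w^i_s,\bpsi^i_s)}^2$ and $\E\norm{\bar g_s}^2=g_s$, where $\bar g_s\coloneqq\frac1n\sum_i\grw f^i(\w^i_s,\bpsi^i_s)$; decomposing the former as its cross-node variance plus $g_s$ surfaces the $\sum_{s}g_s$ term appearing in the statement.

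The heart of the argument is bounding the cross-node gradient variance $\frac1n\sum_i\E\norm{g^i_s-\bar g_s}^2$ with $g^i_s\coloneqq\grw f^i(\w^i_s,\bpsi^i_s)$. The difficulty is that Assumption~\ref{assumption:bounded-degree} only controls dissimilarity at the unperturbed transformation $\bpsi^i=(I,0)$ \emph{and} at a model $\w$ shared across nodes, whereas the iterates have node-specific $\bpsi^i_s\neq(I,0)$ and node-specific $\w^i_s\neq\wbar_s$. I would bridge this gap by centering the variance at the common vector $\grw f(\wbar_s,\Psi^{(I,0)})$ (using that the average minimizes $\sum_i\norm{\cdot-c}^2$) and splitting each summand via $\norm{a+b+c}^2\le 3(\norm a^2+\norm b^2+\norm c^2)$ into a perturbation part $\grw f^i(\w^i_s,\bpsi^i_s)-\grw f^i(\w^i_s,(I,0))$, a consensus part $\grw f^i(\w^i_s,(I,0))-\grw f^i(\wbar_s,(I,0))$, and a heterogeneity part $\grw f^i(\wbar_s,(I,0))-\grw f(\wbar_s,\Psi^{(I,0)})$. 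The $L_{12}$-Lipschitzness in $\bpsi$ together with $\norm{\bpsi^i_s-(I,0)}_F^2\le\epsilon_1^2+\epsilon_2^2$ bounds the first by $L_{12}^2(\epsilon_1^2+\epsilon_2^2)$; the $L_1$-Lipschitzness in $\w$ bounds the second by $L_1^2 e_s$; and Assumption~\ref{assumption:bounded-degree}, now legitimately applied at the \emph{common} point $\wbar_s$, bounds the average of the third by $\rho_f^2$. Together these give $\frac1n\sum_i\E\norm{g^i_s-\bar g_s}^2\lesssim\rho^2+3L_1^2 e_s$ with $\rho^2=3\rho_f^2+6L_{12}^2(\epsilon_1^2+\epsilon_2^2)$.

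Collecting everything yields a bound of the form $e_t\le c_1\eta_1^2(\tau-1)^2\rho^2+c_2\eta_1^2(\tau-1)\frac{n+1}{n}\sigmaw+c_3\eta_1^2(\tau-1)\sum_{s=t_c+1}^{t-1}g_s+c_4\eta_1^2(\tau-1)L_1^2\sum_{s=t_c+1}^{t-1}e_s$, i.e. a recursion in which $e_t$ depends on the earlier consensus errors $e_s$ from the same local window. Closing this recursion is the step I expect to be the main obstacle: since all indices lie in a window of length at most $\tau-1$, I would bound $\sum_s e_s$ (or $\max_s e_s$) by the remaining terms and absorb the self-referential contribution using the step-size condition $32\eta_1^2(\tau-1)^2 L_1^2\le 1$, which forces the coefficient of $\sum_s e_s$ to stay a constant below one. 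The resulting geometric absorption inflates the leading constants by a bounded factor (accounting for the $16$, $4$, and $20$ in the statement) and eliminates the $e_s$ terms, leaving precisely the claimed inequality.
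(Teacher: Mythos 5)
Your proposal is correct and follows essentially the same route as the paper's own proof: unroll the local updates from the last synchronization, split each deviation into the node's own accumulated gradient and the averaged one (giving the $(n+1)\sigmaw/n$ noise term), treat stochastic noise by unbiasedness (single $\tau-1$ factor) and deterministic sums by Cauchy--Schwarz ($(\tau-1)^2$ factor), reduce the cross-node gradient diversity to Assumption \ref{assumption:bounded-degree} at the common iterate with $(I,0)$ via $L_{12}$- and $L_1$-Lipschitzness exactly as in the paper's Proposition \ref{prop: non-iid degree}, and close the within-window recursion in the $e_s$ by summing over the window and absorbing the self-referential term under $32\eta_1^2(\tau-1)^2L_1^2\le 1$. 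The only deviations are cosmetic: you center the cross-node variance at a single common reference instead of the paper's four-term splitting of the corresponding sum, and you place the synchronization equality at $t_c$ whereas the algorithm gives $\w^i_{t_c+1}=\wbar_{t_c+1}$; neither affects the argument.
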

\begin{proof}
We defer the proof to Section \ref{proof lemma:e_t g_l bound}.
\end{proof}
Next generic lemma is adopted form \cite{haddadpour2019convergence}.
\begin{lemma} \label{lemma:elimination} 
Assume that two non-negative sequences $\{P_t\}_{t \geq 0}$ and $\{g_t\}_{t \geq 0}$ satisfy the following inequality for each iteration $t \geq 0$ and some constants $0 < \Upsilon < 1$, $L \geq 0$, $B \geq 0$, and $\Gamma \geq 0$:
\begin{align} 
    P_{t+1} 
    &\leq
    \Upsilon P_t
    -
    \frac{\eta_1}{2} \left( 1 - \eta_1 L \right) g_t  
    +
    \eta_1^2 B \sum_{l=t_c+1}^{t-1} g_l
    +
    \Gamma,
\end{align}
where $t_c = \floor*{\frac{t}{\tau}} \tau$. Then, for each $t \geq 0$ we have
\begin{align} 
    P_{t} 
    &\leq
    \Upsilon^t P_0
    +
    \frac{\Gamma}{1 - \Upsilon},
\end{align}
if $\eta_1$ satisfies the following condition
\begin{align} 
    \eta_1 \left( L + \frac{2B}{\Upsilon^{\tau-1} (1 - \Upsilon)} \right) \leq 1.
\end{align}
\end{lemma}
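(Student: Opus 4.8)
The plan is to remove the geometric contraction by passing to the normalized sequence $Q_t \coloneqq \Upsilon^{-t} P_t$, which turns the hypothesis into a telescoping inequality. Dividing the assumed one-step bound by $\Upsilon^{t+1}$ gives
\begin{align}
    Q_{t+1} \leq Q_t - \frac{\eta_1}{2}(1 - \eta_1 L)\Upsilon^{-(t+1)} g_t + \eta_1^2 B \,\Upsilon^{-(t+1)}\!\!\sum_{l=t_c+1}^{t-1} g_l + \Gamma \Upsilon^{-(t+1)}, \nonumber
\end{align}
and summing from $t=0$ to $T-1$ telescopes the $Q$-terms while turning the $\Gamma$-terms into a geometric series. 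The conclusion will follow once the aggregate contribution of all $g$-terms is shown to be non-positive, since then $\Upsilon^{-T}P_T = Q_T \leq Q_0 + \Gamma\sum_{t=0}^{T-1}\Upsilon^{-(t+1)}$; multiplying back by $\Upsilon^T$ and using $\sum_{t=0}^{T-1}\Upsilon^{-(t+1)} = (\Upsilon^{-T}-1)/(1-\Upsilon)$ yields $P_T \leq \Upsilon^T P_0 + \Gamma(1-\Upsilon^T)/(1-\Upsilon) \leq \Upsilon^T P_0 + \Gamma/(1-\Upsilon)$.

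The heart of the argument is to collect, for each fixed index $l$, the total coefficient multiplying $g_l$ across the entire double sum. I would first observe that $g_l$ receives exactly one negative contribution, namely $-\frac{\eta_1}{2}(1-\eta_1 L)\Upsilon^{-(l+1)}$ from the step $t=l$, and it receives positive contributions $\eta_1^2 B \Upsilon^{-(t+1)}$ from precisely those steps $t$ satisfying $t_c + 1 \leq l \leq t-1$. The crucial structural fact is that the inner sum only reaches back to the most recent communication index $t_c = \floor*{t/\tau}\tau$, so $g_l$ can appear positively only at steps $t > l$ lying in the \emph{same} communication block as $l$; there are at most $\tau - 1$ such steps, namely $t \in \{l+1, \dots, l+\tau-1\}$.

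It then suffices to verify the per-index domination
\begin{align}
    \eta_1^2 B \sum_{t=l+1}^{l+\tau-1} \Upsilon^{-(t+1)} \leq \frac{\eta_1}{2}(1-\eta_1 L)\,\Upsilon^{-(l+1)}. \nonumber
\end{align}
Factoring out $\Upsilon^{-(l+1)}$ reduces the left-hand sum to $\sum_{k=1}^{\tau-1}\Upsilon^{-k} = (\Upsilon^{-(\tau-1)}-1)/(1-\Upsilon) \leq \Upsilon^{-(\tau-1)}/(1-\Upsilon)$, so the displayed inequality holds provided $\eta_1 \cdot 2B/(\Upsilon^{\tau-1}(1-\Upsilon)) \leq 1 - \eta_1 L$, which is exactly the stated step-size condition $\eta_1\bigl(L + 2B/(\Upsilon^{\tau-1}(1-\Upsilon))\bigr) \leq 1$. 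Summing this per-index estimate over all $l$ makes the net $g$-contribution non-positive and closes the argument.

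I expect the only delicate point to be the bookkeeping of the block structure: one must confirm that, because the inner sum restarts at each $t_c$, every $g_l$ is charged positively in at most $\tau-1$ later iterations and never across a communication boundary. Once this combinatorial observation is secured, the remaining steps are elementary geometric-series manipulations, and the bound on $\sum_{k=1}^{\tau-1}\Upsilon^{-k}$ aligns precisely with the constant $2B/(\Upsilon^{\tau-1}(1-\Upsilon))$ appearing in the hypothesis.
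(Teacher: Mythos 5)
Your proof is correct, and it takes a genuinely different route from the paper's. The paper argues block by block: writing $t = t_c + r$, it recursively unrolls the recursion within a single communication block, carrying the negative $-\frac{\eta_1}{2}(1-\eta_1 L)g_l$ terms forward (each picking up a factor of $\Upsilon$ per step) and using them to absorb the positive $\eta_1^2 B g_l$ terms that arise at later steps of the same block; this produces a family of step-size conditions indexed by $r$, of which the $r=\tau$ one, namely $\eta_1\bigl(L + \frac{2B}{\Upsilon^{\tau-1}(1-\Upsilon)}\bigr)\le 1$, is the strongest, and the final bound follows by bounding the accumulated geometric sums by $\frac{1}{1-\Upsilon}$. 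You instead normalize by $Q_t = \Upsilon^{-t}P_t$, telescope globally over all iterations at once, and then exchange the order of summation to compute the net coefficient of each $g_l$; non-negativity of $g_l$ and the same block-structure observation (each $g_l$ is charged positively at most $\tau-1$ times, only at later steps of its own block, each inflated by at most $\Upsilon^{-(\tau-1)}$ relative to its negative charge) yields the identical step-size condition in a single stroke. Your combinatorial bookkeeping is sound — in fact slightly conservative, since $g_l$ actually appears positively at most $\tau-2$ times, and enlarging the index set only strengthens the upper bound because all such terms are non-negative. What your approach buys is a cleaner treatment of the cross-block propagation: the paper's within-block bound relates $P_{t_c+r+1}$ to $P_{t_c+1}$ and leaves the chaining across blocks (needed to reach $\Upsilon^t P_0$) implicit, whereas your global telescoping handles all blocks simultaneously and makes the origin of both constants $\Upsilon^{-(\tau-1)}$ and $\frac{1}{1-\Upsilon}$ transparent in one per-index inequality. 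The paper's unrolling, in exchange, keeps the induction elementary and exhibits explicitly how the admissible step-size deteriorates as one moves deeper into a block.
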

\begin{proof}
We defer the proof to Section \ref{proof lemma:elimination}.
\end{proof}
Next lemma bounds the overall optimality gap $b_t$ averaged over $T$ iterations.
\begin{lemma} \label{lemma: sum b_t}
If Assumptions \ref{assumption:stoch-gradients}, \ref{assumption:smooth} and \ref{assumption:PL} (ii) hold and the step-sizes satisfy the conditions $\eta_2 \leq 1/{L_2}$ and $\frac{\eta_2}{\eta_1} \geq \frac{8 L_{12}^2}{\mu_2^2 n^2}$, then the average of the sequence  $\{b_t\}_{t = 0}^{T-1}$ generated from the {\normalfont \texttt{FedRobust}} can be bounded as follows:
\begin{align} 
    \frac{1}{T} \sum_{t=0}^{T-1} b_{t}
    &\leq
    \frac{4 L_{2}^2}{\mu_2^2 n^2} \frac{\epsilon_1^2 + \epsilon_2^2}{\eta_2 T}
    +
    \frac{\eta_1}{\eta_2} \frac{1}{\mu_2 n} \frac{1}{T} \sum_{t=0}^{T-1} \E \norm{\gr \Phi(\wbar_{t})}^2 \\
    &\quad +
    \frac{\eta_1^2}{\eta_2} \frac{1}{\mu_2 n} \left( L_1 + L_{\Phi} + 2 \eta_2 L_{21}^2 \right) \frac{1}{T} \sum_{t=0}^{T-1} g_{t}
    +
    \frac{1}{\eta_2} \frac{2}{\mu_2 n} \left( \eta_1 L_1^2 + \eta_2 L_{21}^2 \right) \frac{1}{T} \sum_{t=0}^{T-1} e_{t} \\
    &\quad +
    \frac{\eta_1^2}{\eta_2} \frac{1}{\mu_2 n} \left( L_1 + L_{\Phi} + 2 \eta_2 L_{21}^2 \right) \frac{\sigmaw}{n} 
    +
    \eta_2 \frac{L_2}{\mu_2 n} \sigmapsi,
\end{align}
where $L_{\Phi}$ is the Lipschitz gradient parameter of the function $\Phi(\cdot)$ characterized in Lemma \ref{lemma:L_Phi} and $\epsilon_1, \epsilon_2$ represent the radius of the affine perturbation balls, i.e. $\Vert \Lambda^i - I \Vert \leq \epsilon_1$ and $\Vert \delta^i \Vert \leq \epsilon_2$ for each node $i \in [n]$.
\end{lemma}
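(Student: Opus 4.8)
The plan is to treat the per-iteration contraction bound of Lemma~\ref{Lemma: b_t contraction} as a scalar recursion $b_{t+1}\le c\, b_t + R_t$ and to sum it telescopically. Here $c \coloneqq (1-\mu_2\eta_2 n)(1+\eta_1\frac{4L_{12}^2}{\mu_2 n})$ is the contraction factor and $R_t$ collects the remaining (nonnegative) terms, namely the $\E\norm{\gr\Phi(\wbar_t)}^2$, $g_t$, $e_t$, $\sigmaw$, and $\sigmapsi$ contributions. First I would sum the recursion over $t=0,\dots,T-1$; since $\sum_{t=0}^{T-1} b_{t+1} = \sum_{t=0}^{T-1} b_t - b_0 + b_T$ and every $b_t\ge 0$ (in particular $b_T\ge 0$), this rearranges to $(1-c)\sum_{t=0}^{T-1} b_t \le b_0 + \sum_{t=0}^{T-1} R_t$.

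The next step is to certify that $1-c$ is strictly positive and to extract a clean lower bound on it from the step-size hypothesis $\frac{\eta_2}{\eta_1}\ge \frac{8L_{12}^2}{\mu_2^2 n^2}$. Expanding the product gives $1-c = \mu_2\eta_2 n - \eta_1\frac{4L_{12}^2}{\mu_2 n} + 4\eta_1\eta_2 L_{12}^2 \ge \mu_2\eta_2 n - \eta_1\frac{4L_{12}^2}{\mu_2 n}$, and the stated step-size condition is exactly what makes the subtracted term at most half of $\mu_2\eta_2 n$, so that $1-c\ge \tfrac12\mu_2\eta_2 n$, i.e.\ $\frac{1}{1-c}\le \frac{2}{\mu_2\eta_2 n}$. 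Dividing the summed inequality by $(1-c)T$ and distributing the factor $\frac{2}{\mu_2\eta_2 n}$ across $b_0$ and each term of $\frac1T\sum_t R_t$ reproduces exactly the $\E\norm{\gr\Phi}^2$, $g_t$, $e_t$, $\sigmaw$, and $\sigmapsi$ terms of the claimed bound, since the internal factors $\frac12$, $\frac{\eta_1^2}{2}(L_1+L_\Phi+2\eta_2 L_{21}^2)$, $\eta_1 L_1^2+\eta_2 L_{21}^2$, and $\frac{\eta_2^2}{2}L_2$ combine with $\frac{2}{\mu_2\eta_2 n}$ to give the printed coefficients.

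It then remains to bound the initial gap $b_0 = \E[\Phi(\wbar_0) - f(\wbar_0,\Psi_0)]$ by the $\epsilon_1^2+\epsilon_2^2$ term. Here I would invoke the PL condition (ii) of Assumption~\ref{assumption:PL}, which gives $b_0 \le \frac{1}{2\mu_2}\E\norm{\grPsi f(\wbar_0,\Psi_0)}_F^2$, and then use that $\Psi_0$ is initialized at the identity transform, $\bpsi^i_0=(I,0)$, while $\grPsi f$ vanishes at the unconstrained (penalized) maximizer $\Psi^*(\wbar_0)$. Combining $\grPsi f(\wbar_0,\Psi^*)=0$ with the $\frac{L_2}{n}$-Lipschitzness of $\grPsi f$ from Lemma~\ref{Lemma: f Lipschitz} yields $\norm{\grPsi f(\wbar_0,\Psi_0)}_F \le \frac{L_2}{n}\norm{\Psi_0-\Psi^*}_F$, and since the maximizers $(\Lambda^{i*},\delta^{i*})$ lie in the perturbation balls $\norm{\Lambda^{i*}-I}_F\le\epsilon_1$ and $\norm{\delta^{i*}}\le\epsilon_2$, we get $\norm{\Psi_0-\Psi^*}_F^2\le n(\epsilon_1^2+\epsilon_2^2)$. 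This produces $b_0\le \frac{L_2^2(\epsilon_1^2+\epsilon_2^2)}{2\mu_2 n}$, and multiplying by $\frac{2}{\mu_2\eta_2 n T}$ gives a term no larger than the claimed $\frac{4L_2^2}{\mu_2^2 n^2}\frac{\epsilon_1^2+\epsilon_2^2}{\eta_2 T}$.

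I expect the genuine obstacle to be this last step: the telescoping sum and the $\frac{1}{1-c}$ estimate are mechanical once the step-size condition is used, but bounding $b_0$ cleanly requires pinning down the initialization at the identity map and relating the suboptimality at $\Psi_0$ to the distance to the maximizer, which in turn needs the a-priori guarantee that the maximizer stays within the $\epsilon_1,\epsilon_2$ balls. If that containment is not available in the required form, an alternative is to bound $b_0$ via smoothness of $f$ in $\Psi$, $b_0\le \langle\grPsi f(\wbar_0,\Psi_0),\Psi^*-\Psi_0\rangle + \frac{L_2}{2n}\norm{\Psi^*-\Psi_0}_F^2$, again leaning on the same containment estimate for $\norm{\Psi^*-\Psi_0}_F$.
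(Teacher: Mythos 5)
Your proposal is correct and takes essentially the same route as the paper's proof: both invoke the contraction of Lemma~\ref{Lemma: b_t contraction}, use the condition $\frac{\eta_2}{\eta_1}\ge \frac{8L_{12}^2}{\mu_2^2 n^2}$ to show the contraction factor is at most $1-\frac{1}{2}\mu_2\eta_2 n$, telescope over $t=0,\dots,T-1$ (dropping the nonnegative $b_T$), and divide through by $\frac{1}{2}\mu_2\eta_2 n\, T$ to obtain the stated coefficients. The only difference is that you actually derive the initialization bound on $b_0$ (via the PL condition, the $\frac{L_2}{n}$-Lipschitzness of $\grPsi f$, and containment of the maximizer in the $\epsilon_1,\epsilon_2$ balls), obtaining $b_0\le \frac{L_2^2(\epsilon_1^2+\epsilon_2^2)}{2\mu_2 n}$, whereas the paper simply asserts $b_0\le \frac{2L_2^2}{\mu_2 n}(\epsilon_1^2+\epsilon_2^2)$ without proof; your constant is tighter and still compatible with the printed bound.
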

\begin{proof}
We defer the proof to Section \ref{proof lemma: sum b_t}.
\end{proof}
Next lemma bounds the averaged local model deviations $e_t$ over $T$ iterations.
\begin{lemma}\label{lemma: sum e_t}
If Assumptions \ref{assumption:bounded-degree}, \ref{assumption:stoch-gradients} and \ref{assumption:smooth} hold and the step-size $\eta_1$ satisfies $32 \eta_1^2 (\tau - 1)^2 L_1^2 \leq 1$, then the average of the sequence $e_t$ over $t=0,\cdots,T-1$ is bounded as follows
\begin{align}
    \frac{1}{T} \sum_{t=0}^{T-1} e_{t}
    &\leq
    20 \eta_1^2 (\tau - 1)^2 \frac{1}{T} \sum_{t=0}^{T-1} g_t
    +
    16 \eta_1^2 (\tau - 1)^2 \rho^2
    +
    8 \eta_1^2 (\tau - 1) (n+1) \frac{\sigmaw}{n}.
\end{align}
\end{lemma}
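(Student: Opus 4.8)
The plan is to obtain the averaged estimate directly from the per-iteration bound of Lemma \ref{lemma:e_t g_l bound}. That lemma holds under exactly the same hypotheses and the same step-size restriction $32\eta_1^2(\tau-1)^2 L_1^2 \le 1$, and it already controls each $e_t$ by two terms that are \emph{constant} in $t$ --- namely $16\eta_1^2(\tau-1)^2\rho^2$ and $4\eta_1^2(\tau-1)(n+1)\sigmaw/n$ --- plus the running partial sum $20\eta_1^2(\tau-1)\sum_{l=t_c+1}^{t-1} g_l$. I would therefore sum the bound of Lemma \ref{lemma:e_t g_l bound} over $t=0,\dots,T-1$ and divide by $T$. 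The two constant terms pass through the averaging unchanged and reproduce the $\rho^2$ and $\sigmaw$ terms of the claim (up to the stated numerical constants), so the entire substance of the argument reduces to bounding the averaged double sum $\frac{1}{T}\sum_{t=0}^{T-1}\sum_{l=t_c+1}^{t-1} g_l$.

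To control this double sum I would interchange the order of summation and count, for each fixed inner index $l$, how many outer indices $t$ contribute a copy of $g_l$. Because $t_c=\floor*{\frac{t}{\tau}}\tau$ is the last communication step at or before $t$, the index $l$ ranges over the partial block strictly between the most recent communication and $t$; equivalently, $g_l$ is summed precisely for those $t$ lying in the same length-$\tau$ block as $l$ and satisfying $t>l$. There are at most $\tau-1$ such indices $t$, so
\begin{align}
    \sum_{t=0}^{T-1}\sum_{l=t_c+1}^{t-1} g_l \;\le\; (\tau-1)\sum_{l=0}^{T-1} g_l. \nonumber
\end{align}
Dividing by $T$ and multiplying by the prefactor $20\eta_1^2(\tau-1)$ then yields exactly the term $20\eta_1^2(\tau-1)^2\,\frac{1}{T}\sum_{t=0}^{T-1} g_t$, which is the leading term in the statement.

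I expect the only delicate point to be this reindexing of the nested sum: one must use the precise definition $t_c=\floor*{\frac{t}{\tau}}\tau$ to guarantee that $l$ and every contributing $t$ lie in a common communication block, so that the inner range has length at most $\tau-1$ and no $g_l$ is overcounted across block boundaries. Once that counting is in place, assembling the three pieces is routine constant bookkeeping: I would simply collect the constant-in-$t$ terms (whose prefactors absorb into the stated constants $16$ and $8$) together with the reindexed $g$-sum into the single displayed inequality.
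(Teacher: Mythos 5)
Your proposal is correct, and it takes a route that differs from the paper's in a meaningful way. The paper does \emph{not} prove this lemma from Lemma \ref{lemma:e_t g_l bound}; instead it goes back to Proposition \ref{prop:e-t} (the self-referential recursion $e_t \le C_1\sum_{l=t_c+1}^{t-1} e_l + C_2\sum_{l=t_c+1}^{t-1} g_l + C_3$), writes that inequality block by block for all $t=0,\dots,T-1$, sums everything to get $\sum_t e_t \le C_1(\tau-1)\sum_t e_t + C_2(\tau-1)\sum_t g_t + C_3 T$, and then absorbs the $e$-sum into the left side using the step-size condition $C_1(\tau-1)=16\eta_1^2(\tau-1)^2L_1^2\le 1/2$; note that this block-by-block summation implicitly contains exactly the counting fact you isolate, namely that each $e_l$ and $g_l$ is repeated at most $\tau-1$ times within its own communication block. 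Your approach instead reuses the already-finished per-iteration Lemma \ref{lemma:e_t g_l bound} (whose hypotheses coincide with those of the present lemma, as you observe), so the within-block absorption has already been done and all that remains is your interchange-of-summation bound
\begin{align}
    \sum_{t=0}^{T-1}\sum_{l=t_c+1}^{t-1} g_l \;\le\; (\tau-1)\sum_{l=0}^{T-1} g_l, \nonumber
\end{align}
which is valid for precisely the reason you give: a fixed $l$ can only be picked up by indices $t$ in the same block (since $t_c = \floor*{\frac{t}{\tau}}\tau < l < t$ forces $\floor*{\frac{t}{\tau}} = \floor*{\frac{l}{\tau}}$), and there are at most $\tau-1$ such $t$. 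What your route buys is economy --- no need to repeat the absorption argument at the aggregate level --- and it even inherits the slightly smaller constant $4\eta_1^2(\tau-1)(n+1)\sigmaw/n$ from the statement of Lemma \ref{lemma:e_t g_l bound}, which of course still implies the claimed bound with constant $8$; what the paper's route buys is that it matches the stated constants exactly and does not depend on the statement of Lemma \ref{lemma:e_t g_l bound} (whose own proof, via $2C_3$, actually produces the constant $8$ rather than the $4$ printed in its statement --- a discrepancy your proof silently inherits but which is harmless here since both constants sit below the claimed ones).
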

\begin{proof}
We defer the proof to Section \ref{proof lemma: sum e_t}.
\end{proof}


\section{Proof of Theorem \ref{Thm: PL-PL convergence}} \label{proof: Thm PL-PL}
Having established the key lemmas, now we proceed to prove Theorem \ref{Thm: PL-PL convergence} for any $\beta \leq 1/2$. To show the convergence of the sequence $P_t = a_t + \beta b_t$, we firstly need to establish a contraction inequality on $P_{t+1}$ with respect to $P_t$. We begin by the following bound on the sequence $a_t = \E [ \Phi(\wbar_t) ] - \Phi^*$ which is directly implied from Lemma \ref{Lemma: Phi contraction}:
\begin{align} \label{eq: Theorem PL-PL 1}
    a_{t+1}
    \leq
    a_t
    - \frac{\eta_1}{2} \E \norm{\gr \Phi(\wbar_{t})}^2 
    +
    \frac{\eta_1}{2} h_t
    -
    \frac{\eta_1}{2} \left( 1 - \eta_1 L_{\Phi} \right) g_t
    +
    \eta_1^2 \frac{L_{\Phi}}{2} \frac{\sigmaw}{n}.
\end{align}
Using Lemma \ref{Lemma: h_t bound} that shows $h_t \leq 4 L_{12}^2 b_t / (\mu_2 n) + 2 L_1^2 e_t$, the bound in \eqref{eq: Theorem PL-PL 1} yields that
\begin{align} \label{eq: Theorem PL-PL 2}
    a_{t+1}
    \leq
    a_t
    - \frac{\eta_1}{2} \E \norm{\gr \Phi(\wbar_{t})}^2 
    +
    \eta_1 \frac{2 L_{12}^2}{\mu_2 n} b_t
    +
    \eta_1 L_1^2 e_t
    -
    \frac{\eta_1}{2} \left( 1 - \eta_1 L_{\Phi} \right) g_t
    +
    \eta_1^2 \frac{L_{\Phi}}{2} \frac{\sigmaw}{n}.
\end{align}
Next, we employ the result of Lemma \ref{Lemma: b_t contraction} which establishes a contraction bound on the $b_t$ sequence. Putting together with \eqref{eq: Theorem PL-PL 2} implies that
\begin{align} \label{eq: Theorem PL-PL 3}
    P_{t+1} 
    &=
    a_{t+1} + \beta b_{t+1} \\
    &\leq
    a_t
    -
    \frac{\eta_1}{2} \left( 1 - \beta \right) \E \norm{\gr \Phi(\wbar_{t})}^2 \\
    &\quad +
    \beta \left( \eta_1 \frac{2 L_{12}^2}{\beta \mu_2 n} + (1 - \mu_2 \eta_2 n) \left( 1 + \eta_1 \frac{4 L_{12}^2}{\mu_2 n} \right)\right) b_t \\
    &\quad -
    \left( \frac{\eta_1}{2} \left( 1 - \eta_1 L_{\Phi} \right) - \eta_1^2 \frac{\beta}{2} \left( L_1 + L_{\Phi} + 2 \eta_2 L_{21}^2 \right) \right) g_t \\
    &\quad +
    \left( \eta_1 L_1^2 + \beta \left( \eta_1 L_1^2 + \eta_2 L_{21}^2 \right) \right) e_t \\
    &\quad +
    \frac{\eta_1^2}{2} \left( L_{\Phi} + \beta \left( L_1 + L_{\Phi} + 2 \eta_2 L_{21}^2 \right) \right) \frac{\sigmaw}{n}
    +
    \eta_2^2 L_2 \frac{\beta}{2} \sigmapsi.
\end{align}
We begin simplifying the above bound by first considering the first two terms in RHS of \eqref{eq: Theorem PL-PL 3}. We can show that the function $\Phi(\cdot)$ is $\mu_1$-PL \citep{yang2020global}, which implies that
\begin{align} 
    \E \norm{\gr \Phi(\wbar_{t})}^2 
    \geq
    2\mu_1 \E [\Phi(\wbar_t) ] - \Phi^* = 2 \mu_1 a_t.
\end{align}
Therefore, for any $\beta \leq 1/2$ we have 
\begin{align}
    a_t
    -
    \frac{\eta_1}{2} \left( 1 - \beta \right) \E \norm{\gr \Phi(\wbar_{t})}^2
    \leq
    \left( 1 - \frac{1}{2} \mu_1 \eta_1 \right) a_t,
\end{align}
which implies the coefficient of $a_t$ in \eqref{eq: Theorem PL-PL 3} is bounded by $1 - \frac{1}{2} \mu_1 \eta_1$. Next, the coefficient of $\beta b_t$ in \eqref{eq: Theorem PL-PL 3} can be bounded as follows:
\begin{align} 
    \eta_1 \frac{2 L_{12}^2}{\beta \mu_2 n} + (1 - \mu_2 \eta_2 n) \left( 1 + \eta_1 \frac{4 L_{12}^2}{\mu_2 n} \right)
    &=
    1 - \eta_1 \frac{L_1 L_2}{\mu_2 n} \left( \frac{\mu^2_2 \eta_2 n}{\eta_1 L_1 L_2} - \frac{2 L_{21}^2}{\beta L_1 L_2} - 4 (1 - \mu_2 \eta_2 n) \frac{L_{21}^2}{L_1 L_2} \right) \\
    &\stackrel{(a)}{\leq}
    1 - \eta_1 \frac{L_1 L_2}{\mu_2 n} \\
    &\stackrel{(b)}{\leq}
    1 - \frac{1}{2} \mu_1 \eta_1,
\end{align}
where $(a)$ holds for our choice of $\beta$ and assuming $\frac{\mu^2_2 \eta_2 n}{\eta_1 L_1 L_2} \geq 1 + (4+\frac{2}{\beta})\frac{L_{12}^2}{L_1 L_2}$ and $(b)$ is implies from the fact that 
\begin{align} 
    \frac{\eta_1 \frac{L_1 L_2}{\mu_2 n}}{\frac{1}{2} \mu_1 \eta_1}
    =
    2 \left( \frac{L_1}{\mu_1} \right) \left( \frac{L_2}{\mu_2 n} \right)
    \geq 1.
\end{align}
Now that we have bounded the coefficients of $a_t$ and $\beta b_t$ in \eqref{eq: Theorem PL-PL 3}, rearranging the terms and using the assumption $\eta_2 \leq 1/{L_2}$ simplifies the contraction on $P_t$ as follows
\begin{align} \label{eq: Theorem PL-PL 4}
    P_{t+1} 
    &\leq
    \left( 1 - \frac{1}{2} \mu_1 \eta_1 \right) P_t 
    -
    \frac{\eta_1}{2} \left( 1 - \eta_1 \hat{L}_{\beta} \right) g_t 
    +
    \tilde{L}_{\beta} e_t 
    +
    \eta_1^2 \frac{\hat{L}_{\beta}}{2} \frac{\sigmaw}{n}
    +
    \eta_2^2 \frac{L_2}{2} \beta \sigmapsi,
\end{align}
where we picked the following notations for convenient of the exposition
\begin{gather}
    \tilde{L}_{\beta} = (1 + \beta) \eta_1 L_1^2 + \beta  \eta_2 L_{21}^2, \quad
    \hat{L}_{\beta} =  (1 + \beta) L_{\Phi} + \beta L_1 + 2 \beta \frac{L_{21}^2}{L_2}. 
\end{gather}
Next, we use Lemma \ref{lemma:e_t g_l bound} which for $32 \eta_1^2 (\tau - 1)^2 L_1^2 \leq 1$ provides an upper bound on $e_t$ with respect to $g_t$. We can write
\begin{align} \label{eq: Theorem PL-PL 5}
    P_{t+1} 
    &\leq
    \left( 1 - \frac{1}{2} \mu_1 \eta_1 \right) P_t 
    -
    \frac{\eta_1}{2} \left( 1 - \eta_1 \hat{L}_{\beta} \right) g_t 
    +
    20 \eta_1^2 \tilde{L}_{\beta} (\tau - 1) \sum_{l=t_c+1}^{t-1} g_l \\
    &\quad +
    16 \eta_1^2 \tilde{L}_{\beta} (\tau - 1)^2 \rho^2 
    +
    4 \eta_1^2 \tilde{L}_{\beta} (\tau - 1) (n+1) \frac{\sigmaw}{n}
    +
    \eta_1^2 \frac{\hat{L}_{\beta}}{2} \frac{\sigmaw}{n}
    +
    \eta_2^2 \frac{L_2}{2} \beta \sigmapsi.
\end{align}
We have shown in Lemma \ref{lemma:elimination} that how a such contraction sequence converges. In particular, let us pick the following notations and apply the result of Lemma \ref{lemma:elimination} to contraction in \eqref{eq: Theorem PL-PL 5} 
\begin{align}
    L
    &=
    \hat{L}_{\beta}, \\
    \Upsilon 
    &=
    1 - \frac{1}{2} \mu_1 \eta_1, \\
    B
    &=
    20 \tilde{L}_{\beta} (\tau - 1),  \\
    \Gamma
    &=
    16 \eta_1^2 \tilde{L}_{\beta} (\tau - 1)^2 \rho^2
    +
    4 \eta_1^2 \tilde{L}_{\beta} (\tau - 1) (n+1) \frac{\sigmaw}{n}
    +
    \eta_1^2 \frac{\hat{L}_{\beta}}{2} \frac{\sigmaw}{n}
    +
    \eta_2^2 \frac{L_2}{2} \beta \sigmapsi.
\end{align}
It implies that if the step-sizes satisfy the following condition
\begin{gather} \label{eq: condition eta_1 1}
    \eta_1 \left( \hat{L}_{\beta}  + \frac{80  \tilde{L}_{\beta} (\tau - 1)}{\eta_1 \mu_1 \left( 1 - \frac{1}{2} \mu_1 \eta_1 \right)^{\tau-1}} \right) 
    \leq
    1,
\end{gather}
then we have 
\begin{align}
    P_{t} 
    &\leq
    \left( 1 - \frac{1}{2} \mu_1 \eta_1 \right)^t P_0
    +
    32 \eta_1 \frac{\tilde{L}_{\beta}}{\mu_1} (\tau - 1)^2 \rho^2
    +
    8 \eta_1 \frac{\tilde{L}_{\beta}}{\mu_1}  (\tau - 1) (n + 1) \frac{\sigmaw}{n}
    +
    \eta_1 \frac{\hat{L}_{\beta}}{\mu_1} \frac{\sigmaw}{n}
    +
    \frac{\eta_2^2}{\eta_1} \frac{L_2}{\mu_1} \beta \sigmapsi,
\end{align}
which concludes the proof of Theorem \ref{Thm: PL-PL convergence}. Note to hold this result, in addition to condition \eqref{eq: condition eta_1 1}, we have assumed the following constraints on the step-sizes as well 
\begin{gather} 
    \eta_2 L_2 \leq 1, \quad
    32 \eta_1^2 (\tau - 1)^2 L_1^2 
    \leq
    1, \quad
    \frac{\mu^2_2 \eta_2 n}{\eta_1 L_1 L_2}
    \geq
    1 + \left( 4 + \frac{2}{\beta} \right) \frac{L_{12}^2}{L_1 L_2}.
\end{gather}


\section{Proof of Theorem \ref{Thm: nonconvex-PL convergence}} \label{appendix: proof of Thm nonconvex-PL}
We begin the proof by combining the results of Lemmas \ref{Lemma: Phi contraction} and \ref{Lemma: h_t bound} which yields that for every iteration $t = 0, \cdots, T-1$ we have
\begin{align} \label{eq: Theorem nonconvex-PL 1}
    \E \Phi(\wbar_{t+1}) - \E \Phi(\wbar_t) 
    &\leq
    - 
    \frac{\eta_1}{2} \E \norm{\gr \Phi(\wbar_t)}^2
    -
    \frac{\eta_1}{2} \left( 1 - \eta_1 L_{\Phi} \right) g_t
    +
    \eta_1 \frac{2 L_{12}^2}{\mu_2 n} b_t 
    +
    \eta_1 L_1^2 e_t
    +
    \eta_1^2 \frac{L_{\Phi}}{2} \frac{\sigmaw}{n}.
\end{align}
Summing up all the $T$ inequalities in \eqref{eq: Theorem nonconvex-PL 1} for $t = 0, \cdots, T-1$ and dividing by $T$ yields the following
\begin{align}
    \frac{1}{T} \left( \E \Phi(\wbar_{T}) - \Phi(\wbar_0) \right)
    &\leq
    - 
    \frac{\eta_1}{2} \frac{1}{T} \sum_{t=0}^{T-1} \E \norm{\gr \Phi(\wbar_{t})}^2 \\
    &\quad -
    \frac{\eta_1}{2} \left(1 - \eta_1 L_{\Phi} \right) \frac{1}{T} \sum_{t=0}^{T-1} g_{t}\\
    &\quad +
    \eta_1 \frac{2 L_{12}^2}{\mu_2 n} \frac{1}{T} \sum_{t=0}^{T-1} b_t \\
    &\quad +
    \eta_1 L_1^2 \frac{1}{T} \sum_{t=0}^{T-1} e_{t} \\
    &\quad +
    \eta_1^2 \frac{L_{\Phi}}{2} \frac{\sigmaw}{n}.
\end{align}
Next we use Lemmas \ref{lemma: sum b_t} and then Lemma \ref{lemma: sum e_t} to replace the terms $\frac{1}{T} \sum_{t=0}^{T-1} b_{t}$ and $\frac{1}{T} \sum_{t=0}^{T-1} e_{t}$ and rewrite the above bound in terms of $\frac{1}{T} \sum_{t=0}^{T-1} g_{t}$. It yields that
\begin{align}
    \frac{1}{T} \left( \E \Phi(\wbar_{T}) - \Phi(\wbar_0) \right)
    &\leq
    - 
    \frac{\eta_1}{2} \left( 1 - \eta_1 \frac{4 L_{12}^2 L_2}{\mu^2_2 n^2} \right) \frac{1}{T} \sum_{t=0}^{T-1} \norm{\gr \Phi(\wbar_{t})}^2 \\
    &\quad -
    \frac{\eta_1}{2} \left(1 - \eta_1 \left( \hat{L}  + 40 \tilde{L} (\tau - 1)^2 \right) \right) \frac{1}{T} \sum_{t=0}^{T-1} g_{t}\\
    &\quad +
    \frac{\eta_1}{\eta_2} \frac{8 L_{12}^2 L_2^2}{\mu_2^3 n^3}\frac{ \epsilon_1^2 + \epsilon_2^2}{T}
    +
    16 \eta_1^2 \tilde{L} (\tau - 1)^2 \rho^2
    +
    \frac{\eta_1^2}{2} \hat{L} \frac{\sigmaw}{n}
    +
    \eta_1 \eta_2 \frac{4 L_{12}^2}{\mu_2^2 n^2} \hat{L} \sigmapsi,
\end{align}
where we adopt the following short-hand notations
\begin{gather}
    \tilde{L} = \frac{3}{2} \eta_1 L_1^2 + \frac{1}{2}  \eta_2 L_{21}^2, \quad
    \hat{L} =  \frac{3}{2} L_{\Phi} + \frac{1}{2} L_1 + \frac{L_{21}^2}{L_2}. 
\end{gather}
Finally, we use the assumption $\eta_1 ( \hat{L}  + 40 \tilde{L} (\tau - 1)^2 ) \leq 1$ to remove the term $\frac{1}{T} \sum_{t=0}^{T-1} g_{t}$ and apply $\frac{\eta_1}{\eta_2} \leq \frac{\mu_2^2 n^2}{8 L_{12}^2}$ to simply the bound and conclude the proof:
\begin{align} 
    \frac{1}{T} \sum_{t=0}^{T-1} \E \norm{\gr \Phi(\wbar_{t})}^2
    &\leq
    \frac{4 \Delta_{\Phi}}{\eta_1 T}  
    +
    \frac{4 L_2^2}{\mu_2^2 n^2}\frac{ \epsilon_1^2 + \epsilon_2^2}{\eta_1 T} 
    +
    64 \eta_1 \tilde{L} (\tau - 1)^2 \rho^2 \\
    &\quad +
    16 \eta_1 \tilde{L}  (\tau - 1) (n + 1) \frac{\sigmaw}{n}
    +
    2 \eta_1 \hat{L} \frac{\sigmaw}{n}
    +
    \frac{\eta_2^2}{\eta_1} L_2 \sigmapsi.
\end{align}


\section{Proof of Useful Lemmas} 

\subsection{Proof of Lemma \ref{Lemma: f Lipschitz}} \label{proof Lemma: f Lipschitz}
Proof of all four cases in the claim is simple. We derive the proof for the fourth one as an instance. Recall definition of the global function $f$, that is
\begin{align}
    f(\w, \Psi)
    =
    \frac{1}{n} \sum_{i \in [n]} f^i(\w,\bpsi^i).
\end{align}
Therefore, the gradient of $f$ with respect to $\Psi$ is
\begin{align}
    \grPsi f(\w, \Psi)
    =
    \begin{pmatrix}
        \frac{\partial}{\partial \bpsi^1} f(\w, \Psi)\\
        \vdots \\
        \frac{\partial}{\partial \bpsi^n} f(\w, \Psi)
    \end{pmatrix}
    =
    \frac{1}{n}
    \begin{pmatrix}
        \grpsi f^1(\w, \bpsi^1)\\
        \vdots \\
        \grpsi f^n(\w, \bpsi^n)
    \end{pmatrix}.
\end{align}
We can then write for any $\w, \Psi=(\bpsi^1; \cdots; \bpsi^n), \Psi'=({\bpsi'}^1; \cdots;{\bpsi'}^n)$ and using Assumption \ref{assumption:smooth} that
\begin{align}
    \norm{\grPsi f(\w, \Psi) \! - \! \grPsi f(\w, \Psi')}^2_F 
    &=
    \frac{1}{n^2} \sum_{i \in [n]} \norm{\grpsi f^i(\w, \bpsi^i) - \grpsi f^i(\w, {\bpsi'}^i)}^2_F \\
    &\leq
    \frac{L_2^2}{n^2} \sum_{i \in [n]} \norm{\bpsi^i - {\bpsi'}^i}^2_F \\
    &=
    \frac{L_2^2}{n^2} \norm{\Psi \! - \! \Psi'}^2_F.
\end{align}

\subsection{Proof of Lemma \ref{lemma:L_Phi}} \label{proof lemma:L_Phi}
The detailed proof can be found in \cite{nouiehed2019solving}, Lemma A.5. Note that in our case, according to Lemma \ref{Lemma: f Lipschitz} the function $f$ has Lipschitz gradients with constants $L_1, L_{12}/\sqrt{n}, L_{21}/\sqrt{n}, L_2/n$; implying the Lipschitz gradient parameter of the function $\Phi$ to be
\begin{align}
    L_{\Phi} = L_1 + \frac{(L_{12}/\sqrt{n}) (L_{21}/\sqrt{n})}{2 \mu_2} = L_1 + \frac{L_{12} L_{21}}{2n \mu_2}.
\end{align}

\subsection{Proof of Lemma \ref{Lemma: Phi contraction}} \label{proof Lemma: Phi contraction}
We invoke Lemma \ref{lemma:L_Phi} which shows that the gradient of the function $\Phi(\cdot)$ is $L_{\Phi}$-Lipschitz. We can write
\begin{align} \label{eq: Phi contraction 1}
    \Phi(\wbar_{t+1}) - \Phi(\wbar_{t})
    &\leq
    \left \langle \gr \Phi(\wbar_{t}) , \wbar_{t+1} - \wbar_{t} \right \rangle
    +
    \frac{L_{\Phi}}{2} \norm{\wbar_{t+1} - \wbar_{t}}^2 \\
    &=
    -\eta_1 \left \langle \gr \Phi(\wbar_{t}), \frac{1}{n} \sum_{i \in [n]} \sgrw f^i(w^i_t , \bpsi^i_t) \right \rangle
    +
    \eta_1^2 \frac{L_{\Phi}}{2} \norm{\frac{1}{n} \sum_{i \in [n]} \sgrw f^i(w^i_t , \bpsi^i_t)}^2 ,
\end{align}
where we use the update rule of \texttt{FedRobust} and note that the difference of averaged models can be written as $\wbar_{t+1} - \wbar_{t} = -\eta_1 \frac{1}{n} \sum_{i \in [n]} \sgrw f^i(w^i_t , \bpsi^i_t)$. Moreover, since the stochastic gradients $\sgrw f^i$ are unbiased and variance-bounded by $\sigmaw$, we can take expectation from both sides of \eqref{eq: Phi contraction 1} and further simplify it as follows
\begin{align}
    \E [\Phi(\wbar_{t+1}) - \E [\Phi(\wbar_{t})]
    \leq
    - \frac{\eta_1}{2} \E \norm{\gr \Phi(\wbar_{t})}^2 
    +
    \frac{\eta_1}{2} h_t
    -
    \frac{\eta_1}{2} \left( 1 - \eta_1 L_{\Phi} \right) g_t
    +
    \eta_1^2 \frac{L_{\Phi}}{2} \frac{\sigmaw}{n}.
\end{align}
In above, we used the inequality $ 2 \langle \bba, \bbb \rangle = \Vert \bba \Vert^2 + \Vert \bbb \Vert^2 - \Vert \bba - \bbb \Vert^2$ as well as the notations for $g_t$ and $h_t$ as defined in Table \ref{Table: Notations}.

\subsection{Proof of Lemma \ref{Lemma: h_t bound}} \label{proof Lemma: h_t bound}
We begin bounding $h_t$ by adding/subtracting the term $\grw f(\wbar_t, \Psi_t)$ and use the inequality $\Vert \bba + \bbb \Vert^2 \leq 2\Vert \bba \Vert^2 + 2\Vert \bbb \Vert^2$ to write
\begin{align} \label{eq: h_t 1}
    h_t
    &=
    \E \norm{\gr \Phi(\wbar_{t}) - \frac{1}{n} \sum_{i \in [n]} \grw f^i(\w^i_t , \bpsi^i_t)}^2 \\
    &\leq
    2 \E \norm{\gr \Phi(\wbar_{t}) - \grw f(\wbar_t, \Psi_t)}^2
    +
    2 \E \norm{\grw f(\wbar_t, \Psi_t) - \frac{1}{n} \sum_{i \in [n]} \grw f^i(\w^i_t , \bpsi^i_t)}^2.
\end{align}
The first term in RHS of \eqref{eq: h_t 1} can be bounded as follows:
\begin{align}  \label{eq: h_t 2}
    \E \norm{\gr \Phi(\wbar_t) - \grw f(\wbar_t, \Psi_t)}^2 
    &=
    \E \norm{\grw f(\wbar_t, \Psi^*(\wbar_t)) - \grw f(\wbar_t, \Psi_t)}^2 \\
    &\stackrel{(a)}{\leq}
    \frac{L_{12}^2}{n} \E \norm{\Psi^*(\wbar_t) - \Psi_t}^2_F \\
    &\stackrel{(b)}{\leq}
    \frac{2 L_{12}^2}{\mu_2 n} \E \left[ \Phi(\wbar_{t}) - f(\wbar_{t}, \Psi_{t}) \right] \\
    &\stackrel{(c)}{=}
    \frac{2 L_{12}^2}{\mu_2 n} b_t.
\end{align}
In above and to derive $(a)$, we employ the result of Lemma \ref{Lemma: f Lipschitz} which shows that given Assumption \ref{assumption:smooth}, the gradient function $\grw f(\w, \cdot)$ is $L_{12}/\sqrt{n}$ Lipschitz. To derive $(b)$, we use Assumption \ref{assumption:PL} (ii) and lastly, $(c)$ is implied from the definition of $b_t$. The second term in RHS of \eqref{eq: h_t 1} can be bounded by noting that the local gradients $\grw f^i(\cdot, \bpsi^i)$ are $L_1$-Lipschitz, which we can write
\begin{align} \label{eq: h_t 3}
    \E \norm{\grw f(\wbar_t, \Psi_t) - \frac{1}{n} \sum_{i \in [n]} \grw f^i(\w^i_t , \bpsi^i_t)}^2 
    &=
    \E \norm{\frac{1}{n} \sum_{i \in [n]} \grw f^i(\wbar_t , \bpsi^i_t) - \frac{1}{n} \sum_{i \in [n]} \grw f^i(\w^i_t , \bpsi^i_t)}^2 \\
    &\leq
    \frac{L_1^2}{n} \sum_{i \in [n]} \E \norm{\w^i_t - \wbar_{t}}^2 \\
    &=
    L_1^2 e_t.
\end{align}
Finally, plugging \eqref{eq: h_t 2} and \eqref{eq: h_t 3} back in \eqref{eq: h_t 1} implies the claim of the lemma, that is
\begin{align} \label{eq:h_t-final}
    h_t
    \leq
    \frac{4 L_{12}^2}{\mu_2 n} b_t 
    +
    2 L_1^2 e_t.
\end{align}

\subsection{Proof of Lemma \ref{Lemma: b_t contraction}} \label{proof Lemma: b_t contraction}
We begin the proof by noting the definition of $b_t$ and use the fact that the gradients $\grPsi f(\w, \cdot)$ are $\frac{L_2}{n}$-Lipschitz (Refer to Lemma \ref{Lemma: f Lipschitz}). We can accordingly write
\begin{align} \label{eq:b_t bound 1}
    \Phi(\wbar_{t+1}) - f(\wbar_{t+1}, \Psi_{t+1}) 
    &\leq
    \Phi(\wbar_{t+1}) - f(\wbar_{t+1}, \Psi_{t})
    -
    \langle \gr_{\Psi} f(\wbar_{t+1}, \Psi_{t}) , \Psi_{t+1} - \Psi_{t} \rangle \\
    &\quad +
    \frac{L_2}{2n} \norm{\Psi_{t+1} - \Psi_{t}}^2_F.
\end{align}
In this work, we define the inner product for any two matrices $A,B$ as follows
\begin{align} 
    \langle A , B \rangle \coloneqq \operatorname{Tr} (A^\top B).
\end{align}
Note that according to the ascent update rule of \texttt{FedRobust} in Algorithm \ref{Alg: FedRobust}, we can write 
\begin{align} 
    \Psi_{t+1} - \Psi_{t} = \eta_2 \tilde{\partial}_t f,
\end{align}
where we adopt the following short-hand notation for the stochastic gradients at iteration $t$ with respect to the maximization variables $\bpsi^i_t = (\Lambda^i_t, \delta^i_t)$
\begin{align}
    \tilde{\partial}_t f
    =
    \begin{pmatrix}
        \sgrpsi f^1(\w^1_t, \bpsi^1_t)\\
        \vdots \\
        \sgrpsi f^n(\w^n_t, \bpsi^n_t)
    \end{pmatrix}
    =
    \begin{pmatrix}
        \sgr_{\Lambda} f^1(\w^1_t, \Lambda^1_t, \delta^1_t) 
        &
        \sgr_{\delta} f^1(\w^1_t, \Lambda^1_t, \delta^1_t)\\
        \vdots & \vdots\\
        \sgr_{\Lambda} f^n(\w^n_t, \Lambda^n_t, \delta^n_t) 
        &
        \sgr_{\delta} f^n(\w^n_t, \Lambda^n_t, \delta^n_t)
    \end{pmatrix}.
\end{align}
We also denote the gradients by $\partial_t f = \E [\tilde{\partial}_t f]$ where the expectation is with respect to the randomness in stochastic gradients $\sgrpsi f^i$. According to Assumption \ref{assumption:stoch-gradients}, each of the local stochastic gradients $\sgrpsi f^i(\w^i_t, \bpsi^i_t)$ are variance-bounded by $\sigmapsi$. Therefore, we can bound the variance of $\tilde{\partial}_t f$ as $\E \Vert \tilde{\partial}_t f - \partial_t f \Vert_F^2 \leq n \sigmapsi$. Now, we can plug these back in \eqref{eq:b_t bound 1} which implies
\begin{align} \label{eq:b_t bound 2}
    \Phi(\wbar_{t+1}) - \E f(\wbar_{t+1}, \Psi_{t+1})
    &\leq
    \Phi(\wbar_{t+1}) - f(\wbar_{t+1}, \Psi_{t})
    -
    \eta_2 \frac{n}{2} \norm{\gr_{\Psi} f(\wbar_{t+1}, \Psi_{t})}^2_F + \eta_2^2 \frac{L_2}{2} \sigmapsi\\
    &\quad +
    \eta_2 \frac{n}{2} \norm{\gr_{\Psi} f(\wbar_{t+1}, \Psi_{t}) - \frac{1}{n} \partial_t f}^2_F
    -
    \frac{\eta_2}{2n} \left( 1 - \eta_2 L_2 \right)
    \norm{\partial_t f}^2_F,
\end{align}
where the expectation is with respect to the randomness of the stochastic gradients $\tilde{\partial}_t f$ while conditioning on all the randomness history. Now recall from Assumption \ref{assumption:PL} (ii) that $- f(\wbar_{t+1}, \cdot)$ is $\mu_2$-PL implying that $\Vert \gr_{\Psi} f(\wbar_{t+1}, \Psi_{t}) \Vert^2_F \geq 2 \mu_2 (\Phi(\wbar_{t+1}) - f(\wbar_{t+1}, \Psi_{t}))$. Moreover, assume that $\eta_2 \leq 1/L_2$ to remove the last term in \eqref{eq:b_t bound 2}. Putting altogether implies that
\begin{align} \label{eq:b_t bound 3}
    \Phi(\wbar_{t+1}) - \E f(\wbar_{t+1}, \Psi_{t+1})
    &\leq
    (1 - \mu_2 \eta_2 n) \left( \Phi(\wbar_{t+1}) - f(\wbar_{t+1}, \Psi_{t}) \right)
    +
    \eta_2^2 \frac{L_2}{2} \sigmapsi\\
    &\quad +
    \eta_2 \frac{n}{2} \norm{\gr_{\Psi} f(\wbar_{t+1}, \Psi_{t}) - \frac{1}{n} \partial_t f}^2_F.
\end{align}
Next, we continue to bound the last term in RHS of \eqref{eq:b_t bound 3}. We can write
\begin{align} \label{eq:b_t bound 4}
    \norm{\gr_{\Psi} f(\wbar_{t+1}, \Psi_{t}) - \frac{1}{n} \partial_t f}^2_F 
    &=
    \frac{1}{n^2} \sum_{i \in [n]} \norm{\grpsi f^i(\wbar_{t+1}, \bpsi^i_t) - \grpsi f^i(\w^i_t, \bpsi^i_t)}^2_F \\
    &\leq
    \frac{L_{21}^2}{n^2} \sum_{i \in [n]} \norm{\wbar_{t+1} - \w^i_t}^2 \\
    &\leq
    \frac{2 L_{21}^2}{n^2} \sum_{i \in [n]} \norm{\w^i_t - \wbar_{t}}^2
    +
    \frac{2 L_{21}^2}{n} \norm{\wbar_{t+1} - \wbar_{t}}^2,
\end{align}
where the first inequality above uses Assumption \ref{assumption:smooth} on Lipschitz continuity of local gradients and the second inequality simply uses the inequality $\Vert \bba + \bbb \Vert^2 \leq 2\Vert \bba \Vert^2 + 2\Vert \bbb \Vert^2$. Next, let us bound the term $\Vert \wbar_{t+1} - \wbar_{t} \Vert^2$ in expectation as follows. Using the descent update rule in Algorithm \ref{Alg: FedRobust} and considering Assumption \ref{assumption:stoch-gradients} on variance of the stochastic gradients $\sgrw f^i$ we can write
\begin{align} \label{eq:b_t bound 5}
    \E \norm{\wbar_{t+1} - \wbar_{t}}^2
    &=
    \eta_1^2 \E \norm{\frac{1}{n} \sum_{i \in [n]} \sgrw f^i(\w^i_t , \bpsi^i_t)}^2 \\
    &\leq
    \eta_1^2 \E \norm{\frac{1}{n} \sum_{i \in [n]} \grw f^i(\w^i_t , \bpsi^i_t)}^2    
    +
    \eta_1^2 \frac{\sigmaw}{n} \\
    &=
    \eta_1^2 g_t + \eta_1^2 \frac{\sigmaw}{n},
\end{align}
where we use the short-hand notation of $g_t$ also listed in Table \ref{Table: Notations}. Plugging \eqref{eq:b_t bound 5} back in \eqref{eq:b_t bound 4} and noting the notation $e_t = \frac{1}{n} \sum_{i \in [n]} \E \norm{\w^i_t - \wbar_{t}}^2$ implies that
\begin{align} \label{eq:b_t bound 6}
    \E \norm{\gr_{\Psi} f(\wbar_{t+1}, \Psi_{t}) - \frac{1}{n} \partial_t f}^2_F 
    &\leq
    \frac{2 L_{21}^2}{n} e_t
    +
    \eta_1^2 \frac{2 L_{21}^2}{n} g_t
    +
    \eta_1^2 \frac{2 L_{21}^2}{n} \frac{\sigmaw}{n}.
\end{align}
Before proceeding to bound more terms, let us recall what we have shown till this point. We plug \eqref{eq:b_t bound 6} back in \eqref{eq:b_t bound 3}, take the expectation with respect to all the sources of randomness and use the notation $ b_t = \E [ \Phi(\wbar_t) - f(\wbar_t, \Psi_t)]$ to conclude
\begin{align} \label{eq: b_t bound 8}
    b_{t+1}
    &\leq
    (1 - \mu_2 \eta_2 n) \E \left[ \Phi(\wbar_{t+1}) - f(\wbar_{t+1}, \Psi_{t}) \right] \\
    &\quad +
    \eta_2 L_{21}^2 e_t 
    +
    \eta_1^2 \eta_2 L_{21}^2 g_t 
     +
    \eta_1^2 \eta_2 L_{21}^2 \frac{\sigmaw}{n}
    +
    \eta_2^2 \frac{L_2}{2} \sigmapsi.
\end{align}
To bound the term $\E \left[ \Phi(\wbar_{t+1}) - f(\wbar_{t+1}, \Psi_{t}) \right]$, we can decompose it to the following three terms:
\begin{align} \label{eq:decompose1}
    \Phi(\wbar_{t+1}) - f(\wbar_{t+1}, \Psi_{t})
    &=
    \Phi(\wbar_{t}) - f(\wbar_{t}, \Psi_{t}) + f(\wbar_{t}, \Psi_{t}) - f(\wbar_{t+1}, \Psi_{t}) + \Phi(\wbar_{t+1}) - \Phi(\wbar_{t}).
\end{align}
Given the Lipschitz gradient assumption for the local functions in Assumption \ref{assumption:smooth} and using Lemma \ref{Lemma: f Lipschitz} on Lipschitz gradient for the global function, we can write
\begin{align} \label{eq: b_t bound 7}
    f(\wbar_{t}, \Psi_{t}) - f(\wbar_{t+1}, \Psi_{t})
    \leq
    - \langle \grw f(\wbar_t, \Psi_t), \wbar_{t+1} - \wbar_{t} \rangle 
    +
    \frac{L_1}{2} \norm{\wbar_{t+1} - \wbar_{t}}^2,
\end{align}
where $\wbar_{t+1} - \wbar_{t} = - \eta_1 \frac{1}{n} \sum_{i \in [n]} \sgrw f^i(\w^i_t , \bpsi^i_t)$. Taking expectation from both sides of \eqref{eq: b_t bound 7} implies that
\begin{align} \label{eq:decompose2}
    \E \left[ f(\wbar_{t}, \Psi_{t}) - f(\wbar_{t+1}, \Psi_{t}) \right]
    &\stackrel{(a)}{\leq}
    \eta_1 \E \norm{\grw f(\wbar_t, \Psi_t) - \gr \Phi(\wbar_t)}^2
    +
    \eta_1 \E \norm{\gr \Phi(\wbar_t)}^2 \\
    &\quad +
    \left( \frac{\eta_1}{2} + \eta_1^2 \frac{L_1}{2} \right) g_t
    +
    \eta_1^2 \frac{L_1}{2} \frac{\sigmaw}{n} \\
    &\stackrel{(b)}{\leq}
    \eta_1 \frac{2 L_{12}^2}{\mu_2 n} b_t
    +
    \eta_1 \E \norm{\gr \Phi(\wbar_t)}^2
    +
    \left( \frac{\eta_1}{2} + \eta_1^2 \frac{L_1}{2} \right) g_t
    +
    \eta_1^2 \frac{L_1}{2} \frac{\sigmaw}{n},
\end{align}
where in inequality $(a)$ we use the inequality $2 \langle \bba, \bbb \rangle \leq \norm{\bba}^2 + \norm{\bbb}^2$ and also the result in \eqref{eq:b_t bound 5}. To derive $(b)$, we use Assumptions \ref{assumption:smooth} and \ref{assumption:PL} (ii), result of Lemma \ref{Lemma: f Lipschitz} and the notation $ b_t = \E [ \Phi(\wbar_t) - f(\wbar_t, \Psi_t)]$  to write
\begin{align} 
    \E \norm{\gr \Phi(\wbar_t) - \grw f(\wbar_t, \Psi_t)}^2 
    &=
    \E \norm{\grw f(\wbar_t, \Psi^*(\wbar_t)) - \grw f(\wbar_t, \Psi_t)}^2 \\
    &\leq
    \frac{L_{12}^2}{n} \E \norm{\Psi^*(\wbar_t) - \Psi_t}^2_F \\
    &\leq
    \frac{2 L_{12}^2}{\mu_2 n} \E \left[ \Phi(\wbar_{t}) - f(\wbar_{t}, \Psi_{t}) \right] \\
    &=
    \frac{2 L_{12}^2}{\mu_2 n} b_t.
\end{align}
We now have all the ingredients to conclude the claim of Lemma \ref{Lemma: b_t contraction}. To do so, we combine the result of Lemma \ref{Lemma: Phi contraction} which bounds the term $\E [\Phi(\wbar_{t+1})] - \E [\Phi(\wbar_{t})]$, Lemma \ref{Lemma: h_t bound} that shows $h_t \leq 4 L_{12}^2 b_t / (\mu_2 n) + 2 L_1^2 e_t$, and the bound \eqref{eq:decompose2}; plug back in \eqref{eq:decompose1} and then in \eqref{eq: b_t bound 8} and conclude the claim of the lemma, that is
\begin{align}  
    b_{t+1} 
    &\leq
    (1 - \mu_2 \eta_2 n) \left( 1 + \eta_1 \frac{4 L_{12}^2}{\mu_2 n} \right) b_t
    +
    \frac{\eta_1}{2} \E \norm{\gr \Phi(\wbar_{t})}^2 
    +
    \frac{\eta_1^2}{2} \left( L_1 + L_{\Phi} + 2 \eta_2 L_{21}^2 \right) g_t \\
    &\quad +
    \left( \eta_1 L_1^2 + \eta_2 L_{21}^2 \right) e_t 
    +
    \frac{\eta_1^2}{2} \left( L_1 + L_{\Phi} + 2 \eta_2 L_{21}^2 \right) \frac{\sigmaw}{n} 
    +
    \frac{\eta_2^2}{2} L_2 \sigmapsi,
\end{align}

\subsection{Proof of Lemma \ref{lemma:e_t g_l bound}} \label{proof lemma:e_t g_l bound}
To prove this lemma, we first need to establish an intermediate step, which is stated in the following.
\begin{prop} \label{prop:e-t}
If Assumptions \ref{assumption:bounded-degree}, \ref{assumption:stoch-gradients} and \ref{assumption:smooth} hold, then
\begin{align}
    e_{t}
    &\leq
    16 \eta_1^2 (\tau - 1) L_1^2 \sum_{l=t_c+1}^{t-1} e_l 
    +
    10 \eta_1^2 (\tau - 1) \sum_{l=t_c+1}^{t-1} g_l
    +
    8 \eta_1^2 (\tau - 1)^2 \rho^2 
    +
    4 \eta_1^2 (\tau - 1) (n+1) \frac{\sigmaw}{n}.
\end{align}
\end{prop}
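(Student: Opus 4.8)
The plan is to exploit the fresh synchronization at the most recent communication round $t_c$, so that $\w^i_{t_c}=\wbar_{t_c}$ for every $i$ and hence the deviation vanishes there. Since no averaging occurs strictly between $t_c$ and $t$, I would unroll the local descent step $\w^i_{l+1}=\w^i_l-\eta_1\sgrw f^i(\w^i_l,\bpsi^i_l)$ to write $\w^i_t-\wbar_{t_c}=-\eta_1\sum_{l=t_c+1}^{t-1}\sgrw f^i(\w^i_l,\bpsi^i_l)$ and, correspondingly, $\wbar_t-\wbar_{t_c}=-\eta_1\sum_{l=t_c+1}^{t-1}\frac{1}{n}\sum_j\sgrw f^j(\w^j_l,\bpsi^j_l)$. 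Applying $\norm{a-b}^2\le 2\norm{a}^2+2\norm{b}^2$ then splits $\frac{1}{n}\sum_i\E\norm{\w^i_t-\wbar_t}^2$ into an averaged per-node trajectory term and an averaged-gradient trajectory term; the latter is precisely what will generate the $g_l$ contribution.

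Next I would separate each stochastic gradient into its mean (the true gradient) and a zero-mean noise via $\norm{a+b}^2\le 2\norm{a}^2+2\norm{b}^2$. Because the gradient noises at distinct iterations are independent and mean-zero conditioned on the history, the cross terms in $\E\norm{\sum_l(\cdot)}^2$ vanish and the noise contributes a plain sum of per-step variances; Assumption \ref{assumption:stoch-gradients} bounds the per-node variance by $\sigmaw$ and the averaged noise variance by $\sigmaw/n$, which together produce the $\tfrac{n+1}{n}\sigmaw$ term. For the deterministic part I would apply Cauchy--Schwarz over the at most $\tau-1$ local steps since $t_c$, producing the $(\tau-1)$ prefactor and reducing everything to $\sum_l\frac{1}{n}\sum_i\E\norm{\grw f^i(\w^i_l,\bpsi^i_l)}^2$ together with $\sum_l g_l$.

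The heart of the argument is to bound the averaged squared local gradient in terms of the non-iid degree. Writing $\frac{1}{n}\sum_i\norm{\grw f^i(\w^i_l,\bpsi^i_l)}^2=\frac{1}{n}\sum_i\norm{\grw f^i(\w^i_l,\bpsi^i_l)-\bar g_l}^2+\norm{\bar g_l}^2$ with $\bar g_l=\frac{1}{n}\sum_j\grw f^j(\w^j_l,\bpsi^j_l)$ isolates $g_l$ (in expectation) and a centered variance. I would bridge that centered variance to the reference configuration of Assumption \ref{assumption:bounded-degree} by a three-way split $\norm{a+b+c}^2\le 3(\norm{a}^2+\norm{b}^2+\norm{c}^2)$: first swap $\w^i_l\to\wbar_l$ using $L_1$-Lipschitzness of $\grw f^i(\cdot,\bpsi)$ (cost $L_1^2 e_l$ after averaging), then swap $\bpsi^i_l\to(I,0)$ using $L_{12}$-Lipschitzness of $\grw f^i(\w,\cdot)$ together with the perturbation-ball radii $\norm{\Lambda^i_l-I}_F\le\epsilon_1$ and $\norm{\delta^i_l}\le\epsilon_2$ (cost $L_{12}^2(\epsilon_1^2+\epsilon_2^2)$), and finally invoke Assumption \ref{assumption:bounded-degree} at the unperturbed averaged point for the residual $\rho_f^2$. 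This grouping is exactly what defines $\rho^2=3\rho_f^2+6L_{12}^2(\epsilon_1^2+\epsilon_2^2)$. Collecting the pieces, summing over the $\le\tau-1$ steps (so that $\sum_l\rho^2\le(\tau-1)\rho^2$), and absorbing constants yields the stated bound with its $\sum_l e_l$, $\sum_l g_l$, $(\tau-1)^2\rho^2$, and $\tfrac{n+1}{n}\sigmaw$ terms.

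The main obstacle I anticipate is the mismatch between where heterogeneity is quantified and where it must be used: Assumption \ref{assumption:bounded-degree} controls the local-gradient variance only at the unperturbed configuration $\bpsi^i=(I,0)$ and at a common model, whereas along the trajectory both the model $\w^i_l$ and the adversarial perturbations $\bpsi^i_l$ differ across nodes. Handling this cleanly needs the $L_{12}$-bridging step and, implicitly, that the iterates remain inside the perturbation balls of radii $\epsilon_1,\epsilon_2$. Keeping the $e_l$ terms on the right-hand side rather than trying to close the recursion here is deliberate, since the self-referential $\sum_l e_l$ will be eliminated afterwards using the step-size condition $32\eta_1^2(\tau-1)^2 L_1^2\le 1$. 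A secondary but routine difficulty is the index bookkeeping that guarantees at most $\tau-1$ local steps separate $t$ from $t_c$, which is what supplies the $(\tau-1)$ and $(\tau-1)^2$ factors.
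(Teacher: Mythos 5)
Your proposal is correct, and its skeleton matches the paper's proof: unroll the local SGD steps from the most recent synchronization (the common anchor cancels in $\w^i_t-\wbar_t$), separate each stochastic gradient into mean plus martingale noise so that cross terms vanish and Assumption~\ref{assumption:stoch-gradients} yields the $(n+1)\sigmaw/n$ term, apply Cauchy--Schwarz over the at most $\tau-1$ local steps, and bridge the heterogeneity at perturbed, non-synchronized points back to Assumption~\ref{assumption:bounded-degree} through the $L_1$ and $L_{12}$ Lipschitz constants; like the paper, you also tacitly require the iterates $(\Lambda^i_l,\delta^i_l)$ to remain in the $\epsilon_1,\epsilon_2$ balls. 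Where you genuinely differ is the key decomposition of $\frac{1}{n}\sum_i\E\norm{\grw f^i(\w^i_l,\bpsi^i_l)}^2$: the paper adds and subtracts three intermediate quantities, incurring a factor $4$ on each of four terms, and then invokes a separate result (Proposition~\ref{prop: non-iid degree}, itself a three-way split that compares both the perturbed local and perturbed global gradients to the unperturbed configuration, whence the $6L_{12}^2$ in $\rho^2$); you instead use the exact bias--variance identity around the empirical mean $\bar g_l$, which isolates $g_l$ with no constant loss, and then control the centered variance by a single three-way split against the unperturbed global gradient at $\wbar_l$. Your route yields the same structural inequality with strictly smaller constants (roughly $6\eta_1^2(\tau-1)L_1^2\sum_l e_l + 4\eta_1^2(\tau-1)\sum_l g_l + 2\eta_1^2(\tau-1)^2\rho^2 + 2\eta_1^2(\tau-1)(n+1)\sigmaw/n$), so the stated bound follows a fortiori; the paper's looser four-way split is what produces the constants $16$, $10$, $8$, $4$ in the statement. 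The one step you should make explicit is the replacement of the empirical mean $\bar g_l$ by a fixed reference point (the global gradient at $\wbar_l$ in the unperturbed configuration): this is licensed by the fact that the empirical mean minimizes $c\mapsto\frac{1}{n}\sum_i\norm{a_i-c}^2$, or alternatively by splitting $\bar g_l$ itself, and without that one line your three-way split is not quite well-posed, since $\bar g_l$ depends on all nodes' perturbed, non-common models.
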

\begin{proof}[Proof of Proposition \ref{prop:e-t}]
Consider an iteration $t \geq 1$ and let $t_c$ denote the index of the most recent communication between the workers and the server, i.e. $t_c = \floor*{\frac{t}{\tau}} \tau$. Therefore, all the workers share the same local minimization model at iteration $t_c + 1$, i.e. $\w^1_{t_c+1}=\cdots=\w^n_{t_c+1}=\wbar_{t_c+1}$. According to the update rule of \texttt{FedRobust}, we can write for each node $i$ that
\begin{align} \label{eq: prop e_t 1}
    \w^i_{t_c+2} &= \w^i_{t_c+1} - \eta_1 \sgrw f^i(\w^i_{t_c+1}, \bpsi^i_{t_c+1}), \\
    \vdots \\
    \w^i_{t} &= \w^i_{t-1} - \eta_1 \sgrw f^i(\w^i_{t-1}, \bpsi^i_{t-1}).
\end{align}
Summing up all the equalities in \eqref{eq: prop e_t 1} yields that
\begin{align}
    \w^i_{t} &= \w^i_{t_c+1} - \eta_1 \sum_{l=t_c+1}^{t-1} \sgrw f^i(\w^i_l, \bpsi^i_l).
\end{align}
Therefore, the difference of the local models $\w^i_{t}$ and their average $\wbar_{t}$ can be written as
\begin{align}
    \w^i_{t} - \wbar_{t} 
    &= 
    \w^i_{t_c+1} - \eta_1 \sum_{l=t_c+1}^{t-1} \sgrw f^i(\w^i_l, \bpsi^i_l)
    -
    \left(\wbar_{t_c+1} - \eta_1 \frac{1}{n} \sum_{j \in [n]} \sum_{l=t_c+1}^{t-1} \sgrw f^j(\w^j_l, \bpsi^j_l) \right) \\
    &=
    -\eta_1 \left( \sum_{l=t_c+1}^{t-1} \sgrw f^i(\w^i_l, \bpsi^i_l) - \frac{1}{n} \sum_{j \in [n]} \sum_{l=t_c+1}^{t-1} \sgrw f^j(\w^j_l, \bpsi^j_l) \right).
\end{align}
This yields the following bound on each local deviation from the average $\E \Vert \w^i_{t} - \wbar_{t} \Vert^2$:
\begin{align} \label{eq: prop e_t 2}
    \E \norm{\w^i_{t} - \wbar_{t}}^2 
    &=
    \eta_1^2 \E \norm{\sum_{l=t_c+1}^{t-1} \sgrw f^i(\w^i_l, \bpsi^i_l) - \frac{1}{n} \sum_{j \in [n]} \sum_{l=t_c+1}^{t-1} \sgrw f^j(\w^j_l, \bpsi^j_l)}^2 \\
    &\leq
    2 \eta_1^2 
    \E \norm{\sum_{l=t_c+1}^{t-1} \sgrw f^i(\w^i_l, \bpsi^i_l) }^2
    +
    2 \eta_1^2 
    \E \norm{\frac{1}{n} \sum_{j \in [n]} \sum_{l=t_c+1}^{t-1} \sgrw f^j(\w^j_l, \bpsi^j_l)}^2\\
    &\stackrel{(a)}{\leq}
    2 \eta_1^2 
    \underbrace{\E \norm{\sum_{l=t_c+1}^{t-1} \grw f^i(\w^i_l, \bpsi^i_l) }^2 }_{T_3}
    +
    2 \eta_1^2 
    \underbrace{\E \norm{\frac{1}{n} \sum_{j \in [n]} \sum_{l=t_c+1}^{t-1} \grw f^j(\w^j_l, \bpsi^j_l)}^2 }_{T_4} \\
    &\quad +
    2 \eta_1^2 (t - t_c - 1) (n+1) \frac{\sigmaw}{n},
\end{align}
where we used Assumption \ref{assumption:stoch-gradients} to bound the variance of the stochastic gradients and derive $(a)$. The term $T_4$ in \eqref{eq: prop e_t 2} can simply be bounded as
\begin{align}
    T_4
    \leq
    \E \norm{\frac{1}{n} \sum_{j \in [n]} \sum_{l=t_c+1}^{t-1} \grw f^j(\w^j_l, \bpsi^j_l)}^2
    \leq
    (t - t_c - 1) \sum_{l=t_c+1}^{t-1} \E \norm{\frac{1}{n} \sum_{j \in [n]} \grw f^j(\w^j_l, \bpsi^j_l)}^2
\end{align}
Note that $t_c$ denotes the latest server-worker communication before iteration $t$, hence $t - t_c \leq \tau$ where $\tau$ is the duration of local updates in each round. Therefore, we have
\begin{align} \label{eq: prop e_t 4}
    T_4
    \leq
    (\tau - 1) \sum_{l=t_c+1}^{t-1} \E \norm{\frac{1}{n} \sum_{j \in [n]} \grw f^j(\w^j_l, \bpsi^j_l)}^2
    \leq
    (\tau - 1) \sum_{l=t_c+1}^{t-1} g_l
\end{align}
Now we proceed to bound the term $T_3$ in \eqref{eq: prop e_t 2} as follows:
\begin{align}
    T_3
    &=
    \E \norm{\sum_{l=t_c+1}^{t-1} \grw f^i(\w^i_l, \bpsi^i_l) }^2 \\
    &\leq
    (\tau - 1) \sum_{l=t_c+1}^{t-1} \E \norm{\grw f^i(\w^i_l, \bpsi^i_l) }^2 \\
    &\leq
    4 (\tau - 1) \sum_{l=t_c+1}^{t-1} \E \norm{\grw f^i(\w^i_l, \bpsi^i_l) - \grw f^i(\wbar_l, \bpsi^i_l) }^2 \\
    &\quad +
    4 (\tau - 1) \sum_{l=t_c+1}^{t-1} \E \norm{\grw f^i(\wbar_l, \bpsi^i_l) - \frac{1}{n} \sum_{j \in [n]} \grw f^j(\wbar_l, \bpsi^j_l) }^2 \\
    &\quad +
    4 (\tau - 1) \sum_{l=t_c+1}^{t-1} \E \norm{\frac{1}{n} \sum_{j \in [n]} \grw f^j(\wbar_l, \bpsi^j_l) - \frac{1}{n} \sum_{j \in [n]} \grw f^j(\w^j_l, \bpsi^j_l) }^2 \\
    &\quad +
    4 (\tau - 1) \sum_{l=t_c+1}^{t-1} \E \norm{\frac{1}{n} \sum_{j \in [n]} \grw f^j(\w^j_l, \bpsi^j_l) }^2
\end{align}
We can simply this bound by using Assumption \ref{assumption:smooth} on Lipschitz gradients for the local objectives $f^i$s and applying the notations for $e_l$ and $g_l$ to derive
\begin{align} \label{eq: prop e_t 3}
    T_3
    &\leq
    4 (\tau - 1) L_1^2 \sum_{l=t_c+1}^{t-1} \E \norm{\w^i_{l} - \wbar_{l}}^2  
    +
    4 (\tau - 1) \sum_{l=t_c+1}^{t-1} \E \norm{\grw f^i(\wbar_l, \bpsi^i_l) - \grw f(\wbar_l, \Psi_l) }^2 \\
    &\quad +
    4 (\tau - 1) L_1^2 \sum_{l=t_c+1}^{t-1} e_l
    +
    4 (\tau - 1) \sum_{l=t_c+1}^{t-1} g_l
\end{align}
We can plug \eqref{eq: prop e_t 4} and \eqref{eq: prop e_t 3} into \eqref{eq: prop e_t 2} and take the average of the both sides over $i = 1,\cdots,n$. This implies that
\begin{align}
    e_{t}
    &\leq
    16 \eta_1^2 (\tau - 1) L_1^2 \sum_{l=t_c+1}^{t-1} e_l 
    +
    10 \eta_1^2 (\tau - 1) \sum_{l=t_c+1}^{t-1} g_l
    +
    8 \eta_1^2 (\tau - 1)^2 \rho^2 
    +
    4 \eta_1^2 (\tau - 1) (n+1) \frac{\sigmaw}{n}.
\end{align}
In above, we used the result of Proposition \ref{prop: non-iid degree} that given Assumption \ref{assumption:bounded-degree}, bounds the gradient diversity $\frac{1}{n} \sum_{i \in [n]} \Vert{\grw f^i(\w, \bpsi^i) - \grw f(\w, \Psi)} \Vert^2 \leq \rho^2$, where $\rho^2 = 3\rho_f^2 + 6 L_{12}^2 (\epsilon_1^2 + \epsilon_2^2)$. We defer the proof this proposition to the end of this section. This concludes the proof of Proposition \ref{prop:e-t}.
\end{proof}
Having set the required intermediate steps, we resume the proof of Lemma \ref{lemma:e_t g_l bound}. According to Proposition \ref{prop:e-t}, we can write the term $e_t$ as follows
\begin{align} \label{eq:e_t}
    e_{t}
    &\leq
    C_1 \sum_{l=t_c+1}^{t-1} e_l 
    +
    C_2 \sum_{l=t_c+1}^{t-1} g_l
    +
    C_3
\end{align}
where we use the following short-hand coefficients
\begin{align} \label{eq: coefficients}
    C_1
    &\coloneqq
    16 \eta_1^2 (\tau - 1) L_1^2 \\
    C_2
    &\coloneqq
    10 \eta_1^2 (\tau - 1) \\
    C_3
    &\coloneqq
    8 \eta_1^2 (\tau - 1)^2 \rho^2 + 4 \eta_1^2 (\tau - 1) (n+1) \frac{\sigmaw}{n}.
\end{align}
We can then write this bound for every iteration in $[t_c+1 : t]$, that is
\begin{align}
    e_{t_c + 1}  
    &=
    0 \\
	e_{t_c + 2} 
	&\leq
	C_1 e_{t_c + 1} + C_2 g_{t_c + 1} + C_3 \\
	\vdots \\
	e_{t} 
	&\leq
	C_1 \left( e_{t_c + 1} + \cdots + e_{t-1} \right) + C_2 \left( g_{t_c + 1} + \cdots + g_{t-1} \right) + C_3 .
\end{align}
Summing all of the inequalities results in the following
\begin{align}
    \sum_{l=t_c+1}^{t-1} e_l
    \leq
    C_1 (\tau - 1) \sum_{l=t_c+1}^{t-1} e_l
    +
    C_2 (\tau - 1) \sum_{l=t_c+1}^{t-1} g_l
    +
    C_3 (\tau - 1).
\end{align}
We can further rearrange the terms above and write
\begin{align}
    \sum_{l=t_c+1}^{t-1} e_l
    \leq
    \frac{C_2 (\tau - 1)}{1 - C_1 (\tau - 1)} \sum_{l=t_c+1}^{t-1} g_l
    +
    \frac{C_3 (\tau - 1)}{1 - C_1 (\tau - 1)}.
\end{align}
Now, if we assume that $C_1 (\tau - 1) \leq 1/2$, then we get the following bound on $\sum_{l=t_c+1}^{t-1} e_l$
\begin{align}
    \sum_{l=t_c+1}^{t-1} e_l
    &\leq
    2 C_2 (\tau - 1) \sum_{l=t_c+1}^{t-1} g_l
    +
    2 C_3 (\tau - 1)
\end{align}
Plugging back in \eqref{eq:e_t} and using the assumption $C_1 (\tau - 1) \leq 1/2$ yields that
\begin{align}
    e_{t}
    &\leq
    C_1 \left( 2 C_2 (\tau - 1) \sum_{l=t_c+1}^{t-1} g_l
    +
    2 C_3 (\tau - 1) \right)
    +
    C_2 \sum_{l=t_c+1}^{t-1} g_l
    +
    C_3\\
    &\leq
    2 C_2 \sum_{l=t_c+1}^{t-1} g_l
    +
    2 C_3,
\end{align}
which concludes the proof of Lemma \ref{lemma:e_t g_l bound}. Lastly, we present the following proposition along with its proof which we used this result to prove Proposition \ref{prop:e-t}.
\begin{prop} \label{prop: non-iid degree}
An immediate implication of Assumptions \ref{assumption:bounded-degree} and \ref{assumption:smooth} is that for any $\w, \Psi$, the diversity of the local gradients is bounded in the following sense
\begin{align} 
    \frac{1}{n} \sum_{i \in [n]} \norm{\grw f^i(\w, \bpsi^i) - \grw f(\w, \Psi)}^2 
    &\leq
    \rho^2,
\end{align}
where we denote $\rho^2 = 3\rho_f^2 + 6 L_{12}^2 (\epsilon_1^2 + \epsilon_2^2)$.
\end{prop}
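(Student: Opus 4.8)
The plan is to reduce the general feasible case to the no-perturbation reference point $\bpsi^i_0 = (I,0)$, at which Assumption \ref{assumption:bounded-degree} directly controls the gradient diversity. Write $\Psi_0 = (\bpsi^1_0; \cdots; \bpsi^n_0)$ for the concatenation of these reference points. For each node $i$ I would insert and subtract the gradients evaluated at the reference configuration, splitting the error into three pieces:
\begin{align}
    \grw f^i(\w, \bpsi^i) - \grw f(\w, \Psi) = \big(\grw f^i(\w, \bpsi^i) - \grw f^i(\w, \bpsi^i_0)\big) + \big(\grw f^i(\w, \bpsi^i_0) - \grw f(\w, \Psi_0)\big) + \big(\grw f(\w, \Psi_0) - \grw f(\w, \Psi)\big). \nonumber
\end{align}
Applying the elementary bound $\norm{a+b+c}^2 \le 3(\norm{a}^2 + \norm{b}^2 + \norm{c}^2)$ and averaging over $i$ yields three averaged terms; the factor $3$ here is exactly the source of the coefficients in $\rho^2 = 3\rho_f^2 + 6 L_{12}^2(\epsilon_1^2+\epsilon_2^2)$.

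The middle averaged term is bounded by $3\rho_f^2$ by direct application of Assumption \ref{assumption:bounded-degree}, since $\Psi_0$ is precisely the no-perturbation configuration $\bpsi^i = (I,0)$ for which that assumption is stated. For the first term I would use the $L_{12}$-Lipschitzness of $\grw f^i(\w, \cdot)$ from Assumption \ref{assumption:smooth}, giving $\norm{\grw f^i(\w,\bpsi^i) - \grw f^i(\w,\bpsi^i_0)} \le L_{12}\norm{\bpsi^i - \bpsi^i_0}_F$. The key observation is feasibility: $\norm{\bpsi^i - \bpsi^i_0}_F^2 = \norm{\Lambda^i - I}_F^2 + \norm{\delta^i}^2 \le \epsilon_1^2 + \epsilon_2^2$, so this averaged term contributes at most $3 L_{12}^2(\epsilon_1^2 + \epsilon_2^2)$.

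For the third term, which is identical across $i$, I would either invoke Lemma \ref{Lemma: f Lipschitz} (which shows $\grw f(\w, \cdot)$ is $L_{12}/\sqrt{n}$-Lipschitz) together with $\norm{\Psi_0 - \Psi}_F^2 = \sum_j \norm{\bpsi^j_0 - \bpsi^j}_F^2 \le n(\epsilon_1^2+\epsilon_2^2)$, or equivalently write $\grw f(\w,\Psi_0) - \grw f(\w,\Psi) = \frac1n\sum_j(\grw f^j(\w,\bpsi^j_0) - \grw f^j(\w,\bpsi^j))$ and apply Jensen's inequality followed by local Lipschitzness. Either route gives a bound of $L_{12}^2(\epsilon_1^2+\epsilon_2^2)$, contributing a further $3 L_{12}^2(\epsilon_1^2+\epsilon_2^2)$. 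Adding the three contributions produces $3\rho_f^2 + 6 L_{12}^2(\epsilon_1^2+\epsilon_2^2) = \rho^2$, as claimed.

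There is no deep obstacle here; the only point to get right is that Assumption \ref{assumption:bounded-degree} is available only at the reference point $(I,0)$ and not at an arbitrary $\Psi$, so the whole argument hinges on the shift to $\Psi_0$ and on using the feasibility radii $\epsilon_1,\epsilon_2$ to absorb the displacement from the reference point through the Lipschitz gradient constants. One should also be careful that the constant $L_{12}$ (not $L_{12}/\sqrt{n}$) is the relevant one for the per-node first term, while the $1/\sqrt{n}$ scaling of the global gradient's Lipschitz constant in the third term is exactly compensated by the $n$ summands in $\norm{\Psi_0-\Psi}_F^2$, so both the first and third terms land at the same $L_{12}^2(\epsilon_1^2+\epsilon_2^2)$ scale.
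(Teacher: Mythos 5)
Your proposal is correct and follows essentially the same route as the paper: the identical three-term decomposition through the reference configuration $\bpsi^i_0=(I,0)$, the inequality $\norm{a+b+c}^2\le 3(\norm{a}^2+\norm{b}^2+\norm{c}^2)$, Assumption \ref{assumption:bounded-degree} for the middle term, and the $L_{12}$-Lipschitz gradient bounds combined with the feasibility radii $\epsilon_1,\epsilon_2$ for the other two. Your write-up is in fact more explicit than the paper's (which states the final bound without spelling out the per-term constants), and your remark about the $1/\sqrt{n}$ global Lipschitz constant canceling against the $n$ summands in $\norm{\Psi_0-\Psi}_F^2$ is exactly the bookkeeping the paper leaves implicit.
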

\begin{proof} [Proof of Proposition \ref{prop: non-iid degree}]
The proof is simply implied from Assumptions \ref{assumption:bounded-degree} and \ref{assumption:smooth} by writing
\begin{align} 
    \frac{1}{n} \sum_{i \in [n]} \norm{\grw f^i(\w, \bpsi^i) - \grw f(\w, \Psi)}^2 
    &\leq
    3 \frac{1}{n} \sum_{i \in [n]} \norm{\grw f^i(\w, \Lambda^i, \delta^i) - \grw f^i(\w, I, 0)}^2 \\
    &\quad +
    3 \frac{1}{n} \sum_{i \in [n]} \norm{\grw f^i(\w) - \grw f(\w) }^2 \\
    &\quad + 
    3 \frac{1}{n} \sum_{i \in [n]} \norm{\grw f(\w, I, 0) - \grw f(\w, \Psi)}^2 \\
    &\leq
    3\rho_f^2 + 6 L_{12}^2 (\epsilon_1^2 + \epsilon_2^2).
\end{align}
\end{proof}

\subsection{Proof of Lemma \ref{lemma:elimination}} \label{proof lemma:elimination}
\cite{haddadpour2019convergence} proves a similar claim for $\Gamma = 0$. For completeness, we provide the proof for general case when $ \Gamma \neq 0$. Let $t_c$ denote the index of the most recent communication round, i.e. $t_c = \floor*{\frac{t}{\tau}} \tau$. We can write $t = t_c + r$ where $1 \leq r \leq \tau$. Starting from $r=1$, we can write
\begin{align} 
    P_{t_c+2} 
    &\leq
    \Upsilon P_{t_c+1}
    -
    \frac{\eta_1}{2} \left( 1 - \eta_1 L \right) g_{t_c+1}  
    +
    \Gamma \\
    &\leq
    \Upsilon P_{t_c+1}
    +
    \Gamma,
\end{align}
where the last inequality holds if 
\begin{align} 
    \eta_1 L \leq 1.
\end{align}
We can continue for $r=2$ as follows
\begin{align} 
    P_{t_c+3} 
    &\leq
    \Upsilon P_{t_c+2}
    -
    \frac{\eta_1}{2} \left( 1 - \eta_1 L \right) g_{t_c+2}  
    +
    \eta_1^2 B g_{t_c+1} 
    +
    \Gamma \\
    &\stackrel{(a)}{\leq}
    \Upsilon^2 P_{t_c+1}
    -
    \frac{\eta_1}{2} \Upsilon \left( 1 - \eta_1 L - \eta_1 \frac{2B}{\Upsilon} \right) g_{t_c+1} 
    +
    \Gamma (1 + \Upsilon)\\
    &\stackrel{(b)}{\leq}
    \Upsilon^2 P_{t_c+1}
    +
    \Gamma (1 + \Upsilon) 
\end{align}
where $(a)$ is due to the inequality $P_{t_c+2} \leq \Upsilon P_{t_c+1} - \frac{\eta_1}{2} ( 1 - \eta_1 L ) g_{t_c+1} + \Gamma$ and $(b)$ holds if 
\begin{align} 
    1 - \eta_1 L - \eta_1 \frac{2B}{\Upsilon} \geq 0,
\end{align}
or equivalently
\begin{align} 
    \eta_1 \left( L + \frac{2B}{\Upsilon} \right) \leq 1.
\end{align}
We can continue the same argument up to $r+1$ and write
\begin{align} \label{eq: lemma P convergence 3}
    P_{t_c+r+1} 
    &\leq
    \Upsilon^{r} P_{t_c+1}
    +
    \Gamma (1 + \Upsilon + \cdots + \Upsilon^{r-1}),
\end{align}
if the step-size is as small as follows
\begin{align} 
    \eta_1 \left( L + \frac{2B}{\Upsilon^{r-1}} \left( 1 + \Upsilon + \cdots + \Upsilon^{r-2} \right) \right) \leq 1.
\end{align}
Since  $1 + \Upsilon + \cdots + \Upsilon^{r-2} \leq \frac{1}{1 - \Upsilon}$, then the following condition implies all the previous ones on $\eta$
\begin{align}  \label{eq: lemma P convergence 2}
    \eta_1 \left( L + \frac{2B}{\Upsilon^{r-1} (1 - \Upsilon)} \right).
\end{align}
Moreover, since $\Upsilon < 1$, then the strongest condition on $\eta$ is \eqref{eq: lemma P convergence 2} when we put the largest possible value for $r$ which is $\tau$, yielding 
\begin{align} 
    \eta_1 \left( L + \frac{2B}{\Upsilon^{\tau-1} (1 - \Upsilon)} \right).
\end{align}
Lastly, we note that $1 + \Upsilon + \cdots + \Upsilon^{r-1} \leq \frac{1}{1 - \Upsilon}$ in \eqref{eq: lemma P convergence 3}, and the claim is concluded.

\subsection{Proof of Lemma \ref{lemma: sum b_t}} \label{proof lemma: sum b_t}
Recall the result of Lemma \ref{Lemma: b_t contraction} in which we showed that if $\eta_2 \leq 1/L_2$, then the following contraction bound on the sequence $\{b_t\}_{t \geq 0}$ holds:
\begin{align} \label{eq: lemma sum b_t 1}
    b_{t+1} 
    &\leq
    (1 - \mu_2 \eta_2 n) \left( 1 + \eta_1 \frac{4 L_{12}^2}{\mu_2 n} \right) b_t
    +
    \frac{\eta_1}{2} \E \norm{\gr \Phi(\wbar_{t})}^2 
    +
    \frac{\eta_1^2}{2} \left( L_1 + L_{\Phi} + 2 \eta_2 L_{21}^2 \right) g_t \\
    &\quad +
    \left( \eta_1 L_1^2 + \eta_2 L_{21}^2 \right) e_t 
    +
    \frac{\eta_1^2}{2} \left( L_1 + L_{\Phi} + 2 \eta_2 L_{21}^2 \right) \frac{\sigmaw}{n} 
    +
    \frac{\eta_2^2}{2} L_2 \sigmapsi,
\end{align}
and consider the coefficient of $b_t$ in above. A simple calculation yields that if the step-sizes satisfy the condition $\frac{\eta_2}{\eta_1} \geq \frac{8 L_{12}^2}{\mu_2^2 n^2}$, then we have 
\begin{align} 
    (1 - \mu_2 \eta_2 n) \left( 1 + \eta_1 \frac{4 L_{12}^2}{\mu_2 n} \right)
    \leq
    1 - \frac{1}{2} \mu_2 \eta_2 n.
\end{align}
Now, we denote $\gamma = 1 - \frac{1}{2} \mu_2 \eta_2 n$ and apply \eqref{eq: lemma sum b_t 1} to all iterations $t = 0, \cdots, T-1$, which yields that
\begin{align}
    b_{0}
    & \leq
    \frac{2 L_{2}^2}{\mu_2 n} \left(\epsilon_1^2 + \epsilon_2^2 \right), \\
    b_{1} 
    &\leq
    \gamma b_0
    +
    \frac{\eta_1}{2} \E \norm{\gr \Phi(\wbar_{t})}^2 
    +
    \frac{\eta_1^2}{2} \left( L_1 + L_{\Phi} + 2 \eta_2 L_{21}^2 \right) g_0
    +
    \left( \eta_1 L_1^2 + \eta_2 L_{21}^2 \right) e_0 \\
    &\quad +
    \frac{\eta_1^2}{2} \left( L_1 + L_{\Phi} + 2 \eta_2 L_{21}^2 \right) \frac{\sigmaw}{n} 
    +
    \frac{\eta_2^2}{2} L_2 \sigmapsi, \\
    &\vdots\\
    b_{T-1} 
    &\leq
    \gamma b_{T-2}
    +
    \frac{\eta_1}{2} \E \norm{\gr \Phi(\wbar_{t})}^2 
    +
    \frac{\eta_1^2}{2} \left( L_1 + L_{\Phi} + 2 \eta_2 L_{21}^2 \right) g_{T-2}
    +
    \left( \eta_1 L_1^2 + \eta_2 L_{21}^2 \right) e_{T-2} \\
    &\quad +
    \frac{\eta_1^2}{2} \left( L_1 + L_{\Phi} + 2 \eta_2 L_{21}^2 \right) \frac{\sigmaw}{n} 
    +
    \frac{\eta_2^2}{2} L_2 \sigmapsi.
\end{align}
Taking the average of the $T$ inequalities above yields that
\begin{align} \label{eq: lemma sum b_t 2}
    (1 - \gamma) \frac{1}{T} \sum_{t=0}^{T-1} b_{t}
    &\leq
    \frac{2 L_{2}^2}{\mu_2 n} \frac{\epsilon_1^2 + \epsilon_2^2}{T}
    +
    \frac{\eta_1}{2} \frac{1}{T} \sum_{t=0}^{T-1} \E \norm{\gr \Phi(\wbar_{t})}^2 \\
    &\quad +
    \frac{\eta_1^2}{2} \left( L_1 + L_{\Phi} + 2 \eta_2 L_{21}^2 \right) \frac{1}{T} \sum_{t=0}^{T-1} g_{t}
    +
    \left( \eta_1 L_1^2 + \eta_2 L_{21}^2 \right) \frac{1}{T} \sum_{t=0}^{T-1} e_{t} \\
    &\quad +
    \frac{\eta_1^2}{2} \left( L_1 + L_{\Phi} + 2 \eta_2 L_{21}^2 \right) \frac{\sigmaw}{n} 
    +
    \frac{\eta_2^2}{2} L_2 \sigmapsi.
\end{align}
We can further divide both sides of \eqref{eq: lemma sum b_t 2} by $1 - \gamma$ and conclude
\begin{align} 
    \frac{1}{T} \sum_{t=0}^{T-1} b_{t}
    &\leq
    \frac{4 L_{2}^2}{\mu_2^2 n^2} \frac{\epsilon_1^2 + \epsilon_2^2}{\eta_2 T}
    +
    \frac{\eta_1}{\eta_2} \frac{1}{\mu_2 n} \frac{1}{T} \sum_{t=0}^{T-1} \E \norm{\gr \Phi(\wbar_{t})}^2 \\
    &\quad +
    \frac{\eta_1^2}{\eta_2} \frac{1}{\mu_2 n} \left( L_1 + L_{\Phi} + 2 \eta_2 L_{21}^2 \right) \frac{1}{T} \sum_{t=0}^{T-1} g_{t}
    +
    \frac{1}{\eta_2} \frac{2}{\mu_2 n} \left( \eta_1 L_1^2 + \eta_2 L_{21}^2 \right) \frac{1}{T} \sum_{t=0}^{T-1} e_{t} \\
    &\quad +
    \frac{\eta_1^2}{\eta_2} \frac{1}{\mu_2 n} \left( L_1 + L_{\Phi} + 2 \eta_2 L_{21}^2 \right) \frac{\sigmaw}{n} 
    +
    \eta_2 \frac{L_2}{\mu_2 n} \sigmapsi.
\end{align}

\subsection{Proof of Lemma \ref{lemma: sum e_t}} \label{proof lemma: sum e_t}

We begin by noting the result of Proposition \ref{prop:e-t} in which we showed the following bound on $e_t$
\begin{align} \label{eq:e_t}
    e_{t}
    &\leq
    C_1 \sum_{l=t_c+1}^{t-1} e_l 
    +
    C_2 \sum_{l=t_c+1}^{t-1} g_l
    +
    C_3,
\end{align}
where we defined the coefficients $C_1, C_2, C_3$ in \eqref{eq: coefficients} and recall here for more convenient:
\begin{align} 
    C_1
    &\coloneqq
    16 \eta_1^2 (\tau - 1) L_1^2 \\
    C_2
    &\coloneqq
    10 \eta_1^2 (\tau - 1) \\
    C_3
    &\coloneqq
    8 \eta_1^2 (\tau - 1)^2 \rho^2 + 4 \eta_1^2 (\tau - 1) (n+1) \frac{\sigmaw}{n}.
\end{align}
Next, we apply this bound to each iteration $t=0, \cdots, T-1$ as follows
\begin{align}
    &\quad e_0 = 0 \\
    &\left\{
	\begin{array}{ll}
		e_{1}  &= 0 \\
		e_{2} &\leq C_1 e_1 + C_2 g_1 + C_3 \\
		\vdots \\
		e_{\tau} &\leq C_1 \left( e_1 + \cdots + e_{\tau-1} \right) + C_2 \left( g_1 + \cdots + g_{\tau-1} \right) + C_3 
	\end{array}
    \right.\\
    &\left\{
	\begin{array}{ll}
		e_{\tau+1}  &= 0 \\
		e_{\tau+2} &\leq C_1 e_{\tau+1} + C_2 g_{\tau+1} + C_3 \\
		\vdots \\
		e_{2\tau} &\leq C_1 \left( e_{\tau+1} + \cdots + e_{2\tau-1} \right) + C_2 \left( g_{\tau+1} + \cdots + g_{2\tau-1} \right) + C_3  
	\end{array}
    \right.\\
    &\quad\quad \vdots\\
    &\left\{
	\begin{array}{ll}
		e_{T_c + 1}  &= 0 \\
		e_{T_c + 2} &\leq C_1 e_{T_c + 1} + C_2 g_{T_c + 1} + C_3 \\
		\vdots \\
		e_{T-1} &\leq C_1 \left( e_{T_c + 1} + \cdots + e_{T-2} \right) + C_2 \left( g_{T_c + 1} + \cdots + g_{T-2} \right) + C_3,
	\end{array}
    \right.
\end{align}
where $T_c = \floor*{\frac{T}{\tau}} \tau$ denote the index of the most recent communication between the workers and the server before iteration $T$. Summing the above inequalities yields that
\begin{align} \label{eq: lemma sum e_t 1}
    \sum_{t=0}^{T-1} e_{t}
    &\leq
    C_1 (\tau - 1) \sum_{t=0}^{T-1} e_{t} 
    +
    C_2 (\tau - 1) \sum_{t=0}^{T-1} g_t
    +
    C_3 T.
\end{align}
Now if we assume that $C_1 (\tau - 1) = 16 \eta_1^2 (\tau - 1)^2 L_1^2 \leq \frac{1}{2}$, the the claim is concluded by rearranging the terms in \eqref{eq: lemma sum e_t 1}: 
\begin{align}
    \frac{1}{T} \sum_{t=0}^{T-1} e_{t}
    &\leq
    2 C_2 (\tau - 1) \frac{1}{T} \sum_{t=0}^{T-1} g_t
    +
    2 C_3.
\end{align}

\section{Proof of Theorem \ref{Thm: generalization}} \label{appendix: proof of Thm generalization}

Fix a distribution $\tilde{P}$ and consider 
\begin{equation}
    \max_{\Lambda,\delta}\: \mathbb{E}_{\tilde{P}}[\ell(f_{\w}(\Lambda \mathbf{x}+\delta))] - \lambda \Vert\delta \Vert^2_2 - \lambda \Vert \Lambda - I\Vert^2_F   
\end{equation}
Assuming a $1$-Lipschitz loss $\ell$ with $1$-Lipschitz gradient, based on \cite{farnia2018generalizable}'s Lemma 7 the above function's gradient with respect to $\delta$ has a Lipschitz constant bounded by 
\begin{equation*}
    \operatorname{Lip}(\nabla f_{\w}):=\bigl(\prod_{i=1}^L \Vert\w_i\Vert_\sigma\bigr)\sum_{i=1}^l\prod_{j=1}^i\Vert \w_j\Vert_\sigma.
\end{equation*}
Similarly, the expected loss's derivative with respect to $\Lambda$ will also be Lipschitz in the spectral norm with a Lipschitz constant upper-bounded by  
\begin{equation*}
    B\operatorname{Lip}(\nabla f_{\w})=B\bigl(\prod_{i=1}^L \Vert\w_i\Vert_\sigma\bigr)\sum_{i=1}^l\prod_{j=1}^i\Vert \w_j\Vert_\sigma.
\end{equation*}
Given weights in $\w$, we denote the optimal solution for $\delta$ and $\Lambda$ by $\delta_{\w}$ and $\Lambda_{\w}$, respectively. To apply the Pac-Bayes generalization analysis, we need to bound the change in $\delta_{\w},\Lambda_{\w}$ caused by perturbing $\w$ to $\w + \bm{u}$. Note that since $\lambda>(1+B)\operatorname{Lip}(\nabla f_{\w})$, the maximization problem for optimizing $\Lambda_{\w}, \delta_{\w}$ is maximizing a strongly-concave objective whose solutions will satisfy:
\begin{align*}
    \delta_{\w} &=\frac{1}{\lambda} \mathbb{E}[\nabla\ell\circ f_{\w}(\Lambda_{\w}\mathbf{x}+ \delta_{\w})], \\
    {\Lambda}_{\w}-I &=\frac{1}{\lambda} \mathbb{E}[(\nabla\ell\circ f_{\w}(\Lambda_{\w}\mathbf{x}+ \delta_{\w}))\mathbf{X}^\top]
\end{align*}
which are norm-bounded by $\frac{\operatorname{Lip}(\ell \circ f_{\w})}{\lambda}\le \frac{\prod_{i=1}^d \Vert \w_i \Vert_\sigma }{\lambda}$ and $B\frac{\operatorname{Lip}(\ell \circ f_{\w})}{\lambda}\le B\frac{\prod_{i=1}^d \Vert \w_i \Vert_\sigma }{\lambda}$, respectively. Therefore, for a norm-bounded perturbation $\bm{u}$ where $\Vert\bm{u}_i\Vert_\sigma\le \frac{1}{L}\Vert\w_i\Vert_\sigma$ we can write
\begin{align*}
    &\big\Vert {\delta}_{\w + \bm{u}}- {\delta}_{\w} \big\Vert_2 + \big\Vert\Lambda_{\w + \bm{u}}- \Lambda_{\w} \big\Vert_\sigma  \\
    =\, & \big\Vert \frac{1}{\lambda}\mathbb{E}[\nabla \ell(f_{\w + \bm{u}}(\Lambda_{\w + \bm{u}}\mathbf{X} +\delta_{\w + \bm{u}} ))] - \frac{1}{\lambda}\mathbb{E}[\nabla \ell(f_{\w}(\Lambda_{\w}\mathbf{X} +\delta_{\w} ))] \big\Vert_2 \\
    &\quad + \big\Vert \frac{1}{\lambda}\mathbb{E}[\nabla \ell(f_{\w + \bm{u}}(\Lambda_{\w + \bm{u}}\mathbf{X} +\delta_{\w + \bm{u}} ))\mathbf{X}^\top] - \frac{1}{\lambda}\mathbb{E}[\nabla \ell(f_{\w}(\Lambda_{\w}\mathbf{X} +\delta_{\w} ))\mathbf{X}^\top] \big\Vert_\sigma \\
    =\, & \big\Vert \frac{1}{\lambda}\mathbb{E}[\nabla \ell(f_{\w + \bm{u}}(\Lambda_{\w + \bm{u}}\mathbf{X} +\delta_{\w + \bm{u}} )) - \nabla \ell(f_{\w}(\Lambda_{\w}\mathbf{X} +\delta_{\w} ))] \big\Vert_2 \\
    &\quad + \big\Vert \frac{1}{\lambda}\mathbb{E}[(\nabla \ell(f_{\w + \bm{u}}(\Lambda_{\w + \bm{u}}\mathbf{X} +\delta_{\w + \bm{u}} )) - \nabla \ell(f_{\w}(\Lambda_{\w}\mathbf{X} +\delta_{\w} )))\mathbf{X}^\top] \big\Vert_\sigma  \\
    \le \, & \big\Vert \frac{1}{\lambda}\mathbb{E}[\nabla \ell(f_{\w + \bm{u}}(\Lambda_{\w + \bm{u}}\mathbf{X} +\delta_{\w + \bm{u}} )) - \nabla \ell(f_{\w}(\Lambda_{\w + \bm{u}}\mathbf{X} +\delta_{\w + \bm{u}} ))] \big\Vert_2 \\
    & \quad + \big\Vert \frac{1}{\lambda}\mathbb{E}[\nabla \ell(f_{\w}(\Lambda_{\w + \bm{u}}\mathbf{X} +\delta_{\w + \bm{u}} )) - \nabla \ell(f_{\w}(\Lambda_{\w}\mathbf{X} +\delta_{\w + \bm{u}} ))] \big\Vert_2 \\
    & \quad + \big\Vert \frac{1}{\lambda}\mathbb{E}[\nabla \ell(f_{\w}(\Lambda_{\w}\mathbf{X} +\delta_{\w + \bm{u}} )) - \nabla \ell(f_{\w}(\Lambda_{\w}\mathbf{X} +\delta_{\w} ))] \big\Vert_2 \\
    & \quad + \big\Vert \frac{1}{\lambda}\mathbb{E}[(\nabla \ell(f_{\w + \bm{u}}(\Lambda_{\w + \bm{u}}\mathbf{X} +\delta_{\w + \bm{u}} )) - \nabla \ell(f_{\w}(\Lambda_{\w + \bm{u}}\mathbf{X} +\delta_{\w + \bm{u}} )))\mathbf{X}^\top] \big\Vert_\sigma \\
    & \quad + \big\Vert \frac{1}{\lambda}\mathbb{E}[(\nabla \ell(f_{\w}(\Lambda_{\w + \bm{u}}\mathbf{X} +\delta_{\w + \bm{u}} )) - \nabla \ell(f_{\w}(\Lambda_{\w}\mathbf{X} +\delta_{\w + \bm{u}} )))\mathbf{X}^\top] \big\Vert_\sigma \\
    & \quad + \big\Vert \frac{1}{\lambda}\mathbb{E}[(\nabla \ell(f_{\w}(\Lambda_{\w}\mathbf{X} +\delta_{\w + \bm{u}} )) - \nabla \ell(f_{\w}(\Lambda_{\w}\mathbf{X} +\delta_{\w} )))\mathbf{X}^\top] \big\Vert_\sigma \\
    \le \, & \frac{(B+1)\operatorname{lip}(\ell\circ f_{\w})}{\lambda}\bigl( \Vert {\delta}_{\w + \bm{u}}- {\delta}_{\w}\Vert_2 + \Vert\Lambda_{\w + \bm{u}}- \Lambda_{\w} \Vert_\sigma \bigr) \\
    & \quad + (B+1)e^2(\prod_{i=1}^L\Vert\w_i \Vert_\sigma)\sum_{i=1}^d\biggl[ \frac{\Vert \bm{u}_i\Vert_\sigma}{\Vert \w_i\Vert_\sigma} + B(\prod_{j=1}^i\Vert\w_j \Vert_\sigma)\sum_{j=1}^i\frac{\Vert\bm{u}_j\Vert_\sigma}{\Vert\w_j\Vert_\sigma}\biggr],
\end{align*}
where the last inequality follows from Lemma 3 in \cite{farnia2018generalizable}. As a result, 
\begin{align*}
     &\big\Vert {\delta}_{\w + \bm{u}} - {\delta}_{\w} \big\Vert_2 + \big\Vert\Lambda_{\w + \bm{u}}- \Lambda_{\w} \big\Vert_\sigma  \\
     \le \, &\frac{\lambda}{\lambda-(B+1)\operatorname{lip}(\ell\circ f_{\w})} \biggl[ (B+1)e^2(\prod_{i=1}^L\Vert\w_i \Vert_\sigma)\sum_{i=1}^d\bigl[ \frac{\Vert \bm{u}_i\Vert_\sigma}{\Vert \w_i\Vert_\sigma} + B(\prod_{j=1}^i\Vert\w_j \Vert_\sigma)\sum_{j=1}^i\frac{\Vert\bm{u}_j\Vert_\sigma}{\Vert\w_j\Vert_\sigma}\bigr]\biggr].
\end{align*}

Then, we can bound the change in the loss function caused by perturbing $\w$ at any $\Vert\mathbf{x}\Vert_2\le B$ with any norm-bounded $\Vert\bm{u}_i\Vert_\sigma\le\frac{1}{L}\Vert\w_i\Vert_\sigma$:
\begin{align*}
    &\big\Vert f_{\w + \bm{u}}(\Lambda_{\w + \bm{u}}\mathbf{X} +\delta_{\w + \bm{u}} ) - f_{\w}(\Lambda_{\w}\mathbf{X} +\delta_{\w} )\big\Vert_2 \\
    \le \, & \big\Vert f_{\w + \bm{u}}(\Lambda_{\w + \bm{u}}\mathbf{X} +\delta_{\w + \bm{u}} ) - f_{\w}(\Lambda_{\w + \bm{u}}\mathbf{X} +\delta_{\w + \bm{u}} )\big\Vert_2 \\
    &\, + \big\Vert f_{\w}(\Lambda_{\w + \bm{u}}\mathbf{X} +\delta_{\w + \bm{u}} ) - f_{\w}(\Lambda_{\w}\mathbf{X} +\delta_{\w + \bm{u}} )\big\Vert_2 \\
    &\, + \big\Vert f_{\w}(\Lambda_{\w}\mathbf{X} +\delta_{\w + \bm{u}} ) - f_{\w}(\Lambda_{\w}\mathbf{X} +\delta_{\w} )\big\Vert_2 \\
    \le \, & eB\bigl(\prod_{i=1}^L \Vert \w_i \Vert_\sigma \bigr)\sum_{i=1}^L\frac{\Vert \bm{u}_i\Vert_2}{\Vert \w_i\Vert_2} + (1+B)\bigl(\prod_{i=1}^d \Vert \w_i \Vert_\sigma \bigr) \\
    &\, \frac{e^2}{\lambda - (B+1)\operatorname{Lip}(\nabla f_{\w})}\sum_{i=1}^L \bigl[\frac{\Vert \bm{u}_i\Vert_\sigma}{\Vert \w_i\Vert_\sigma} +B(\prod_{j=1}^i \Vert \w_j \Vert_\sigma)\sum_{j=1}^i\frac{\Vert \bm{u}_j\Vert_\sigma}{\Vert \w_j\Vert_\sigma} \bigr]. 
\end{align*}
Now, for a fixed weight vector $\tilde{\w}$ we consider a multivariate Gaussian distribution $Q$ with zero-mean and diagonal covaraince matrix for perturbation $\bm{u}$ where each entry $\bm{u}_i$ has standard deviation $\kappa_i=\frac{\Vert \tilde{\w}_i\Vert_\sigma}{\sqrt[L]{\prod_{i=1}^L\Vert \tilde{\w}_i\Vert_\sigma}}\kappa$ with $\kappa$ chosen as
\begin{equation}
    \kappa = \frac{\gamma}{8e^5L\sqrt{2d\log(4dL)}B\bigl(\prod_{i=1}^L\Vert \tilde{\w}_i\Vert_\sigma\bigr)\bigl(1+\frac{\lambda}{\lambda-(1+B)\overline{\operatorname{Lip}}(\nabla  f_{\w})}\sum_{i=1}^L\prod_{j=1}^i\Vert\tilde{\w}_j\Vert_\sigma\bigr)}.
\end{equation}
Also, for any $\w$ which satisfies $|\Vert\w_i\Vert_\sigma - \Vert\tilde{\w}_i\Vert_\sigma |\le \frac{\eta}{4L}\Vert\tilde{\w}_i\Vert_\sigma$, we have $\overline{\operatorname{Lip}}(\ell\circ f_{\w})\le e^{\eta/2}\lambda(1-\eta)\le (1-\eta/2)\lambda$. Therefore,
\begin{align*}
    &\operatorname{KL}(P_{\w + \bm{u}}\Vert Q)\\
    \le\, & \sum_{i=1}^d\frac{\Vert\w_i\Vert^2_F}{2\kappa^2_i} \\
    \le \, & O\biggl( L^2B^2 d\log(dL)\frac{(\prod_{i=1}^L\Vert \tilde{\w}_i\Vert^2_\sigma)\bigl(1+\frac{1}{\lambda-(1+B)\overline{\operatorname{Lip}}(\nabla  f_{\w})}\sum_{i=1}^L\prod_{j=1}^i \Vert \tilde{\w}_j\Vert_\sigma \bigr)^2}{\gamma^2} \sum_{i=1}^d\frac{\Vert\w_i\Vert^2_F}{\Vert\tilde{\w}_i\Vert^2_\sigma}\biggr) \\
    \le \, & O\biggl( L^2B^2 d\log(dL)\frac{(\prod_{i=1}^L\Vert \w_i\Vert^2_\sigma)\bigl(1+\frac{1}{\lambda-(1+B)\overline{\operatorname{Lip}}(\nabla  f_{\w})}\sum_{i=1}^L\prod_{j=1}^i \Vert {\w}_j\Vert_\sigma \bigr)^2}{\gamma^2} \sum_{i=1}^d\frac{\Vert\w_i\Vert^2_F}{\Vert{\w}_i\Vert^2_\sigma}\biggr)
\end{align*}
Now we plug the above result into \cite{farnia2018generalizable}'s Lemma 1, implying that given a fixed underlying distribution $P$ and any $\xi>0$ with probability at least $1-\xi$ for any $\w$ satisfying $|\Vert\w_i\Vert_\sigma - \Vert\tilde{\w}_i\Vert_\sigma |\le \frac{\eta}{4L}\Vert \tilde{\w}_i\Vert_\sigma$ we have
\begin{equation}
    \mathcal{L}^{\operatorname{adv}}_{0-1}(\w) - \hat{\mathcal{L}}^{\operatorname{adv}}_{{\gamma}}(\w)
    \le
    \mathcal{O} \left( \sqrt{ \frac{ B^2L^2d\log(Ld)\lambda^2\bigl(\prod_{i=1}^L\Vert \w_i\Vert_{\sigma} \sum_{i=1}^L\frac{\Vert \w_i\Vert^2_F}{\Vert \w_i\Vert^2_{\sigma}}\bigr)^2+\log\frac{m}{\xi}  }{m \gamma^2 (\lambda-(1+B)\operatorname{Lip}(\nabla f_{\w}))^2} }\right).   
\end{equation}
Now we use a cover of size $O(\frac{L}{\eta}\log M)$ points where for any feasible $\Vert \w_i\Vert_\sigma$ we can find a point $a_i$ in the cover such that $|\Vert \w_i\Vert_\sigma- a_i|\le\frac{\eta}{4L}a_i$. As a result, we can cover the space of feasible $\w_i$'s with $O\bigl((\frac{L}{\eta}\log M))^L L \bigr)$ number of points. This proves that for a fixed underlying distribution for every $\xi>0$, with probability at least $\xi>0$ for any feasible norm-bounded $\w$ we have
\begin{equation}
    \mathcal{L}^{\operatorname{adv}}_{0-1}(\w) - \hat{\mathcal{L}}^{\operatorname{adv}}_{{\gamma}}(\w)
    \le
    \mathcal{O} \left( \sqrt{ \frac{ B^2L^2d\log(Ld)\lambda^2\bigl(\prod_{i=1}^L\Vert \w_i\Vert_{\sigma} \sum_{i=1}^L\frac{\Vert \w_i\Vert^2_2}{\Vert \w_i\Vert^2_{\sigma}}\bigr)^2+L\log\frac{mL\log(M)}{\eta\xi}  }{m \gamma^2 (\lambda-(1+B)\operatorname{Lip}(\nabla \ell \circ f_{\w}))^2} }\right).   
\end{equation}
To apply the result to the network of $n$ nodes, we apply a union bound to have the bound hold simultaneously for the distribution of every node, which proves for every $\xi>0$ with probability at least $1-\xi$ the average worst-case loss of the nodes satisfies the following margin-based bound: 
\begin{equation}
    \mathcal{L}^{\operatorname{adv}}_{0-1}(\w) - \hat{\mathcal{L}}^{\operatorname{adv}}_{{\gamma}}(\w)
    \le
    \mathcal{O} \left( \sqrt{ \frac{ B^2L^2d\log(Ld)\lambda^2\bigl(\prod_{i=1}^L\Vert \w_i\Vert_{\sigma} \sum_{i=1}^L\frac{\Vert \w_i\Vert^2_F}{\Vert \w_i\Vert^2_{\sigma}}\bigr)^2+L\log\frac{nmL\log(M)}{\eta\xi}  }{m \gamma^2 (\lambda-(1+B)\operatorname{Lip}(\nabla f_{\w}))^2} }\right).   
\end{equation}
Therefore, the proof is complete.

\section{Proof of Theorem \ref{Thm: DRO}} \label{appendix: proof of Thm DRO}
Define random vector $\mathbf{U}=\Lambda\mathbf{X}+\delta$. According to the definition of optimal transport cost $W_c(P_\mathbf{X},P_\mathbf{U})$ for quadratic $c(\mathbf{x},\mathbf{u})=\frac{1}{2}\Vert\mathbf{x}-\mathbf{u} \Vert^2_2$, 
\begin{equation}
    W_c(P_\mathbf{X},P_\mathbf{U}) := \min_{P_{\mathbf{X,U}}\in \Pi(P_\mathbf{X},P_\mathbf{U})}\: \mathbb{E}\bigl[ \frac{1}{2}\Vert\mathbf{X}-\mathbf{U} \Vert^2_2 \bigr]
\end{equation}
where $\Pi(P_\mathbf{X},P_\mathbf{U})$ contains any joint distribution $P_{\mathbf{X,U}}$ with marginals $P_\mathbf{X},P_\mathbf{U}$. One distribution in $\Pi(P_\mathbf{X},P_\mathbf{U})$ is the joint distribution of $(\mathbf{X},\Lambda\mathbf{X}+\delta)$ implying that
\begin{align*}
    W_c(P_\mathbf{X},P_\mathbf{U}) &\le \frac{1}{2}\mathbb{E}\bigl[\Vert \mathbf{X}-\Lambda\mathbf{X}-\delta\Vert_2^2\bigr] \\
    &= \frac{1}{2}\mathbb{E}\bigl[\Vert (I-\Lambda)\mathbf{X}-\delta\Vert_2^2\bigr] \\
    &\stackrel{(a)}{\le} \mathbb{E}\bigl[\Vert(I-\Lambda)\mathbf{X}\Vert^2_2\bigr] + \Vert\delta\Vert_2^2 \\
    &\stackrel{(b)}{\le} \operatorname{Tr}\bigl((I-\Lambda)(I-\Lambda)^\top\mathbb{E}[\mathbf{X}\mathbf{X}^\top]\bigr) + \Vert\delta\Vert_2^2 \\
    &\stackrel{(c)}{\le} \lambda \operatorname{Tr}\bigl((I-\Lambda)(I-\Lambda)^\top\bigr) + \Vert\delta\Vert_2^2 \\
    &\stackrel{(d)}{\le} \lambda \Vert I-\Lambda\Vert^2_F+ \Vert\delta\Vert_2^2 \\
    &\le \max\{\lambda,1\}\bigr( \Vert I-\Lambda\Vert^2_F+ \Vert\delta\Vert_2^2\bigl).
\end{align*}
In the above, $(a)$ holds since for every two vectors $\mathbf{u}_1,\mathbf{u}_2$ we have $\Vert \mathbf{u}_1 + \mathbf{u}_2\Vert^2_2 = \Vert \mathbf{u}_1\Vert^2_2 + \Vert\mathbf{u}_2\Vert^2_2 + 2\mathbf{u}^\top_1 \mathbf{u}_2 \le 2(\Vert \mathbf{u}_1\Vert^2_2 + \Vert\mathbf{u}_2\Vert^2_2)$. $(b)$ follows from the fact that $\mathbb{E}[\Vert(I-\Lambda)\mathbf{X}\Vert^2_2] = \mathbb{E}[\operatorname{Tr}((I-\Lambda)\mathbf{X}\mathbf{X}^\top(I-\Lambda)^\top) = \operatorname{Tr}\bigl((I-\Lambda)(I-\Lambda)^\top\mathbb{E}[\mathbf{X}\mathbf{X}^\top]\bigr)$. $(c)$ holds because of the theorem's assumption implying that $\mathbb{E}[\mathbf{X}\mathbf{X}^\top]\le \lambda I$. Last, $(d)$ holds because we have $\operatorname{Tr} (AA^\top) = \Vert A\Vert^2_F$ for every $A$. Therefore, the proof is complete.
\end{appendices}

\end{document}